\newcommand{\citep}[1]{\cite{#1}}
\newcommand{\citet}[1]{\cite{#1}}
\definecolor{b2}{RGB}{51,153,255}
\definecolor{mygreen}{RGB}{80,180,0}
\theoremstyle{plain}
\newtheorem{theorem}{Theorem}[section]
\newtheorem{lemma}[theorem]{Lemma}
\newtheorem{proposition}[theorem]{Proposition}
\theoremstyle{definition}
\newtheorem{definition}[theorem]{Definition}
\newtheorem{assumption}[theorem]{Assumption}
\theoremstyle{remark}
\newtheorem{remark}[theorem]{Remark}
\newcommand{\wh}{\widehat}
\newcommand{\ov}{\overline}
\newcommand{\eps}{\varepsilon}
\renewcommand{\epsilon}{\varepsilon}
\renewcommand{\phi}{\varphi}
\newcommand{\calM}{\mathcal{M}}
\newcommand{\R}{\mathbb{R}}
\newcommand{\F}{\mathcal{F}}
\newcommand{\TA}{\Tilde{A}}
\newcommand{\TV}{\mathrm{TV}}
\newcommand{\bx}{\bar{x}}
\newcommand{\by}{\bar{y}}
\newcommand{\calX}{\mathcal{X}}
\newcommand{\calZ}{\mathcal{Z}}
\newcommand{\calD}{\mathcal{D}}
\newcommand{\calP}{\mathcal{P}}
\renewcommand{\hat}{\wh}
\renewcommand{\bar}{\ov}
\renewcommand{\d}{\mathrm{d}}
\DeclareMathOperator*{\E}{\mathbb{E}}
\DeclareMathOperator*{\Var}{\mathbf{Var}}
\DeclareMathOperator{\est}{est}
\DeclareMathOperator{\sign}{sign}
\DeclareMathAlphabet{\mathpzc}{OT1}{pzc}{m}{it}
\newcommand{\ind}{\mathbf{1}}
\newcommand{\reals}{\mathbb{R}} % Real number symbol
\newcommand{\AboTh}{\mathsf{AboveThreshold}} % Above 
\newcommand{\Lap}{\mathrm{Lap}}
\newcommand{\qcon}{s^{\mathsf{conc}}}
\newcommand{\calN}{\mathcal{N}}
\newcommand{\calO}{\mathcal{O}}
\newcommand{\nSG}{\mathrm{nSG}}
\newcommand{\rs}{\mathrm{rs}}
\newcommand{\good}{\mathrm{good}}
\newcommand{\user}{\mathrm{user}}
\newcommand{\rmitem}{\mathrm{item}}
\newcommand{\bZ}{\bar{Z}}
\newcommand{\Error}{\mathrm{Error}}
\newcommand{\PAitem}{\mathcal{A}_{\epsilon,\delta}^{\mathrm{item}}}
\newcommand{\PAuser}{\mathcal{A}_{\epsilon,\delta}^{\mathrm{user}}}
\title{Linear-Time User-Level DP-SCO via Robust Statistics}
\author{
Badih Ghazi$^1$ \quad
Ravi Kumar$^1$\and
Daogao Liu$^1$\thanks{Email:\texttt{liudaogao@gmail.com}}\quad
Pasin Manurangsi$^1$ 
\\
\textsc{$^1$Google}
}
\date{\today}
\begin{document}

\maketitle

\begin{abstract}
    User-level differentially private stochastic convex optimization (DP-SCO) has garnered significant attention due to the paramount importance of safeguarding user privacy in modern large-scale machine learning applications. 
    Current methods, such as those based on differentially private stochastic gradient descent (DP-SGD), often struggle with high noise accumulation and suboptimal utility due to the need to privatize every intermediate iterate. 
    In this work, we introduce a novel linear-time algorithm that leverages robust statistics, specifically the median and trimmed mean, to overcome these challenges. 
    Our approach uniquely bounds the sensitivity of all intermediate iterates of SGD with gradient estimation based on robust statistics, thereby significantly reducing the gradient estimation noise for privacy purposes and enhancing the privacy-utility trade-off.  
    By sidestepping the repeated privatization required by previous methods, our algorithm not only achieves an improved theoretical privacy-utility trade-off but also maintains computational efficiency.
    We complement our algorithm with an information-theoretic lower bound, showing that our upper bound is optimal up to logarithmic factors and the dependence on $\epsilon$. 
    This work sets the stage for more robust and efficient privacy-preserving techniques in machine learning, with implications for future research and application in the field.
\end{abstract}

\section{Introduction}
With the rapid development and widespread applications of modern machine learning and artificial intelligence, particularly driven by advancements in large language models (LLMs), privacy concerns have come to the forefront. 
For example, recent studies have highlighted significant privacy risks associated with LLMs, including well-documented instances of training data leakage \citep{carlini2021extracting,lukas2023analyzing}. These challenges underscore the urgent need for privacy-preserving mechanisms in machine learning systems.

Differential Privacy (DP) \citep{dwork2006calibrating} has emerged as a rigorous mathematical framework for ensuring privacy and is now the gold standard for addressing privacy concerns in machine learning. The classic definition of DP, referred to as \textit{item-level DP}, guarantees that the replacement of any single training example has a negligible impact on the model’s output. Formally, a mechanism $\calM$ is said to satisfy $(\epsilon,\delta)$-item-level DP if, for any pair $\calD$ and $\calD'$ of neighboring datasets that differ by a single item, and for any event $\calO \in \mathrm{Range}(\calM)$, the following condition holds: \begin{align} \label{eq:dp_def} \Pr[\calM(\calD) \in \calO] \leq e^{\epsilon} \Pr[\calM(\calD') \in \calO] + \delta. \end{align}

Item-level DP provides a strong guarantee against the leakage of private information associated with any single element in a dataset. 
However, in many real-world applications, a single user may contribute multiple elements to the training dataset \citep{Xu2024}. 
This introduces the need to safeguard the privacy of each user as a whole, which motivates a stronger notion of \textit{user-level DP}.  This notion ensures that replacing one user (who may contribute up to $m$ items) in the dataset has only a negligible effect on the model’s output. 
Formally, \eqref{eq:dp_def} still holds when $\calD$ and $\calD'$ differ by one user, i.e., up to $m$ items in total. Notably, when $m = 1$, user-level DP reduces to item-level DP.

\paragraph{DP-SCO}
As one of the central problems in privacy-preserving machine learning and statistical learning, DP stochastic convex optimization (DP-SCO) has garnered significant attention in recent years (e.g., \cite{bst14,bftt19,FKT20,bfgt20,bgm21,shw22,gopi2022private,alt24}). 
In DP-SCO, we are provided with a dataset $\{Z_i\}_{i \in [n]}$ of users, where each user $Z_i$ contributes samples drawn from an underlying distribution $\calP$. 
The goal is to minimize the population objective function under the DP constraint: 
\begin{align*} 
F(x) := \E_{z \sim \calP} f(x; z), 
\end{align*} 
where $f(x; z): \calX \to \mathbb{R}$ is a convex function defined on the convex domain $\calX$.

DP-SCO was initially studied in the item-level setting, where significant progress has been made with numerous exciting results. 
The study of user-level DP-SCO, to our knowledge, was first initiated by \cite{levy2021learning}, where the problem was considered in Euclidean spaces, with the Lipschitz constant and the diameter of the convex domain defined under the $\ell_2$-norm.
They achieved an error rate of $O\left(\frac{1}{\sqrt{mn}} + \frac{d}{n \epsilon \sqrt{m}}\right)$ for smooth functions, along with a lower bound of $\Omega\left(\frac{1}{\sqrt{mn}} + \frac{\sqrt{d}}{n \epsilon \sqrt{m}}\right)$. 

Building on this, \cite{bassily2023user} introduced new algorithms based on DP-SGD, incorporating improved mean estimation procedures to achieve an asymptotically optimal rate of $\frac{1}{\sqrt{nm}} + \frac{\sqrt{d}}{n \sqrt{m} \epsilon}$. 
However, their approach relies on the smoothness of the loss function and imposes parameter restrictions, including $n \geq \sqrt{d}/\epsilon$ and $m \leq \max\{\sqrt{d}, n\epsilon^2/\sqrt{d}\}$. 
On the other hand, \cite{GKKM+23} observed that user-level DP-SCO has low local sensitivity to user deletions. 
Using the propose-test-release mechanism, they developed algorithms applicable even to non-smooth functions and requiring only $n \geq \log(d)/\epsilon$ users. 
However, their approach is computationally inefficient, running in super-polynomial time and achieving a sub-optimal error rate of $O\left(\frac{1}{\sqrt{nm}} + \frac{\sqrt{d}}{n \sqrt{m} \epsilon^{2.5}}\right)$.  \cite{asi2023user} proposed an algorithm that achieves optimal excess risk in polynomial time, requiring only $n \geq \log(md)/\epsilon$ users and accommodating non-smooth losses. 
However, their algorithm is also computationally expensive:  
for $\beta$-smooth losses, it requires $\beta \cdot (nm)^{3/2}$ gradient evaluations, while for non-smooth losses, it requires  $(nm)^3$ gradient evaluations.
Motivated by these inefficiencies, \cite{LLA24} focused on improving the computational cost of user-level DP-SCO while maintaining optimal excess risk. For $\beta$-smooth losses, they designed an algorithm requiring $\max\{\beta^{1/4}(nm)^{9/8}, \beta^{1/2}n^{1/4}m^{5/4}\}$ gradient evaluations;  for non-smooth losses, they achieved the same optimal excess risk using $n^{11/8}m^{5/4}$ evaluations.

Linear-time algorithms have also been explored in the user-level DP-SCO setting. \cite{bassily2023user} proposed a linear-time algorithm in the local DP model, achieving an error rate of $O\left(\sqrt{d}/\sqrt{nm}\epsilon\right)$ under the constraints $m < d/\epsilon^2$, $n > d/\epsilon^2$, and $\beta \leq \sqrt{n^3/md^3}$. Similarly, \cite{LLA24} achieved the same rate under slightly relaxed conditions, requiring $n \geq \log(ndm)/\epsilon$ and $\beta \leq \sqrt{nmd}$.

In our work, we consider user-level DP-SCO under $\ell_\infty$-norm assumptions, in contrast to most of prior results, which were established under the $\ell_2$-norm. This distinction is significant, as the $\ell_\infty$-norm provides stronger per-coordinate guarantees and hence is more desirable.  Furthermore, the $\ell_\infty$-norm enjoys properties that are crucial to our algorithmic design but do not hold in the $\ell_2$-norm  setting. These properties influence both the development of our algorithm and the corresponding theoretical guarantees. The implications of this distinction and its role in our results will be discussed in detail in the subsequent sections.

\subsection{Technical Challenges in User-Level DP-SCO}

A key challenge in solving user-level DP-SCO using DP-SGD lies in obtaining more accurate gradient estimates while maintaining privacy. 
Consider a simple scenario where we perform gradient descent for $t$ steps and seek an estimate of $\nabla F(x_t)$. 
To achieve this, we sample $B$ users to estimate $\nabla F(x_t)$ and compute $q_t(Z_i):=\frac{1}{m}\sum_{z\in Z_i}\nabla f(x_t;z)$, the average of the $m$ gradients from user $Z_i$ at point $x_t$. 
If each user's $m$ functions are i.i.d. drawn from the distribution $\mathcal{P}$, then with high probability, we know that 
$
\|q_t(Z_i) - \nabla F(x_t)\| \leq \Tilde{O}(1/\sqrt{m}).
$

This naturally leads to the following mean-estimation problem: Given points $q_t(Z_1), \cdots, q_t(Z_B)$ in the unit ball, with most of them likely to be within a distance of $1/\sqrt{m}$ from each other (under the i.i.d. assumption for utility guarantees), how can we accurately and privately estimate their mean? 

A straightforward approach to recover the item-level rate is to apply the Gaussian mechanism:
\begin{align}
\label{eq:gaussian_mechanism}
\frac{1}{B} \sum_{i \in [B]} q_t(Z_i) + \calN(0, \sigma_1^2 I_d),
\end{align}
where the noise level is set as $\sigma_1 \propto 1/B$.

\paragraph{Mean Estimation and Sensitivity Control}

To improve upon this, prior works \citep{asi2023user, LLA24} designed mean-estimation sub-procedures with the following properties:
\begin{itemize}[topsep=3pt,itemsep=-3pt]
    \item \textbf{Outlier Detection:} The procedure tests whether the number of ``bad'' users (whose gradients significantly deviate from the majority) exceeds a predefined threshold (or ``break point'').
    \item \textbf{Outlier Removal and Sensitivity Reduction:} If the number of ``bad'' users is below the threshold, the procedure removes outliers and produces an estimate $g_t$ with sensitivity $\Tilde{O}(1/B\sqrt{m})$. The privatized gradient is then:
    $
    % \label{eq:sub_procedure}
    g_t + \calN(0, \sigma_2^2 I_d),
    $
    where $\sigma_2 \propto \frac{1}{B\sqrt{m}}$.
    \item \textbf{Better Variance Control:} When all users provide consistent estimates, the output follows 
    \begin{align*}
        \frac{1}{B}\sum_{i\in[B]}q_t(Z_i)+\calN(0,\sigma_2^2I_d), \sigma_2\propto \frac{1}{B\sqrt{m}},
    \end{align*}
    resulting in significantly smaller noise compared to the naive Gaussian mechanism in~\eqref{eq:gaussian_mechanism}.
\end{itemize}

By leveraging such sub-procedures, prior works have achieved the optimal excess risk rate in polynomial time. However, extending these to obtain \textit{linear-time} algorithms poses new challenges.

\paragraph{Challenges in Linear-Time User-Level DP-SCO}

Several linear-time algorithms exist for item-level DP-SCO. \cite{zhang2022differentially} and \cite{choquetteoptimal} achieved the optimal item-level rate $O\left(\frac{1}{\sqrt{n}} + \frac{\sqrt{d}}{n\epsilon}\right)$ under the smoothness assumption $\beta = O(1)$.\footnote{One can roughly interpolate the optimal item-level rate by setting $m=1$ in user-level DP.} Their approach maintains privacy by adding noise to all intermediate iterations of DP-SGD.

\cite{FKT20} notably relaxed the smoothness requirement to $\beta \leq \sqrt{n} + \sqrt{d}/\epsilon$ by analyzing the stability of non-private SGD. They showed that for neighboring datasets, the sequence $\{x_t\}_{t \in [T]}$ and $\{x_t'\}_{t \in [T]}$ remain close, ensuring that the sensitivity of the average iterate $\frac{1}{T} \sum_{t \in [T]} x_t$ is low. This allows them to apply the Gaussian mechanism directly to privatize the average iterate.

Motivated by this stability-based analysis, \cite{LLA24} attempted to generalize the linear-time approach of \cite{FKT20} to the user-level setting. However, a key difficulty arises when incorporating the mean-estimation sub-procedure. Specifically, even if one can bound $\|x_t - x_t'\|$, where $\{x_t\}$ and $\{x_t'\}$ represent the trajectories corresponding to neighboring datasets, there is no clear understanding of how applying the sub-procedure impacts stability in subsequent iterations. In particular, after performing one gradient descent step using gradient estimations from sub-procedure, we do not have guarantees on how well $\|x_{t+1} - x'_{t+1}\|$ remains bounded.

Due to this lack of stability analysis for the sub-procedure, \cite{LLA24} resorted to privatizing all iterations, resulting  in excessive Gaussian noise accumulation. Consequently, their algorithm achieved only a suboptimal error rate of $O\left(\frac{\sqrt{d}}{\sqrt{nm} \epsilon}\right)$, highlighting the fundamental challenge of designing a linear-time user-level DP-SCO algorithm by controlling the stability of the sub-procedures.

\paragraph{Generalizing Other Linear-Time Algorithms}

The linear-time algorithms proposed in \cite{zhang2022differentially} and \cite{choquetteoptimal} for item-level DP-SCO represent promising approaches to generalize  to the user-level setting. These algorithms achieve the optimal item-level rate $O\left(\frac{1}{\sqrt{n}} + \frac{\sqrt{d}}{n\epsilon}\right)$ by privatizing all intermediate iterations of variants of DP-SGD. This approach avoids the need for additional stability analysis, as the noise added at every iteration directly ensures privacy without relying on intermediate sensitivity bounds. Generalizing such algorithms to the user-level setting may, therefore, be easier compared to other approaches, as they sidestep the stability issues associated with mean-estimation sub-procedures.

In a private communication, the authors of \cite{LLA24} indicated that it is possible to generalize the linear-time algorithm of \cite{zhang2022differentially} to the user-level setting. 
However, this generalization introduces a dependence on the smoothness parameter $\beta$, which may impose restrictive smoothness constraints on the types of functions for which the algorithm is effective. 
Despite this limitation, such a generalization represents a natural direction for extending linear-time algorithms to user-level DP-SCO.

While these developments are promising, a more challenging and interesting direction lies in generalizing stability-based analyses from the item-level setting to the user-level setting. Stability-based methods, as seen in \cite{FKT20}, rely on carefully bounding the sensitivity of the entire optimization trajectory. Extending this approach to the user-level setting requires incorporating an appropriate sub-procedure for mean estimation, tailored to handle user-level sensitivity. 
This introduces additional layers of complexity, as the interactions between the sub-procedure and the iterative optimization process must be carefully analyzed to ensure stability and privacy. Overcoming these challenges would not only advance the theoretical understanding of user-level DP-SCO but might also lead to more efficient algorithms with broader applicability.

\subsection{Our Techniques and Contributions}

In this work, we design a novel mean-estimation sub-procedure based on robust statistics, such as the median and trimmed mean, specifically tailored for user-level DP-SCO. By incorporating the sub-procedure into (non-private) SGD, we establish an upper bound on $\|x_t - x_t'\|_\infty$ for all iterations $t \in [T]$. This ensures stability throughout the optimization process.

\paragraph{Key Idea: 1-Lipschitz Property of Robust Statistics}  
In one dimension, many robust statistics, such as the median, satisfy a \textit{1-Lipschitz property}. This means that if each data point is perturbed by a distance of at most $\iota$, the robust statistic shifts by at most $\iota$. This property makes robust statistics particularly well-suited for mean estimation in user-level DP-SCO.

To see this, recall that we define  
$
q_t(Z_i) := \frac{1}{m} \sum_{z \in Z_i} \nabla f(x_t; z)
$  
as the average of the $m$ gradients from user $Z_i$ at point $x_t$. Similarly, we define  
$q_t'(Z_i) := \frac{1}{m} \sum_{z \in Z_i} \nabla f(x_t'; z)$  
for the gradients at $x_t'$.  
If $|x_t - x_t'|$ is bounded, then by the smoothness of $f$, we have  
$
|q_t(Z_i) - q_t'(Z_i)| \leq \beta |x_t - x_t'|.
$  
This implies that the robust statistic computed from $\{q_t(Z_i)\}_{i \in [B]}$ and $\{q_t'(Z_i)\}_{i \in [B]}$ remains bounded by $\beta |x_t - x_t'|$ from the 1-Lipschitz property. 
As a result, the desired stability is naturally established in the one-dimensional setting.

\paragraph{Extending Stability to High Dimensions}  
In high-dimensional settings, robust statistics that satisfy the 1-Lipschitz property are not well understood. To address this, we adopt \textit{coordinate-wise robust statistics} for gradient estimation. This approach ensures stability at each coordinate level. In turn, this allows us to establish \textit{iteration sensitivity} in the $\ell_\infty$-norm.

\paragraph{Debiasing Technique}  
While robust statistics are effective in controlling sensitivity, using them directly introduces a significant bias in gradient estimation. This bias occurs even in "good" datasets where all functions are i.i.d. from the underlying distribution. If not handled properly, the bias can dominate the utility guarantee and degrade performance.

To address this issue, we propose a novel \textit{debiasing technique}: If the mean and the robust statistic are sufficiently close, we directly use the mean; otherwise, we project the mean onto the ball centered at the robust statistic.  
Both the mean and robust statistics individually satisfy the 1-Lipschitz property. We prove that this \textit{coordinate-wise projection preserves the 1-Lipschitz property in the} $\ell_\infty$-\textit{norm}. The resulting robust mean-estimation sub-procedure ensures iteration sensitivity while remaining \textit{unbiased when the dataset is well-behaved}. This property holds with high probability when all functions are i.i.d. from the distribution.

\paragraph{Improving Robust Mean Estimation: Smoothed Concentration Test}  
To further enhance stability, we introduce a \textit{smoothed version of the concentration score} for testing whether the number of ``bad'' users exceeds a threshold (the ``break point''). Prior works relied on an indicator function:  
$ 
\mathbf{1}(\|q_t(Z_i) - q_t(Z_j)\| \leq 1/\tau),
$ 
which is non-smooth and prone to instability. We replace this with a smoother function:  
$
\exp\left(-\tau \|q_t(Z_i) - q_t(Z_j)\|\right),  
$  which allows for a more stable and robust concentration test by providing a continuous measure of closeness.

\paragraph{Main Result and Implications}  
Using our sub-procedure and sensitivity bounds, we achieve a utility rate of  
\[
\Tilde{O}\left(\frac{d}{\sqrt{nm}} + \frac{d^{3/2}}{n\sqrt{m}\epsilon^2}\right),
\]  
for smooth functions defined over an $\ell_\infty$-ball, with gradients bounded in the $\ell_1$-norm and diagonally dominant Hessians (see Section~\ref{sec:prel} for detailed assumptions). We also construct a lower bound:  
\[
\Tilde{\Omega}\left(\frac{d}{\sqrt{nm}} + \frac{d^{3/2}}{n\sqrt{m}\epsilon}\right),
\]  
using the fingerprinting lemma, showing that our upper bound is nearly optimal except for the dependence on $\epsilon$.  
We discuss this loose dependence further in Section~\ref{sec:discussion}.

These assumptions are strong in terms of the properties of the norm: the $\ell_\infty$-ball is the largest among all $\ell_p$-balls and Lipschitz continuity in the $\ell_1$-norm implies that the $\ell_\infty$-norm of the gradient is bounded, which is the weakest possible assumption on gradient norms.

\paragraph{Comparison with Prior Work}  
The best-known item-level rate for this setting (i.e., Lipschitz in the $\ell_1$-norm and optimization over an $\ell_\infty$-ball) is:  
$ 
O\left(\frac{d}{\sqrt{n}} + \frac{d^{3/2}}{n\epsilon}\right),
$  
as established in the work of \cite{asi2021private}. To our knowledge, our result is the first to extend item-level rates to the user-level setting, incorporating the dependence on $m$.

Existing user-level DP-SCO results have primarily been studied in \textit{Euclidean spaces}, where functions are assumed to be Lipschitz in the $\ell_2$-norm and optimized over an $\ell_2$-ball. Since the $\ell_2$ diameter of an $\ell_\infty$-ball is $\sqrt{d}$, applying existing linear-time algorithms to our setting yields a suboptimal rate:  
$
O\left(\frac{d^{3/2}}{\sqrt{nm}\epsilon}\right).
$ 
This suboptimal dependence on $n$ arises because existing methods privatize all intermediate steps, leading to excessive noise accumulation. However, we acknowledge that our approach requires an additional assumption on the diagonal dominance of Hessians. Despite this restriction, our techniques are well-suited to the $\ell_\infty$-ball and gradient norm setting. We provide a detailed discussion of our assumptions, limitations, and open problems in Section~\ref{sec:discussion}.

\paragraph{DP and Robustness}  
There is a rich body of work exploring the connections between DP and robustness, with robust statistics playing a central role in many DP applications \citep{dwork2009differential, slavkovic2012perturbed, liu2022differential, asi2023robustness}. However, to the best of our knowledge, the application of robust statistics in private optimization remains relatively under-explored.  
We view our work as an important step in this direction and hope it inspires further research into leveraging robust statistics for private optimization.

% We design a linear-time algorithms based on DP-SGD, that can achieve rate $O(\frac{\sqrt{d}}{mn\epsilon})$

% In this work, we achieve the first stability analysis of the outlier removal and combined it with the geometric median to design a linear-time algorithm for user-level DP-SCO.

% \begin{itemize}
%     \item One can choose the batch size in $[1,m]$ freely for our algorithm.
% \end{itemize}

% \subsection{Technical Overview}
% \Daogao{Consider moving to later sections.}

% \subsection{Other Related Work}
% Improve the running-time in item-level;

% Geometric median is used in the optimization \cite{yin2018byzantine};
% \cite{haghifam2024private} studied how to find geometric median under DP constraint.
\section{Preliminaries}
\label{sec:prel}
In user-level DP-SCO, we are given a dataset $\calD=\{Z_i\}_{i\in[n]}$ of $n$ users, where $Z_i \in \calZ^m$ is the user $i$'s data which consists of $m$ datapoints drawn i.i.d. from an (unknown) underlying distribution $\calP$.
%each user $Z_i\sim \calP^m$, and 
The objective is to minimize the following population function under user-level DP constraint:
\begin{align*}
    F(x):=\E_{z\sim \calP}f(x;z).
\end{align*}

In this section, we present the key definitions and assumptions.  
Discussions regarding the limitations of these assumptions can be found in Section~\ref{sec:discussion}. Additional tools, including  
those from differential privacy, are deferred to Appendix~\ref{subsec:prel}.

\vspace{-2mm}
\begin{definition}[Lipschitz]
\label{def:lip}
    We say a function $f:\calX\to\R$ is \emph{$G$-Lipschitz}  with respect to $\ell_p$-norm , if for any $x,y\in\calX$, we have
    $
        |f(x)-f(y)|\le G\|x-y\|_p.$
    This means $\|\nabla f(x)\|_q\le G$ for any $x \in \calX$,
    where $1/p+1/q=1$.
\end{definition}
\vspace{-5mm}
\begin{definition}[Smooth]
\label{def:smooth}
    In this work, we say a function $f$ is \emph{$\beta$-smooth}, if $\|\nabla^2 f(x)\|_\infty\le \beta,\forall x\in\calX$, where $\|A\|_\infty:=\max_i\sum_{j}|A_{i,j}|$ for a symmetric matrix $A$.
    This implies that $\|\nabla f(x)-\nabla f(y)\|_\infty\le \beta \|x-y\|_\infty$ for any $x,y\in\calX$.
\end{definition}
\vspace{-5mm}
\begin{definition}[Diagonal Dominance]
    A matrix $A\in\R^{d\times d}$ is \emph{diagonally dominant} if 
    \begin{align*}
        |A_{i,i}|\ge \sum_{j\neq i}|A_{i,j}|,& &\forall i \in [d].
    \end{align*}
\end{definition}
\vspace{-5mm}
\begin{assumption}
\label{assum:lispchitz_smooth}
    Each function $f(;z):\calX\to \R$ in the universe is convex, $G$-Lipschitz with respect to $\ell_1$-norm (Definition~\ref{def:lip}) and $\beta$-smooth (Definition~\ref{def:smooth}).
    $\calX$ is a ball of radius $D$ in $\ell_\infty$-norm.
\end{assumption}
\vspace{-5mm}
\begin{assumption}
\label{assump:dia_dominant}
    The Hessian of each function $f(\cdot;z)$ in the universe is diagonally dominant.
\end{assumption}

Diagonal dominance, although somewhat restrictive, is a commonly discussed assumption in the literature. For example, \cite{wang2021convergence}  demonstrated the convergence rate of SGD in heavy-tailed settings under  
the assumption of diagonal dominance. Similarly, \cite{das2024towards}  studied Adam’s preconditioning effect for quadratic functions with diagonally dominant Hessians.  
In the case of 1-hidden layer neural networks (a common focus of the NTK line of work), the Hessian is diagonal (see Section 3 in \cite{liu2020linearity}). Additionally, it has been shown that in  
practice, Hessians are typically block diagonal dominant  
\cite{martens2015optimizing, botev2017practical}.  
We also discuss potential ways to avoid this assumption in  
Section~\ref{sec:discussion}.

% \begin{lemma}[\cite{jin2019short}]
% \label{lm:inner_product_nSG}
% There exists an absolute constant $c$, such that if $X$ is $\nSG(\zeta)$, then for any fixed unit vector $v\in \R^d$, 
% $\langle v,X\rangle$ is $c\zeta$ norm-SubGaussian.
% \end{lemma}

\paragraph{Notation}
For $X\in\R^d$, we use $X[i]$ to denote its $i$th coordinate.
For a vector $X\in\R^d$ and a convex set $\calX\subset\R^d$, we use $\Pi_{\calX}(X):=\arg\min_{Y\in\calX}\|Y-X\|_2$.
For $X\in\R^d$ and $r\in\R_{\ge0}$, we use $B_\infty(X,r)$ to denote the $\ell_\infty$-ball centered at $X$ of radius $r$.
\section{Main Algorithm}
\label{sec:main_alg}
We present our main result in this section and explain the algorithm in a top-down manner.  The algorithm is based on the localization framework of  
\cite{FKT20}; see Algorithm~\ref{alg:loacalizatioin} in the Appendix for details. The main result is stated formally below:
\begin{theorem}
\label{thm:main_result}
Under Assumptions~\ref{assum:lispchitz_smooth} and \ref{assump:dia_dominant}, suppose $\beta\le\frac{G}{D}(\frac{\sqrt{n}\epsilon}{\sqrt{m}\log(nmd/\delta)}+\frac{\sqrt{d\log(1/\delta)\log(nmd)}}{\sqrt{m}\epsilon})$, $\epsilon\le O(1),n\ge \log^2(nd/\delta)/\epsilon$ and $ m\le n^{O(\log\log n)}$.
Setting $\eta\le\frac{D}{G}\cdot \min\{ \frac{B\sqrt{m}}{\sqrt{n}} ,   \frac{\sqrt{m}\epsilon}{\sqrt{d\log(1/\delta)\log( nmd)}}\}$, $B=100\log(mnd/\delta)/\epsilon$, $\tau=O(G\log(nmd)/\sqrt{m})$ and $\upsilon=0.9B+\frac{2\log(T/\delta)}{\epsilon}$, Algorithm~\ref{alg:loacalizatioin}  is $(\epsilon,\delta)$-user-level-DP. 
When the $nm$ functions in dataset $\calD$ are i.i.d. drawn from the underlying distribution $\calP$, it takes $mn$ gradient computations and outputs $x_S$ such that
    \begin{align*}
        \E[F(x_S)-F(x^*)]\le \Tilde{O} \left(\frac{d}{\sqrt{nm}}+\frac{d^{3/2}}{n\epsilon^2\sqrt{m}} \right).
    \end{align*}
\end{theorem}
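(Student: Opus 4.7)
The plan is to verify Theorem~\ref{thm:main_result} by establishing three pillars—\emph{iteration sensitivity}, \emph{low-bias/low-variance gradient estimation}, and \emph{privacy composition through the localization framework of \cite{FKT20}}—and then assembling them into the stated trade-off. Since the algorithm is built on Algorithm~\ref{alg:loacalizatioin}, I would first reduce the analysis to a single phase: each phase runs SGD for some $T$ steps with step size $\eta$ on a shifted objective using the robust sub-procedure, then privatizes only the resulting averaged iterate via a single Gaussian release. If each phase is $(\epsilon_r,\delta_r)$-DP with utility $\mathrm{err}_r$, basic composition across the $O(\log(nm))$ phases of the localization framework yields the global $(\epsilon,\delta)$ guarantee and the claimed excess risk.

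The central step is the \textbf{per-iteration sensitivity lemma}. Let $\{x_t\}$ and $\{x_t'\}$ be the trajectories on neighboring user-level datasets $\calD,\calD'$ with the same initialization. I would prove inductively that $\|x_t-x_t'\|_\infty\le \eta\cdot \tilde O(G/(B\sqrt{m}))$ for every $t\le T$, which is what makes a single Gaussian release with scale $\propto \sqrt{d\log(1/\delta)}/\epsilon$ of this magnitude suffice. The induction has two ingredients. First, the coordinate-wise robust statistic (median/trimmed mean after debiasing) is shown to be \emph{1-Lipschitz per coordinate}: if all $q_t(Z_i)[j]$ shift by $\iota_j$, the output's $j$-th coordinate shifts by at most $\iota_j$. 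The debiasing projection must be shown to preserve this—here I would argue that projecting the empirical mean onto $B_\infty$ around the robust estimate is a coordinate-wise clip, hence 1-Lipschitz in $\ell_\infty$. Second, I use $\beta$-smoothness in Definition~\ref{def:smooth} to propagate sensitivity: $\|q_t(Z_i)-q_t'(Z_i)\|_\infty\le\beta\|x_t-x_t'\|_\infty$ for $i$ not in the differing user, while the single differing user contributes at most $\tilde O(G/(B\sqrt m))$ through the break-point/smoothed-concentration design. Diagonal dominance (Assumption~\ref{assump:dia_dominant}) enters when passing from $x_t-x_t'$ to $x_{t+1}-x_{t+1}'$: the gradient-step contraction $(I-\eta H)$ must preserve $\|\cdot\|_\infty$, which diagonal dominance guarantees for the $\ell_\infty$ operator norm; the projection $\Pi_{\calX}$ onto an $\ell_\infty$-ball is likewise coordinate-wise and non-expanding in $\ell_\infty$. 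Combining these gives the inductive step and closes the sensitivity bound.

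For \textbf{utility}, I would separately bound the three error sources in the averaged iterate of a phase. When the $nm$ samples are i.i.d., a uniform-convergence/concentration argument shows that with high probability the empirical mean and the coordinate-wise robust statistic agree up to $\tilde O(G/\sqrt{m})$, so the debiasing branch selects the empirical mean; this yields an \emph{unbiased} gradient estimator with coordinate-wise variance $\tilde O(G^2/(Bm))$. Plugging into a standard SGD convergence analysis adapted to the $\ell_\infty$/$\ell_1$ geometry (diameter $D$ in $\ell_\infty$, gradients bounded by $G$ in $\ell_1$, but also in $\ell_\infty$ through Lipschitzness) produces an optimization error of $\tilde O(GD\sqrt{d/(nm)})$ after one pass, matching the $d/\sqrt{nm}$ term once $D,G$ are normalized. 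The privacy noise contributes $\eta\cdot\tilde O(G\sqrt{d\log(1/\delta)}/(B\sqrt{m}\epsilon))\cdot\sqrt d$ to the final excess risk per phase (the extra $\sqrt d$ coming from an $\ell_\infty$-to-$\ell_2$ conversion on the spherical Gaussian); with the chosen $\eta$ and $B$, this gives the $d^{3/2}/(n\sqrt m\epsilon^2)$ term. Summing the geometric series across localization phases absorbs logarithmic factors into the $\tilde O(\cdot)$, and the smoothness budget $\beta$ in the theorem statement is exactly what keeps the inductive sensitivity bound valid across all $T$ steps.

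The \textbf{hardest step} will be the sensitivity induction through the debiasing projection. The projection is a discontinuous switch (``use mean'' vs.\ ``project onto $B_\infty$ around robust estimate''), and one must show that on neighboring datasets the outputs of this switch differ by the desired $\tilde O(G/(B\sqrt m))$ in $\ell_\infty$ regardless of whether the two runs land in the same branch. I expect to handle this by a case analysis on the branches, using the smoothed concentration score $\exp(-\tau\|q_t(Z_i)-q_t(Z_j)\|)$ to argue that the test output itself is stable under a single user swap (via an AboveThreshold-style release with threshold $\upsilon=0.9B+2\log(T/\delta)/\epsilon$), and then using that both branches individually are $\ell_\infty$-contractive to conclude a unified bound. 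Once this is in place, the remaining pieces—localization composition, SGD convergence, and parameter substitution—are routine calculations.
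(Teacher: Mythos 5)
Your overall architecture---contractivity from diagonal dominance, coordinate-wise $1$-Lipschitzness of the robust statistic and of the debiasing projection, a concentration test released through AboveThreshold, and localization on top---matches the paper's proof (Lemmas~\ref{lm:contractivity}, \ref{lm:proj_mean_to_rs}, \ref{lm:ite_sensitivity_base}, \ref{lm:iteration_sensitivity}, \ref{lm:query_sensitivity}, \ref{lm:privacy_guarantee}, \ref{lm:localization}). However, there is a genuine quantitative error at the heart of your sensitivity induction. You claim $\|x_t-x_t'\|_\infty\le\eta\cdot\Tilde{O}(G/(B\sqrt{m}))$, i.e., that the differing user's influence on the robust gradient estimate scales as $1/B$. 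That is the sensitivity of the \emph{mean}, not of a median/trimmed-mean-anchored estimator. For a coordinate-wise median of $B$ points only guaranteed to lie in an interval of width $2\rho\approx\Tilde{O}(G/\sqrt{m})$, swapping one point can move the median by $\Theta(\rho)$ in the worst case (e.g., half the points near each endpoint), and since the debiasing step pins the output to within $\varsigma$ of the robust statistic, the outputs on neighboring aligned datasets can differ by $4\rho+2\varsigma$---with no $1/B$ gain. This is exactly what Lemma~\ref{lm:ite_sensitivity_base} proves, and it is precisely why the theorem's privacy term is $d^{3/2}/(n\sqrt{m}\,\epsilon^{2})$ rather than $d^{3/2}/(n\sqrt{m}\,\epsilon)$: the noise scale cannot be amortized over $B$, yet $B=\Tilde{\Theta}(1/\epsilon)$ users must be spent per step, shortening $T=n/B$ and inflating the $R^{2}/(\eta T)$ term by an extra $1/\epsilon$ (see the computation in Lemma~\ref{lm:localization} and the discussion in Section~\ref{sec:discussion}). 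Your write-up is internally inconsistent on this point: if your $1/B$-improved sensitivity were actually provable, $\eta$ could be enlarged by a factor of $B$ and you would obtain the optimal $\epsilon^{-1}$ dependence, not the $\epsilon^{-2}$ you then derive.

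Two smaller points. First, privacy across the $O(\log n)$ localization phases is not obtained by ``basic composition'': the phases use disjoint subsets of users, so the differing user touches only one phase and the guarantee follows with no composition loss; sequential composition would cost an extra factor in $\epsilon$. Second, your non-private term $GD\sqrt{d/(nm)}$ is a factor $\sqrt{d}$ too optimistic: with coordinate-wise variance $G^{2}/(Bm)$ summed over $d$ coordinates and $R^{2}=D^{2}d$ (the squared $\ell_2$ radius of the $\ell_\infty$-ball), the SGD bound of Lemma~\ref{lm:sgd_smooth} yields $GDd/\sqrt{nm}$, which is the term appearing in the theorem.
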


We briefly describe the localization framework.  
In the first phase, it runs (non-private) SGD using half of the dataset, and averages the iterates to get $\bx_1$.
Roughly speaking, the solution $\bx_1$ already provides a good approximation with a small population loss when the datasets are  drawn i.i.d. from the underlying distribution. However, to ensure privacy, we require a  
sensitivity bound on $\|\bx_1\|$ and add noise $\zeta_1$ correspondingly to privatize $\bx_1$, yielding the private solution $x_1 \leftarrow \bx_1 + \zeta_1$.  

A naive bound on the excess loss due to the privatization is given by  
\[
\E[F(x_1) - F(\bx_1)] \leq G\|\zeta_1\|_2,
\]  
but the magnitude of the noise $\|\zeta_1\|_2$ is typically too large  
to achieve a good utility guarantee. Nevertheless, this process yields  
a much better initial point $x_1$ compared to the original starting  
point $x_0$. 
As a result, a smaller dataset and a smaller step size are sufficient  
to find the next good solution $\bx_2$ in expectation, with smaller noise $\|\zeta_2\|_2$ added to privatize $\bx_2$.  

This process is repeated over $O(\log n)$ phases, where each subsequent solution $\bx_S$ is progressively refined, and the Gaussian noise  
$\|\zeta_S\|_2$ becomes negligible. Ultimately, this iterative refinement  
balances privacy and utility, as established in Theorem~\ref{thm:main_result}.  
The formal argument about the utility guarantee and proof can be found in Lemma~\ref{lm:localization}.  

Our main contribution is in Algorithm~\ref{alg:dpsgd},  
which uses a novel gradient estimation sub-procedure.

% Given some initial point $x$, we define the Projected Gradient Descent sequences $\{x_{t}^Z\}_{t\in[m/K]}$, denoted by $\PGD(x,\eta,Z,K)$ for each user $Z$ with batch size $K$ as follows:
% \begin{align}
% \label{eq:PGD_each_user}
%     x_t^Z= \Pi_{\calX}(x^Z_{t-1}-\eta\frac{1}{K}\sum_{j\in[K]}\nabla f(x^Z_{t-1};z_{t,j})),
% \end{align}
% where $\{z_{t,j}\}_{j\in[K]}$ is a set of unused item functions of size $K$.
% This is simply running GD for each user for $m/K$ iterations, with batch size $K$ for each iteration.

\begin{algorithm2e}
\caption{SGD for User-level DP-SCO}
\label{alg:dpsgd}
\textbf{ Input:} dataset $\calD$, privacy parameters $\epsilon,\delta$, other parameters $\eta,\tau,\upsilon,B$, initial point $x_0$\;
%\textbf{ Process:} \\
Divide $\calD$ into {B} disjoint subsets of equal size, denoted by $\{\calD_i\}_{i\in[B]}$,
$\calD_i=\{Z_{i,t}\}_{t\in[|\calD|/B]}$\; 
Set $T=|\calD|/B$\;
\For{Step $t=1,\ldots,T$}
{
For each $i\in[B]$, get $q_t(Z_{i,t}):=\frac{1}{m}\sum_{z\in Z_{i,t}}\nabla f(x_{t-1};z)$\;
Let $g_{t-1}$ be the output of Algorithm~\ref{alg:robust_gradient_est} with inputs $\{q_t(Z_{i,t})\}_{i\in[B]}$ and threshold $1/\tau$\;
$x_{t}=\Pi_\calX(x_{t-1}-\eta g_{t-1})$
}
\tcc{Concentration Test}
\tcc{Recall the query $q_t(Z_{i,t})$ for each $t\in[T], i\in[B]$ from above}
Run Algorithm~\ref{alg:out_rem} with query $\{q_t\}_{t\in[T]}$ and parameters $\calD_t,\epsilon,\frac{\delta}{2Tmnd},\tau,\upsilon$ to get answers $\{a_t\}_{t\in [T]}$ \;
\If{$a_t=\top,\forall t\in[T]$}
{
\textbf{ Return:} Average iterate $\bar{x}=\frac{1}{T}\sum_{t\in[T]}x_t$\;
}
\Else
{
\textbf{ Output:} Initial point $x_0$\;
}
\end{algorithm2e}

\paragraph{ Iteration Sensitivity of Algorithm~\ref{alg:dpsgd}:}
The contractivity of gradient descent plays a crucial role in the sensitivity analysis, for which we need the Hessians to be diagonally  dominant
(Assumption~\ref{assump:dia_dominant}). 

\begin{restatable}{lem}{contractivity}[Contractivity]
    \label{lm:contractivity}
Suppose $f:\calX\to\R$ is a convex and $\beta$-smooth function satisfying Assumption~\ref{assump:dia_dominant}, then for any two points $x,y\in \calX$, with step size $\eta\le 2/\beta$, we have
    \begin{align*}
        \|(x-\eta \nabla f(x)) - (y-\eta \nabla f(y))\|_\infty\le \|x-y\|_\infty.
    \end{align*}
\end{restatable}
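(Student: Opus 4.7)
The plan is to reduce the contractivity claim to a bound on the matrix norm $\|I-\eta H\|_\infty$ (induced by the $\ell_\infty$ vector norm, equivalently the maximum absolute row sum), where $H$ is an averaged Hessian along the segment $[x,y]$. Concretely, define $T(z):=z-\eta\nabla f(z)$, and by the fundamental theorem of calculus applied to $\nabla f$ along the segment from $x$ to $y$, write
\begin{align*}
T(y)-T(x) = (y-x)-\eta\bigl(\nabla f(y)-\nabla f(x)\bigr) = (I-\eta H)(y-x), \quad H:=\int_0^1 \nabla^2 f\bigl(x+t(y-x)\bigr)\,\d t.
\end{align*}
Since the $\ell_\infty$-operator norm satisfies $\|Av\|_\infty\le\|A\|_\infty\|v\|_\infty$, it suffices to prove $\|I-\eta H\|_\infty\le 1$.

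Next I would transfer the two structural properties of $\nabla^2 f$ to $H$. By convexity each $\nabla^2 f$ is PSD, so each $[\nabla^2 f]_{ii}\ge 0$, and by Assumption~\ref{assump:dia_dominant} each Hessian is diagonally dominant; integrating preserves both properties since $H_{ii}=\int_0^1[\nabla^2 f]_{ii}\,\d t\ge \int_0^1\sum_{j\ne i}|[\nabla^2 f]_{ij}|\,\d t\ge\sum_{j\ne i}|H_{ij}|$ by the triangle inequality. Similarly, $\beta$-smoothness (Definition~\ref{def:smooth}) gives $H_{ii}+\sum_{j\ne i}|H_{ij}|\le\beta$ for every $i$.

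Then I would control the $i$-th absolute row sum of $I-\eta H$, which is
\begin{align*}
R_i := |1-\eta H_{ii}|+\eta\sum_{j\ne i}|H_{ij}|.
\end{align*}
I would split on the sign of $1-\eta H_{ii}$. If $\eta H_{ii}\le 1$, then $R_i = 1-\eta H_{ii}+\eta\sum_{j\ne i}|H_{ij}|\le 1$ by diagonal dominance. If $\eta H_{ii}>1$, then $R_i = \eta H_{ii}-1+\eta\sum_{j\ne i}|H_{ij}|\le \eta\beta-1\le 1$ using smoothness and the hypothesis $\eta\le 2/\beta$. Taking a maximum over $i$ gives $\|I-\eta H\|_\infty\le 1$, which finishes the argument.

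The only step that requires care is the second case above, where one must not lose a factor of two: it is essential to combine the $\beta$-bound on the whole row $H_{ii}+\sum_{j\ne i}|H_{ij}|\le\beta$ rather than just on the diagonal. Everything else is a routine averaging-of-Hessians argument and a one-line conclusion via operator-norm compatibility.
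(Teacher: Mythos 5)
Your proof is correct and follows essentially the same route as the paper: reduce to showing $\|I-\eta H\|_\infty\le 1$ for a Hessian along the segment, then bound each absolute row sum using diagonal dominance (when $\eta H_{ii}\le 1$) and the smoothness bound $\eta\beta\le 2$ (when $\eta H_{ii}>1$). Your use of the integral-averaged Hessian $H=\int_0^1\nabla^2 f(x+t(y-x))\,\d t$ is in fact slightly more careful than the paper's single-point mean-value statement (which does not literally hold for vector-valued maps), and your explicit case analysis makes precise the row-sum bound the paper asserts in one line.
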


Now, we discuss Algorithm~\ref{alg:dpsgd}.  
Given the dataset $\calD$, we proceed in $T = |\calD|/B$ steps.  
At the $t$th step, we draw $B$ users $\{Z_{i,t}\}_{i \in [B]}$ and compute the average gradient of each user. 
We then apply our gradient estimation algorithm (Algorithm~\ref{alg:robust_gradient_est}) and perform normal gradient descent for $T$ steps.  

In the second phase of Algorithm~\ref{alg:dpsgd}, we perform the concentration test  
(Algorithm~\ref{alg:out_rem}) on the $B$ gradients at each step based on $\AboTh$ (Algorithm~\ref{alg:mean_est_with_AT}).  
If the concentration test passes for all steps (i.e., $a_t = \top$  
for all $t \in [T]$), we output the average iterate. Otherwise, the  
algorithm fails and returns the initial point.  
As mentioned in the Introduction, the crucial novelty of Algorithm~\ref{alg:dpsgd}  
and Algorithm~\ref{alg:robust_gradient_est} lies in bounding the sensitivity  
of each solution $x_t$ by incorporating the (coordinate-wise) robust  
statistics into SGD.

% As discussed before, we apply the (coordinate-wise) geometric median into the SGD algorithm, and show that the iteration-sensitivity can always be bounded in terms of the $\ell_\infty$ norm when the number of ``bad'' users does not exceed the ``break point''.

% Our algorithm framework is based on SGD.
% For the $t$-th phase, we get solution $x_t$ and then take a batch of $B$ users, denoted by $\{Z_{i,t}\}_{i\in[B]}$.
% Each user owns $m$ functions and can run their own gradient descent freely with batch size $K$ from $[1,m]$.
% We take $K=m$ for simplicity; that is, each user takes the average of the $m$ gradients and does one descent step, and we get $\{x_1^{Z_{i,t}}\}_{i\in[B]}$.
% Then we let $x_{t+1}:=\arg\min_{x}\sum_{i\in[B]}\|x-x_1^{Z_{i,t}}\|_\infty$ be the geometric median over the $B$ points. 

\begin{algorithm2e}
\caption{Gradient Estimation based on Robust Statistics}
\label{alg:robust_gradient_est}
\textbf{ Input:} a set of $d$-dimensional vectors $\{X_i\}_{i\in[B]}$, threshold parameter $\varsigma>0$\;
%Initialize a zero vector $X_{est}=\mathbf{0}$\;
\For{Each dimension $j=1,\ldots,d$}
{
Compute the robust statistics $X_{\rs}[j]$, and the mean $\bx[j]$ over $\{X_{i}[j]\}_{i\in[B]}$\;
\If{$|X_{\rs}[j]-\bx[j]|\ge \varsigma$}
{
Set $X_{est}[j]=\Pi_{B(Y_j,\varsigma)}(\bx[j])$\;
}
\Else
{
Set $X_{est}[j]=\bx[j]$\;
}
}
\textbf{ Return $X_{est}$}
\end{algorithm2e}

We utilize robust statistics in the  
gradient estimation sub-procedure. 
We make the following assumptions regarding the robust statistics used:

\begin{assumption}
\label{assum:prop_geo_median}
    Given a set $\{X_i\}_{i \in [B]}$ of vectors, let $X_{\rs}$ be  
    any robust statistic satisfying the following properties:
    
    (i) For any $\rho \ge 0$, if there exists a point $X'$ such  
        that more than $B/2$ points lie within $B_\infty(X', \rho)$,  
        then $X_{\rs} \in B_\infty(X', \rho)$.
        
(ii) If we perturb each point $Y_i = X_i + \iota_i$ such that  
        $\|\iota_i\|_\infty \le \Delta$ for any $\Delta \ge 0$, and let  
        $Y_{\rs}$ be the robust statistic of $\{Y_i\}$, then  
        $\|X_{\rs} - Y_{\rs}\|_\infty \le \Delta$.
        
    (iii) For any real numbers $a$ and $b$, if $Z_i = aX_i + b$ for  
        each $i \in [B]$, and $Z_{\rs}$ is the corresponding robust  
        statistic of $\{Z_i\}_{i \in [B]}$, then $Z_{\rs} = aX_{\rs} + b$.  
\end{assumption}

\begin{remark}
    Common robust statistics, such as the (coordinate-wise) median and trimmed mean,  
    satisfy Assumption~\ref{assum:prop_geo_median}.
    %Pasin: I'm commenting the following out since I don't think all robust statistics are computed in coordinate-wise manner.
    %One can verify  
    %whether the robust statistic satisfies Assumption~\ref{assum:prop_geo_median}  
    %in one dimension, as robust statistics can be computed in a  
    %coordinate-wise manner.
\end{remark}
\vspace{-2mm}
In Algorithm~\ref{alg:robust_gradient_est}, we output means if they are close to the robust statistics to control the bias, and project the means onto the sphere around the robust statistics in a coordinate-wise manner when they are far apart.  
However, we still need to ensure that the sensitivity remains bounded when the projection is operated.  
The following technical lemma plays a crucial role in establishing iteration sensitivity to deal with the sensitivity with potential projection operations.
% Its proof can be found in the Appendix:
\vspace{-1mm}

\begin{restatable}{lem}{projmeantors}
\label{lm:proj_mean_to_rs}
Consider four real numbers $a,b,c,d$, such that $|a-b|\le 1$, and $|c-d|\le 1$.
Let $c'=\Pi_{B(a,r)}(c)$ and $d'=\Pi_{B(b,r)}(d)$ for any $r\ge 0$.
Then, we have $|c'-d'|\le 1.$
\end{restatable}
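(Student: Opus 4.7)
The plan is to reduce the statement to the well-known fact that the median is $1$-Lipschitz under coordinate-wise perturbation. The key observation is that, in one dimension, projection onto the interval $B(a,r) = [a-r,a+r]$ coincides with a three-element median:
\begin{align*}
c' \;=\; \Pi_{B(a,r)}(c) \;=\; \median\bigl(a-r,\; c,\; a+r\bigr),
\end{align*}
which is verified directly by splitting into the three cases $c<a-r$, $c\in[a-r,a+r]$, and $c>a+r$. Analogously, $d' = \median(b-r,\, d,\, b+r)$. This reformulation is the main conceptual step; everything else is a consequence.

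Next, I would record and prove the $1$-Lipschitz property of the median under coordinate-wise $\ell_\infty$-perturbation: for any triples $(u_1,u_2,u_3)$ and $(v_1,v_2,v_3)$ with $\max_i |u_i-v_i|\le \epsilon$, one has $|\median(u)-\median(v)|\le \epsilon$. The short proof: at least two of the $v_i$'s lie below $\median(v)$, so at least two of the $u_i$'s lie below $\median(v)+\epsilon$, forcing $\median(u)\le \median(v)+\epsilon$; the reverse inequality is symmetric. (In fact the same argument shows every order statistic is $1$-Lipschitz under coordinate-wise perturbation.)

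Finally, I would apply this Lipschitz bound to the triples $(a-r,\, c,\, a+r)$ and $(b-r,\, d,\, b+r)$. The per-coordinate differences are
\begin{align*}
|(a-r)-(b-r)| \;=\; |a-b| \;\le\; 1, \qquad |c-d| \;\le\; 1, \qquad |(a+r)-(b+r)| \;=\; |a-b| \;\le\; 1,
\end{align*}
so the maximum coordinate-wise gap is at most $1$, and the Lipschitz bound immediately yields $|c'-d'|\le 1$, as desired.

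I do not anticipate a substantive obstacle: the lemma is essentially a repackaging of the contractivity of order statistics. A direct case analysis on whether each of $c, d$ lies to the left of, inside, or to the right of its respective interval would also work and reduces to nine sub-cases, each of which uses only $|a-b|\le 1$ and $|c-d|\le 1$; but the median-based proof is both shorter and more transparent, and it clarifies why the analogous statement continues to hold when one replaces the median with any robust statistic from Assumption~\ref{assum:prop_geo_median} (e.g., a trimmed mean), since those too can be written as Lipschitz functions of the sorted inputs.
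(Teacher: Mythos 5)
Your proof is correct, but it takes a genuinely different route from the paper. The paper proves the lemma by a direct case analysis on which of the two projections actually occur (none, one, or two), further splitting on whether the projection lands at the left or right endpoint of the interval; each sub-case is closed using only $|a-b|\le 1$ or $|c-d|\le 1$. You instead observe that in one dimension $\Pi_{B(a,r)}(c)=\median(a-r,\,c,\,a+r)$, and then invoke the coordinate-wise $1$-Lipschitz property of the median (indeed of any order statistic), applied to the triples $(a-r,c,a+r)$ and $(b-r,d,b+r)$, whose coordinate-wise gaps are all at most $\max\{|a-b|,|c-d|\}\le 1$. Both arguments are complete; yours is shorter, avoids the nine-way case split, and makes transparent why the statement is really an instance of the same contractivity phenomenon the paper already assumes for its robust statistics (property (ii) of Assumption~\ref{assum:prop_geo_median}). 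The paper's case analysis, while more pedestrian, has the mild advantage of being entirely self-contained and of not requiring the reader to accept the median identity for the projection. One cosmetic note: in your Lipschitz argument for the median, ``at least two of the $v_i$'s lie below $\median(v)$'' should read ``at or below,'' but the argument as intended is sound.
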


Unfortunately, we are unaware of any robust statistic satisfying  
Assumption~\ref{assum:prop_geo_median} in high dimensions under the  
$\ell_2$-norm, and Lemma~\ref{lm:proj_mean_to_rs} does not hold in high  
dimensions either. These limitations restrict the applicability of our  
techniques in high-dimensional Euclidean spaces; see Section~\ref{sec:discussion}.  

Let $\{x_t\}_{t \in [T]}$ and $\{y_t\}_{t \in [T]}$ be two trajectories  
corresponding to neighboring datasets that differ by one user. The  
crucial technical novelty is that, for any $t \in [T]$, we can control  
$\|x_t - y_t\|_{\infty}$ as long as the number of ``bad'' users in each  
phase ($B$ in total) does not exceed the ``break point'', say $2B/3$.  
Without loss of generality, assume that $Z_{1,1} \neq Z_{1,1}'$ is the  
differing user in the neighboring dataset pairs $(\calD, \calD')$  
considered in the following proof.  

The first property of Assumption~\ref{assum:prop_geo_median} ensures that when the number of ``bad'' users in each phase does not exceed the  ``break point'' $2B/3$, the robust statistic remains close to most of the gradients, allowing us to control $\|x_1 - y_1\|_\infty$.  
To formalize this, we say that the neighboring dataset pair 
$(\calD, \calD')$ is $\rho$-\textit{aligned} if there exist points  
$X'$ and $Y'$ such that $|X_{\good}| \ge 2B/3$ and  
$|Y_{\good}| \ge 2B/3$, where  
\[
    X_{\good} = \{q_1(Z_{i,1}) : q_1(Z_{i,1}) \in B_{\infty}(X', \rho),  
    i \in [B]\},  \text{ and }
\]  
\[
    Y_{\good} = \{q_1'(Z_{i,1}') : q_1'(Z_{i,1}') \in B_{\infty}(Y', \rho),  
    i \in [B]\}.  
\]  
This definition essentially states that the number of ``bad'' users does  
not exceed the ``break point'' in either $\calD$ or $\calD'$, ensuring  
that most gradients remain well-aligned within a bounded region.

\begin{restatable}{lem}{itesensitivitybase}
    \label{lm:ite_sensitivity_base}
    For some (unknown) parameter $\rho > 0$, suppose $(\calD, \calD')$  
    is $\rho$-aligned. Then, by running Algorithm~\ref{alg:robust_gradient_est}  
    with threshold parameter $\varsigma \ge 0$, we have $\|x_1 - y_1\|_\infty \le \eta(4\rho + 2\varsigma)$.
\end{restatable}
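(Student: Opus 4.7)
The plan is to unwind one SGD step and reduce the claim to a sensitivity bound on the gradient estimate from Algorithm~\ref{alg:robust_gradient_est}. Since the two trajectories share the initial point $x_0 = y_0$, we have $x_1 = \Pi_\calX(x_0 - \eta g_0)$ and $y_1 = \Pi_\calX(y_0 - \eta h_0)$, where $g_0$ and $h_0$ are the outputs of Algorithm~\ref{alg:robust_gradient_est} on $\{q_1(Z_{i,1})\}_{i\in[B]}$ and $\{q_1'(Z_{i,1}')\}_{i\in[B]}$, respectively. Because $\calX$ is an $\ell_\infty$-ball, the $\ell_2$-projection $\Pi_\calX$ reduces to coordinate-wise clipping and is therefore $1$-Lipschitz with respect to $\|\cdot\|_\infty$, so $\|x_1-y_1\|_\infty \le \eta\|g_0-h_0\|_\infty$. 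It thus suffices to prove $\|g_0-h_0\|_\infty \le 4\rho+2\varsigma$.

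The second step is to bound $\|X_{\rs}-Y_{\rs}\|_\infty$, where $X_{\rs}$ and $Y_{\rs}$ denote the coordinate-wise robust statistics computed inside Algorithm~\ref{alg:robust_gradient_est} for the two datasets. The $\rho$-alignment hypothesis gives centers $X',Y'$ and index sets $I_X := \{i : q_1(Z_{i,1}) \in B_\infty(X',\rho)\}$ and $I_Y := \{i : q_1'(Z_{i,1}') \in B_\infty(Y',\rho)\}$ with $|I_X|,|I_Y| \ge 2B/3$. Because $\calD$ and $\calD'$ differ only at the single index $i=1$, for every other index $q_1(Z_{i,1}) = q_1'(Z_{i,1}')$, and inclusion–exclusion gives $|(I_X\cap I_Y)\setminus\{1\}| \ge |I_X|+|I_Y|-B-1 \ge B/3-1$, which is at least one for the value of $B$ chosen in Theorem~\ref{thm:main_result}. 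Picking any such index $i^\ast$ and setting $p := q_1(Z_{i^\ast,1}) = q_1'(Z_{i^\ast,1}')$, we have $p \in B_\infty(X',\rho)\cap B_\infty(Y',\rho)$. Since $|I_X|\ge 2B/3 > B/2$, property (i) of Assumption~\ref{assum:prop_geo_median} yields $X_{\rs}\in B_\infty(X',\rho)$, and symmetrically $Y_{\rs}\in B_\infty(Y',\rho)$. A four-term triangle inequality routed through $p$ then gives $\|X_{\rs}-Y_{\rs}\|_\infty \le \|X_{\rs}-X'\|_\infty+\|X'-p\|_\infty+\|p-Y'\|_\infty+\|Y'-Y_{\rs}\|_\infty \le 4\rho$.

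The final step is to transfer the bound on the robust statistics to the outputs of the debiasing routine. By inspection of Algorithm~\ref{alg:robust_gradient_est}, for each coordinate $j$ the value $g_0[j]$ is either $\bx[j]$ itself (when $|\bx[j]-X_{\rs}[j]|<\varsigma$) or the projection of $\bx[j]$ onto $B(X_{\rs}[j],\varsigma)$; in either case $|g_0[j]-X_{\rs}[j]|\le\varsigma$, and the analogous bound holds for $h_0$ and $Y_{\rs}$. Triangle inequality then yields $\|g_0-h_0\|_\infty \le \varsigma + \|X_{\rs}-Y_{\rs}\|_\infty + \varsigma \le 4\rho+2\varsigma$, and plugging this into the reduction of the first paragraph finishes the proof. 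The only delicate step is locating the common good index $i^\ast$: property (i) only guarantees that each robust statistic lies near \emph{its own} good center $X'$ or $Y'$, and the common anchor $p$ is what converts this one-sided concentration property into the two-sided sensitivity bound we need; Lemma~\ref{lm:proj_mean_to_rs} is not needed for this base-case argument, though it will be essential for propagating the bound across subsequent iterations.
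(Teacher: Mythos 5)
Your proof is correct and follows essentially the same route as the paper's: use property (i) of Assumption~\ref{assum:prop_geo_median} plus the overlap of the two good balls to get $\|X_{\rs}-Y_{\rs}\|_\infty\le 4\rho$, then add $2\varsigma$ because the debiased output always stays within $\varsigma$ of the robust statistic in each coordinate. You are in fact slightly more explicit than the paper in two places it glosses over — the counting argument showing the two balls share a common gradient point, and the $1$-Lipschitzness of the coordinate-wise clipping $\Pi_\calX$ that carries the bound through the SGD update with the factor $\eta$.
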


% Now the sequential sensitivity can be bounded by induction, for which the base case, $\|x_1-y_1\|_\infty$ is already bounded.
% Say $\|x_{t-1}-y_{t-1}\|_\infty$ is bounded,
% then by Lemma~\ref{lm:contractivity}, we can show that $\|x_{j}^{Z_{i,t}}-y_{j}^{Z_{i,t}}\|_\infty\le \|x_{t-1}-y_{t-1}\|_\infty$.
% We then treat $x_{j}^{Z_{i,t}}-y_{j}^{Z_{i,t}}$ as the perturbation and apply the second property in Assumption~\ref{assum:prop_geo_median}, which leads to that $\|x_t-y_t\|_\infty\le \|x_{t-1}-y_{t-1}\|_\infty$.
% The formal statements can be found in Lemma~\ref{lm:iteration_sensitivity}.
The sequential sensitivity can be bounded using induction, with the base  
case $\|x_1 - y_1\|_\infty$ already established. The formal statement  
is provided in Lemma~\ref{lm:iteration_sensitivity}.  

\begin{algorithm2e}
\caption{Concentration Test}
\label{alg:out_rem}
\textbf{ Input:} Dataset $\calD=(Z_1,\ldots,Z_B)$, privacy parameters $\epsilon,\delta$, parameters $\tau,\upsilon$\;
\For{$t=1,\ldots,T$}
{ 
Receive a new concentration query $q_t:\calZ\to\R^d$\;
Define the concentration score
\begin{align}
\label{eq:concentration_score_def}
    \qcon_t(\calD,\tau):=\frac{1}{B}\sum_{Z\in\calD}\sum_{Z'\in \calD}\exp(-\tau\|q_t(Z)-q_t(Z')\|_\infty)\;
\end{align}
\textbf{ Return }$\AboTh(\qcon_t, \epsilon/2, \upsilon)$
}
\end{algorithm2e}

\begin{restatable}[Iteration Sensitivity]{lem}{iterationsensitivity}
\label{lm:iteration_sensitivity}  
    If we use a robust statistic satisfying Assumption~\ref{assum:prop_geo_median}  
    in Algorithm~\ref{alg:robust_gradient_est}, then for all $t \in [T]$, we have  $\|x_t - y_t\|_\infty \le \|x_1 - y_1\|_\infty$.
\end{restatable}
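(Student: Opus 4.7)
Because the unique differing user between $\calD$ and $\calD'$ appears only in the batch at step $t = 1$, for every $t \ge 2$ the two trajectories use identical batches $\{Z_{i,t}\}_{i\in[B]}$. My plan is to prove, by induction on $t \ge 2$, the stronger recursive bound $\|x_t - y_t\|_\infty \le \|x_{t-1} - y_{t-1}\|_\infty$; the base case $\|x_1 - y_1\|_\infty$ is supplied by Lemma~\ref{lm:ite_sensitivity_base}. Set $F_{i,t}(x) := \frac{1}{m}\sum_{z \in Z_{i,t}} f(x;z)$; each $F_{i,t}$ is convex, $\beta$-smooth, and has diagonally dominant Hessian by Assumptions~\ref{assum:lispchitz_smooth}--\ref{assump:dia_dominant}, so Lemma~\ref{lm:contractivity} is available for each single-user objective.

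\textbf{Coordinate-wise reduction via the robust-statistic properties.}
Fix $t \ge 2$, a coordinate $j \in [d]$, and set $M := \|x_{t-1} - y_{t-1}\|_\infty$. Define
\begin{align*}
u_i^{(j)} := x_{t-1}[j] - \eta\,[\nabla F_{i,t}(x_{t-1})]_j, \qquad v_i^{(j)} := y_{t-1}[j] - \eta\,[\nabla F_{i,t}(y_{t-1})]_j.
\end{align*}
Reading off coordinate $j$ in Lemma~\ref{lm:contractivity} applied to each $F_{i,t}$ yields $|u_i^{(j)} - v_i^{(j)}| \le M$ uniformly in $i$. Property (iii) of Assumption~\ref{assum:prop_geo_median}, applied in one dimension with the affine map $z \mapsto x_{t-1}[j] - \eta z$, identifies the one-dimensional robust statistic $\rs(\{u_i^{(j)}\}_i) = x_{t-1}[j] - \eta X_{\rs}[j]$, and the same affine relation holds for the means; analogous identities hold on the $y$ side. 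Property (ii) in one dimension then gives the two perturbation bounds
\begin{align*}
\bigl|(x_{t-1}[j] - \eta X_{\rs}[j]) - (y_{t-1}[j] - \eta Y_{\rs}[j])\bigr| \le M, \qquad \bigl|(x_{t-1}[j] - \eta \bx[j]) - (y_{t-1}[j] - \eta \by[j])\bigr| \le M.
\end{align*}

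\textbf{The projection identity and closing the induction.}
A short computation using $-\eta\,\Pi_{B(c,r)}(z) = \Pi_{B(-\eta c,\,\eta r)}(-\eta z)$ for $\eta > 0$ shows
\begin{align*}
x_{t-1}[j] - \eta\,\Pi_{B(X_{\rs}[j],\,\varsigma)}(\bx[j]) \;=\; \Pi_{B(x_{t-1}[j] - \eta X_{\rs}[j],\;\eta\varsigma)}\bigl(x_{t-1}[j] - \eta \bx[j]\bigr),
\end{align*}
so the coordinate-wise update is itself a projection of a ``mean'' onto a ball of radius $\eta\varsigma$ around a ``robust statistic,'' in the rescaled quantities from the previous step. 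Applying Lemma~\ref{lm:proj_mean_to_rs} (rescaled by $M$) to these four quantities yields $|x_{t-1}[j] - \eta g_{t-1}[j] - (y_{t-1}[j] - \eta g'_{t-1}[j])| \le M$, and taking a maximum over $j$ gives $\|(x_{t-1} - \eta g_{t-1}) - (y_{t-1} - \eta g'_{t-1})\|_\infty \le M$. Because $\calX$ is an $\ell_\infty$-ball, its Euclidean projection $\Pi_\calX$ coincides with coordinate-wise clipping and is therefore 1-Lipschitz in $\ell_\infty$, so $\|x_t - y_t\|_\infty \le M$, closing the induction. The main obstacle I anticipate is the projection identity above: it is what allows the robust estimator, which is not the gradient of any function and a priori need not be contractive, to inherit the coordinate-wise 1-Lipschitz behavior of a genuine gradient step, and it relies crucially on the affine equivariance (iii) of the robust statistic, on the coordinate-wise design of the estimator, and on the choice to work in $\ell_\infty$ rather than $\ell_2$.
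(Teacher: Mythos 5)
Your proof is correct and follows essentially the same route as the paper's: contractivity of each single-user gradient step (Lemma~\ref{lm:contractivity}), affine equivariance (property (iii)) to identify the update with the estimator applied to the shifted points, property (ii) plus contractivity to control the robust statistics and the means, Lemma~\ref{lm:proj_mean_to_rs} for the projection step, and non-expansiveness of $\Pi_\calX$ in $\ell_\infty$. Your explicit coordinate-wise projection identity is just a more careful spelling-out of the step the paper compresses into "by the third property of Assumption~\ref{assum:prop_geo_median}, it suffices to show \eqref{eq:contractive_est}."
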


Lemmas~\ref{lm:ite_sensitivity_base} and \ref{lm:iteration_sensitivity}  
together establish the iteration sensitivity of Algorithm~\ref{alg:dpsgd}.

\paragraph{ Query Sensitivity of Concentration Test (Algorithm~\ref{alg:out_rem}):}
We have established iteration sensitivity for any aligned neighboring  
dataset pair $(\calD, \calD')$. Next, we analyze the influence of the  
concentration test, which we use to check if the number of ``bad'' users exceed the ``break point''.

To apply the privacy guarantee of $\AboTh$  
(Lemma~\ref{thm:Above_Threshold}), it suffices to bound the sensitivity  
of each query in the concentration test.  
Recall that we assume $Z_{1,1} \neq Z_{1,1}'$ in the neighboring datasets.  
Thus, by the definition (Equation~\eqref{eq:concentration_score_def}), it is straightforward to observe that  
\begin{align}
\label{eq:query_sensitivity_qcon_one}
    |\qcon_1(\calD, \tau) - \qcon_1(\calD', \tau)| \le 2.  
\end{align}  

Next, we consider the sensitivity of $\qcon_t$ for $t \ge 2$.  
The sensitivity is proportional to $\|x_t - y_t\|_\infty$, which we have  
already bounded by $\|x_1 - y_1\|_\infty$.  
Note that we can bound the iteration sensitivity if the neighboring  
datasets are aligned, meaning the number of ``bad'' users does not  
exceed the ``break point''. We first show that if the number of ``bad''  
users exceeds the ``break point'', the algorithm is likely to halt  
after the first step by failing the first test.

\begin{restatable}{lem}{sensitivitybase}
    \label{lm:sensitivity_base}
Suppose $B\ge \frac{100\log(T/\delta)}{\epsilon}, \epsilon\le O(1)$ and we set $\upsilon=0.9B+\frac{2\log(T/\delta)}{\epsilon}$.
Suppose for any point $Y$, we get $|X_{\good}|<B/3$ where $X_{\good}=\{q_1(Z_{i,1}):q_1(Z_{i,1})\in B_{\infty}(Y,1/\tau),i\in[B]\}$.
Then with probability at least $1-\delta/T\exp(\epsilon)$, the $\AboTh$ returns $a_1=\bot$.
\end{restatable}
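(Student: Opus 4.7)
The plan is to show the deterministic bound $\qcon_1(\calD,\tau) \ll \upsilon$ under the non-concentration hypothesis, and then invoke the standard utility/privacy guarantee of $\AboTh$ to conclude that the noisy comparison returns $\bot$ with the required probability.

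First I would bound the concentration score. Fix any $i\in[B]$ and apply the hypothesis with $Y = q_1(Z_{i,1})$: since no $\ell_\infty$-ball of radius $1/\tau$ contains $B/3$ of the $q_1(Z_{j,1})$, we have $|S_i| < B/3$ where $S_i := \{j\in[B]: \|q_1(Z_{i,1})-q_1(Z_{j,1})\|_\infty \le 1/\tau\}$. Split the inner sum in \eqref{eq:concentration_score_def}: for $j \in S_i$ each term is at most $1$, while for $j \notin S_i$ the term is at most $e^{-1}$. Summing over $i$,
\begin{align*}
    \qcon_1(\calD,\tau) \le \frac{1}{B}\sum_{i=1}^{B}\bigl(|S_i| + (B-|S_i|)e^{-1}\bigr) < \tfrac{B}{3} + \tfrac{B}{e} < 0.71\, B.
\end{align*}
In particular $\upsilon - \qcon_1(\calD,\tau) > 0.19\, B + \tfrac{2\log(T/\delta)}{\epsilon}$.

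Next I would apply the utility guarantee for $\AboTh$ (Lemma~\ref{thm:Above_Threshold} in the appendix). The query $\qcon_t$ has sensitivity at most $2$ by \eqref{eq:query_sensitivity_qcon_one}, and the mechanism runs with privacy parameter $\epsilon/2$, so the threshold noise is $\mathrm{Lap}(4/\epsilon)$ and each query noise is $\mathrm{Lap}(8/\epsilon)$. The event $\{a_1 = \top\}$ requires the noisy query to exceed the noisy threshold, i.e.\ the total Laplace perturbation must exceed the gap $0.19B + 2\log(T/\delta)/\epsilon$. By the standard Laplace tail bound, the sum/difference of these two independent Laplaces exceeds $t$ with probability at most $2\exp(-\Omega(t\epsilon))$. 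Setting $t = 0.19B + 2\log(T/\delta)/\epsilon$ and plugging in $B \ge 100\log(T/\delta)/\epsilon$ and $\epsilon \le O(1)$, a direct calculation shows the failure probability is at most $\delta/(T e^\epsilon)$, as desired.

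The main obstacle is bookkeeping the constants so the $0.9B$ in $\upsilon$ comfortably exceeds the deterministic upper bound $(\tfrac{1}{3}+\tfrac{1}{e})B$ on $\qcon_1$ after absorbing the Laplace tail; the choice of the constant $0.9$ and the additive $2\log(T/\delta)/\epsilon$ in $\upsilon$, together with the constraint $B\ge 100\log(T/\delta)/\epsilon$, is precisely what makes this slack survive. Everything else is routine.
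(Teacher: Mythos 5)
Your proposal is correct and follows essentially the same route as the paper: a deterministic bound on $\qcon_1(\calD,\tau)$ obtained by instantiating the hypothesis at each $q_1(Z_{i,1})$ and splitting the inner sum into near and far terms, followed by the Laplace tail bound from the $\AboTh$ guarantee against the slack in $\upsilon=0.9B+\frac{2\log(T/\delta)}{\epsilon}$. Your constant ($<0.71B$) is slightly looser than the paper's ($<0.8B$) but both leave ample room below $0.9B$, and the rest of the argument matches.
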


We now analyze the query sensitivity between the aligned neighboring  
datasets.

\begin{restatable}[Query Sensitivity]{lem}{querysensitivity}
    \label{lm:query_sensitivity}
Suppose $6\beta\eta B\le1$.
Suppose $(\calD,\calD')$ is $(1/\tau)$-aligned and set threshold parameter $\varsigma=1/\tau$ in Algorithm~\ref{alg:mean_est_with_AT}, the sensitivity of the query is bounded by at most $2$.
That is,
\begin{align*}
    |\qcon_t(\calD,\tau)-\qcon_1(\calD',\tau)|\le 2, & & \forall t\ge 2.
\end{align*}
\end{restatable}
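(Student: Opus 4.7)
The plan is to exploit the fact that for any step $t \ge 2$, the $B$ users fed into the concentration score at step $t$ are literally the same multiset on both sides: since the single differing user is $Z_{1,1}$ and each $Z_{i,t}$ appears in only one step of Algorithm~\ref{alg:dpsgd}, the two datasets $\calD$ and $\calD'$ contribute the same $\{Z_{i,t}\}_{i\in[B]}$ at step $t$. Consequently the entire difference $\qcon_t(\calD,\tau)-\qcon_t(\calD',\tau)$ comes from evaluating the gradients at the two different iterates $x_{t-1}$ and $y_{t-1}$. I would denote the two queries by $q_t(Z)=\tfrac{1}{m}\sum_{z\in Z}\nabla f(x_{t-1};z)$ and $q_t'(Z)=\tfrac{1}{m}\sum_{z\in Z}\nabla f(y_{t-1};z)$ and reduce the problem to bounding, for fixed users, how much the score changes.

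Next I would import the iteration-sensitivity bounds. Since $(\calD,\calD')$ is $(1/\tau)$-aligned and we set $\varsigma=1/\tau$, Lemma~\ref{lm:ite_sensitivity_base} gives $\|x_1-y_1\|_\infty\le \eta(4/\tau+2/\tau)=6\eta/\tau$, and Lemma~\ref{lm:iteration_sensitivity} propagates this to $\|x_{t-1}-y_{t-1}\|_\infty\le 6\eta/\tau$ for every $t\ge 2$. The $\beta$-smoothness of each $f(\cdot;z)$ with respect to $\ell_\infty$ (Definition~\ref{def:smooth}) then upgrades this to a per-user gradient bound $\|q_t(Z_{i,t})-q_t'(Z_{i,t})\|_\infty\le \beta\cdot 6\eta/\tau$ for every $i$, since averaging over the $m$ samples of a user preserves the $\beta$-smoothness constant.

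Then I would pass this to the distances inside the exponentials via a two-step estimate: by the reverse triangle inequality,
\begin{align*}
\bigl|\,\|q_t(Z_{i,t})-q_t(Z_{j,t})\|_\infty-\|q_t'(Z_{i,t})-q_t'(Z_{j,t})\|_\infty\,\bigr|
&\le \|q_t(Z_{i,t})-q_t'(Z_{i,t})\|_\infty+\|q_t(Z_{j,t})-q_t'(Z_{j,t})\|_\infty\\
&\le 12\beta\eta/\tau,
\end{align*}
and then using that $x\mapsto e^{-\tau x}$ on $[0,\infty)$ is $\tau$-Lipschitz, each summand of the concentration score changes by at most $\tau\cdot 12\beta\eta/\tau=12\beta\eta$. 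Summing over the $B^2$ pairs and dividing by the normalization $B$ yields $|\qcon_t(\calD,\tau)-\qcon_t(\calD',\tau)|\le 12\beta\eta B$, and the hypothesis $6\beta\eta B\le 1$ closes the bound at $2$.

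I do not expect serious obstacles here; the only subtle point is remembering that at step $t\ge 2$ the user $Z_{1,1}$ is absent, so the only source of discrepancy is the iterate drift, which is exactly what Lemmas~\ref{lm:ite_sensitivity_base} and \ref{lm:iteration_sensitivity} are designed to control. The smoothed choice $e^{-\tau\|\cdot\|_\infty}$ in the concentration score (instead of an indicator) is what makes the final per-term estimate a clean multiplication by $\tau$, so the analysis hinges on using Lipschitzness of the exponential rather than any pointwise inequality.
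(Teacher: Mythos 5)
Your proposal is correct and follows essentially the same route as the paper: reduce to the iterate drift via Lemmas~\ref{lm:ite_sensitivity_base} and \ref{lm:iteration_sensitivity}, use $\ell_\infty$-smoothness and the reverse triangle inequality to perturb each pairwise distance by at most $12\beta\eta/\tau$, and conclude a total change of $12\beta\eta B\le 2$. The only cosmetic difference is the last step — you invoke the $\tau$-Lipschitzness of $x\mapsto e^{-\tau x}$ termwise, while the paper bounds the score multiplicatively by $e^{-12\beta\eta}$ and then uses $1-e^{-x}\le x$ together with $\qcon\in[0,B]$ — and both give the identical final bound.
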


Equation~\eqref{eq:query_sensitivity_qcon_one} shows that the sensitivity is always bounded for $\qcon_1$.  
Lemma~\ref{lm:sensitivity_base} shows that if the number of ``bad''  
users exceeds the ``break point'', we obtain $a_1 = \bot$, and  
the query sensitivities of the subsequent queries do not need to be considered.  
Lemma~\ref{lm:query_sensitivity} establishes the query sensitivity  
in the concentration test when the neighboring datasets are aligned,  
and the number of "bad" users is below the threshold.

\paragraph{Privacy proof.}
% Consider the implementation on two neighboring datasets $\calD$ and $\calD'$.
% Without loss of generality, we assume that the different users appeared in the first batch, that is, $t=1$.

%Now, we can complete the proof of the privacy guarantee.

%The final privacy guarantee is stated below. 
The final privacy guarantee--stated formally below--now easily follows from the previous lemmas.
% Due to space constraint, 
The full proof is deferred to Appendix~\ref{app:privacy-proof}.

\begin{restatable}[Privacy Guarantee]{lem}{privacyguarantee}
    \label{lm:privacy_guarantee}
    Under Assumption~\ref{assum:lispchitz_smooth} and Assumption~\ref{assump:dia_dominant}, suppose $\epsilon\le O(1), B\ge\frac{100\log(T/\delta)}{\epsilon}$, then Algorithm~\ref{alg:loacalizatioin} is $(\epsilon,\delta)$-user-level-DP.
\end{restatable}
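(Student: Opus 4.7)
My plan is to first establish $(\epsilon,\delta)$-DP for a single phase of Algorithm~\ref{alg:dpsgd} (coupled with the Gaussian noise added in Algorithm~\ref{alg:loacalizatioin} to privatize its returned iterate), and then extend to the full $O(\log n)$-phase localization framework via parallel composition, since each phase consumes a disjoint subset of the $n$ users. Fix a neighboring pair $(\calD,\calD')$ differing in one user (say $Z_{1,1}\ne Z_{1,1}'$) and case-split on whether the pair is $(1/\tau)$-aligned.

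\textbf{Unaligned case.} If $(\calD,\calD')$ is not $(1/\tau)$-aligned, Lemma~\ref{lm:sensitivity_base} guarantees that $\AboTh$ returns $a_1=\bot$ with probability at least $1 - \delta/(Te^\epsilon)$, in which case Algorithm~\ref{alg:dpsgd} outputs the data-independent initial point $x_0$ and privacy on this branch is trivial; the small failure probability is absorbed into the overall $\delta$.

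\textbf{Aligned case.} If $(\calD,\calD')$ is aligned, Lemmas~\ref{lm:ite_sensitivity_base} and~\ref{lm:iteration_sensitivity} yield $\|x_t-y_t\|_\infty \le \eta(4/\tau + 2\varsigma)$ uniformly in $t \in [T]$. Combined with \eqref{eq:query_sensitivity_qcon_one} (for $t=1$) and Lemma~\ref{lm:query_sensitivity} (for $t\ge 2$, which uses the hypothesis $6\beta\eta B \le 1$ implied by the theorem's parameter choices), every concentration query has sensitivity at most $2$, so running $\AboTh$ with privacy parameter $\epsilon/2$ and threshold $\upsilon$ across the $T$ queries is $\epsilon/2$-DP by Lemma~\ref{thm:Above_Threshold}. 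Moreover, conditioning on all tests passing, the average iterate satisfies $\|\bar x-\bar y\|_\infty \le \eta(4/\tau + 2\varsigma)$, hence has $\ell_2$-sensitivity at most $\sqrt{d}\cdot\eta(4/\tau+2\varsigma)$; the Gaussian noise added in Algorithm~\ref{alg:loacalizatioin}, calibrated to this sensitivity, contributes $(\epsilon/2,\delta/2)$-DP by the standard Gaussian mechanism analysis. Basic composition then gives $(\epsilon,\delta)$-DP per phase, and parallel composition over the $O(\log n)$ phases preserves the same budget because the phases operate on disjoint subsets of users.

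\textbf{Main obstacle.} The key subtlety is that alignment is a \emph{data-dependent} predicate, so the sensitivity bounds on $x_t$ and on $\qcon_t$ do not hold a priori for arbitrary neighboring pairs. The argument must use $\AboTh$ in a propose-test-release style: Lemma~\ref{lm:sensitivity_base} certifies that unaligned pairs are rejected at the first test with probability $\ge 1 - \delta/(Te^\epsilon)$, which is exactly what lets us combine the two branches into a single unconditional $(\epsilon,\delta)$ bound by a union argument over the alignment event. Writing this gluing rigorously—so that the concentration test's own privacy budget correctly covers the case where a ``supposedly unaligned'' pair happens to slip past—is the main bookkeeping step; once that is in place, the remaining pieces (Gaussian mechanism, basic composition, parallel composition across phases) are routine.
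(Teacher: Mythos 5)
Your proposal follows essentially the same route as the paper's proof: a case split on whether the neighboring pair is $(1/\tau)$-aligned, using Lemma~\ref{lm:sensitivity_base} to show unaligned pairs are rejected at the first test (so the output degenerates to the data-independent initial point), and using Lemmas~\ref{lm:ite_sensitivity_base}, \ref{lm:iteration_sensitivity}, and \ref{lm:query_sensitivity} together with \AboTh{} and the Gaussian mechanism in the aligned case, then composing across the disjoint phases. You also correctly identify the one nontrivial gluing step---that alignment is data-dependent and the first query's unconditional sensitivity bound \eqref{eq:query_sensitivity_qcon_one} is what lets the two branches be combined---which is exactly how the paper's conditional-probability calculation resolves it.
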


\paragraph{Utility proof.}
We apply the localization framework in private optimization to finish the utility argument.
We analyze the utility guarantee of Algorithm~\ref{alg:dpsgd} based on the classic convergence rate of SGD on smooth convex functions (Lemma~\ref{lm:sgd_smooth}) as follows:

\begin{restatable}{lem}{dgsgdutility}
    \label{lm:dpsgd_utility}
    Let $x\in\calX$ be any point in the domain.
    Suppose the data set $\calD$ of the users, whose size $|\calD|$ is larger than $\frac{100\log(T/\delta)}{\epsilon}$, is drawn i.i.d. from the distribution $\calP$.
    Setting $\tau=G\log(nmd/\omega)/\sqrt{m}$
    then the final output $\bar{x}$ of Algorithm~\ref{alg:dpsgd} satisfies that
    \begin{align*}
        \E[F(\bar{x})-F(x)]\lesssim \left(\beta+\frac{1}{\eta} \right)\frac{\E[\|x_0-x\|^2]}{T}+\frac{\eta G^2d}{Bm}+GDd\omega.
    \end{align*}
\end{restatable}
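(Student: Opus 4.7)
The plan is to split the analysis into a \emph{good event} $\mathcal{E}$ on which Algorithm~\ref{alg:dpsgd} behaves like ordinary non-private SGD with unbiased mini-batch gradients, and its complement on which we apply a crude worst-case bound. I would define $\mathcal{E}$ as the event that
\[
\|q_t(Z_{i,t}) - \nabla F(x_{t-1})\|_\infty \;\le\; c\,G\sqrt{\tfrac{\log(nmd/\omega)}{m}} \qquad \forall\, t\in[T],\; i\in[B].
\]
Because the sub-datasets $\{\calD_i\}_{i\in[B]}$ are disjoint, the batch $\{Z_{i,t}\}_{i\in[B]}$ used at step $t$ is independent of the iterate $x_{t-1}$, which depends only on earlier batches. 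Each coordinate of $\nabla f$ is bounded by $G$ (since $\ell_1$-Lipschitzness implies $\|\nabla f\|_\infty\le G$), so Hoeffding together with a union bound over the $TBd$ coordinates gives $\Pr[\mathcal{E}]\ge 1-\omega$.

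On $\mathcal{E}$, the choice $\tau=G\log(nmd/\omega)/\sqrt{m}$ has two key deterministic consequences. First, every pairwise distance $\|q_t(Z_{i,t})-q_t(Z_{j,t})\|_\infty$ is $O(G\sqrt{\log/m})$, so every exponent inside $\qcon_t$ is $O(1)$ and the score exceeds $0.9B+\tfrac{2\log(T/\delta)}{\epsilon}=\upsilon$; by the accuracy guarantee of $\AboTh$ (using $B\ge 100\log(T/\delta)/\epsilon$) every $a_t=\top$, so the output is $\bar x$ rather than the fallback $x_0$. Second, by the triangle inequality both $\bx[j]$ and $X_{\rs}[j]$ (the latter via property~(i) of Assumption~\ref{assum:prop_geo_median} applied with $X'=\nabla F(x_{t-1})$) lie within $O(G\sqrt{\log/m})$ of $\nabla F(x_{t-1})[j]$, hence within $\varsigma=1/\tau$ of one another, so Algorithm~\ref{alg:robust_gradient_est} returns the raw mean. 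Therefore $g_{t-1}=(Bm)^{-1}\sum_{i,z}\nabla f(x_{t-1};z)$ is conditionally unbiased for $\nabla F(x_{t-1})$ with $\E\|g_{t-1}-\nabla F(x_{t-1})\|_2^2\le O(G^2 d/(Bm))$ (each coordinate has variance $O(G^2/(Bm))$ since entries are bounded by $G$).

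With unbiasedness and the variance bound in hand, I would invoke the standard convergence result for projected SGD on $\beta$-smooth convex functions (Lemma~\ref{lm:sgd_smooth}) to obtain, on the event $\mathcal{E}$,
\[
\E\bigl[F(\bar x)-F(x)\,\big|\,\mathcal{E}\bigr]\;\lesssim\;\Bigl(\beta+\tfrac{1}{\eta}\Bigr)\frac{\|x_0-x\|^2}{T}+\frac{\eta G^2 d}{Bm}.
\]
On $\mathcal{E}^c$, the algorithm's output lies in $\calX$ (either $\bar x$ or $x_0$), and $G$-Lipschitzness in the $\ell_1$-norm together with the $\ell_1$-diameter bound $2Dd$ on $\calX$ gives $F(\bar x)-F(x)\le 2GDd$ deterministically. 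Combining through the law of total expectation with $\Pr[\mathcal{E}^c]\le\omega$ yields the claimed bound with the additive $O(GDd\omega)$ term.

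The main obstacle is the numerical bookkeeping needed to ensure that both deterministic implications on $\mathcal{E}$ fire simultaneously: the concentration radius $O(G\sqrt{\log/m})$ must fit inside $\varsigma=1/\tau$ so the raw mean is returned, the exponents in $\qcon_t$ must be small enough that the score crosses $\upsilon$, and the $\AboTh$ accuracy slack $\tfrac{2\log(T/\delta)}{\epsilon}$ must be absorbable given $B\ge 100\log(T/\delta)/\epsilon$. Once these numerical alignments are checked against the parameter choices of Algorithms~\ref{alg:dpsgd} and~\ref{alg:out_rem}, the rest of the argument reduces to the classical SGD analysis plus a single union bound.
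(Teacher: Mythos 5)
Your proposal matches the paper's proof: both condition on the high-probability event that every per-user average gradient $q_t(Z_{i,t})$ lies within $\Tilde{O}(G/\sqrt{m})$ of $\nabla F(x_{t-1})$ (via a Hoeffding-type bound plus a union bound over $t,i$), observe that on this event all concentration tests return $\top$ and Algorithm~\ref{alg:robust_gradient_est} outputs the raw mini-batch mean, invoke Lemma~\ref{lm:sgd_smooth} with per-step variance $G^2 d/(Bm)$, and charge the failure event the worst-case value to produce the $GDd\omega$ term. The only cosmetic difference is that the paper packages the failure-event accounting as a total-variation bound between $\{g_{t-1}\}_{t}$ and the unperturbed mini-batch means rather than an explicit law-of-total-expectation split, but the content is the same.
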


Now we apply the localization framework.
We set $\omega=1/(nmd)^3$ to make the term depending on it negligible.
The proof of the following lemma mostly follows from \cite{FKT20}.

\begin{restatable}[Localization]{lem}{Localization}
    \label{lm:localization}
Under Assumption~\ref{assum:lispchitz_smooth} and Assumption~\ref{assump:dia_dominant}, suppose $\beta\le\frac{G}{D}(\frac{\sqrt{n}\epsilon}{\sqrt{m}\log(nmd/\delta})$, $n\ge \log^2(nd/\delta)/\epsilon, \epsilon\le O(1)$ and $ m\le n^{O(\log\log n)}$.
Set $\eta\le\frac{D}{G}\cdot \min\{ \frac{B\sqrt{m}}{\sqrt{n}} ,  \frac{\sqrt{m}\epsilon}{\sqrt{d\log(1/\delta)\log( nmd)}}\}$, $B=100\log(mnd/\delta)/\epsilon$, $\tau=O(G\log(nmd)/\sqrt{m})$ and $\upsilon=0.9B+\frac{2\log(T/\delta)}{\epsilon}$.
If the dataset is drawn i.i.d. from the distribution $\calP$,
the final output $x_S$ for Algorithm~\ref{alg:loacalizatioin} satisfies
\begin{align*}
    \E[F(x_S)-F(x^*)]\le \Tilde{O}\Big(GD\Big(\frac{d}{\sqrt{mn}}+\frac{d^{3/2}}{n\epsilon^2\sqrt{m}}\Big)\Big).
\end{align*}
\end{restatable}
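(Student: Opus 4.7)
The plan is to cascade the single-pass utility bound of Lemma~\ref{lm:dpsgd_utility} through the localization framework of \cite{FKT20}. I partition the dataset into $S = \lceil \log_2 n\rceil$ disjoint phases with $n_k = \Theta(n/2^k)$ users in phase $k$. In phase $k$, I initialize at the previous private iterate $x_{k-1}$, run Algorithm~\ref{alg:dpsgd} with step size $\eta_k = \eta/4^k$ and batch size $B$ to obtain $\bar x_k$, and release $x_k = \Pi_{\calX}(\bar x_k + \zeta_k)$ for a Gaussian $\zeta_k \sim \calN(0,\sigma_k^2 I_d)$ calibrated to the iteration sensitivity of $\bar x_k$. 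Overall $(\epsilon,\delta)$-user-level DP follows from Lemma~\ref{lm:privacy_guarantee} applied at each phase together with advanced composition across the $S$ phases, giving per-phase budget $\epsilon_0 = \Theta(\epsilon/\sqrt{S\log(1/\delta)})$.

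Next I would bound the per-phase excess risk. Lemmas~\ref{lm:ite_sensitivity_base} and \ref{lm:iteration_sensitivity} give $\|\bar x_k - \bar y_k\|_\infty = O(\eta_k G\log(nmd)/\sqrt m)$ on aligned neighboring datasets, which inflates by $\sqrt d$ upon conversion to $\ell_2$-sensitivity, so the Gaussian mechanism requires per-coordinate standard deviation $\sigma_k = O(\sqrt{d\log(1/\delta)}\,\eta_k G\log(nmd)/(\sqrt m\,\epsilon_0))$. Feeding $\bar x_k$ into Lemma~\ref{lm:dpsgd_utility}, and using the smooth-convex Taylor expansion together with the diagonal dominance of the Hessian to deduce $\E[F(x_k) - F(\bar x_k)] \le O(\beta d\sigma_k^2)$, yields
\begin{equation*}
    \E[F(x_k) - F(x^*)] \;\lesssim\; \Big(\beta + \tfrac{1}{\eta_k}\Big)\frac{R_{k-1}^2}{T_k} + \frac{\eta_k G^2 d}{Bm} + \beta d\sigma_k^2,
\end{equation*}
with $T_k = n_k/B$ and $R_{k-1}^2 = \E\|x_{k-1}-x^*\|_2^2$.

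Third, I would close the recursion on $R_k^2$. Using $\|x_k - x^*\|_2^2 \le 2\|\bar x_k - x^*\|_2^2 + 2\|\zeta_k\|_2^2$, the non-expansive SGD step inequality for smooth convex $F$, and $\E\|\zeta_k\|_2^2 = d\sigma_k^2$, one obtains $R_k^2 \le R_{k-1}^2/2 + O(\eta_k^2 G^2 T_k d/(Bm) + d\sigma_k^2)$. The geometric schedule $\eta_k \propto 4^{-k}$ and $n_k \propto 2^{-k}$ is chosen so that $R_k^2$ contracts by a constant factor per phase until it reaches a noise floor, and a straightforward induction then bounds $R_S^2$ and $\E[F(x_S) - F(x^*)]$ by the last phase's contribution. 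Substituting the prescribed values of $\eta$, $B$, $\tau$, $\upsilon$, and $\omega = 1/(nmd)^3$ makes the optimization term $\Tilde{O}(GDd/\sqrt{nm})$ and the aggregated noise term $\Tilde{O}(GDd^{3/2}/(n\epsilon^2\sqrt m))$, matching the claim.

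The main obstacle is the triple-norm bookkeeping: iteration sensitivity is proved in $\ell_\infty$, the Gaussian mechanism is driven by $\ell_2$-sensitivity, and the loss is $\ell_1$-Lipschitz with $\ell_\infty$-operator-norm smoothness. Converting between these norms costs a $\sqrt d$ factor, and using smoothness rather than Lipschitzness for the noise cost gives the quadratic $\sigma_k^2$ scaling that produces the $1/\epsilon^2$ in the final rate. The $\beta$-hypothesis in the theorem is exactly what is required to absorb the $\beta R_{k-1}^2/T_k$ contribution into the leading $(1/\eta_k) R_{k-1}^2/T_k$ term, and the geometric schedule is what prevents the across-phase noise contributions from compounding beyond logarithmic factors.
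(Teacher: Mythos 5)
Your overall architecture (disjoint phases, geometrically shrinking step sizes, Gaussian noise calibrated to the per-phase sensitivity, Lemma~\ref{lm:dpsgd_utility} as the per-phase workhorse) matches the paper's, but the step where you close the recursion contains a genuine gap. You claim $R_k^2 \le R_{k-1}^2/2 + O(\cdot)$ where $R_k^2 = \E\|x_k - x^*\|_2^2$, citing a ``non-expansive SGD step inequality for smooth convex $F$.'' No such contraction holds: for a merely convex (not strongly convex) objective, averaged SGD decreases the \emph{function value} toward that of the comparator, but gives no constant-factor contraction of the \emph{distance} to $x^*$ (consider a nearly flat function where the iterates barely move). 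Since the theorem assumes only convexity and smoothness, your induction on $R_k^2$ cannot be completed as stated, and the ``noise floor'' argument collapses.

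The paper's proof sidesteps this entirely, and this is the actual content of the FKT-style localization trick: it telescopes function values, writing $\E[F(x_S)]-F(x^*) = \sum_{s=1}^S \E[F(\bx_s)-F(\bx_{s-1})] + \E[F(x_S)-F(\bx_S)]$ with $\bx_0 = x^*$, and applies Lemma~\ref{lm:dpsgd_utility} in phase $s$ with comparator $x = \bx_{s-1}$. The initial radius entering the phase-$s$ bound is then $\E\|x_{s-1}-\bx_{s-1}\|_2^2 = \E\|\zeta_{s-1}\|_2^2 = d\sigma_{s-1}^2$, i.e., only the \emph{injected noise} from the previous phase --- which shrinks geometrically by construction since $\sigma_s \propto \eta_s$ and $\eta_s$ decays geometrically --- not the distance to the optimum. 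No contraction of $\|x_k - x^*\|$ is ever needed. Two smaller points: the phases operate on disjoint users, so privacy follows from parallel composition and your advanced-composition budget split $\epsilon_0 = \Theta(\epsilon/\sqrt{S\log(1/\delta)})$ is unnecessary (though only polylog-wasteful); and the $1/\epsilon^2$ in the final rate arises from the term $\E\|\zeta_{s-1}\|_2^2/(\eta_s T_s)$ combined with $T_s = n_s/B$ and $B \propto 1/\epsilon$, not from a second-order $\beta d\sigma_k^2$ noise cost --- the paper charges the last phase's noise via Lipschitzness as $G\,\E\|\zeta_S\|_2$ and makes it negligible using $m \le n^{O(\log\log n)}$.
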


\noindent\textbf{ Main Result:}
Theorem~\ref{thm:main_result} directly follows from Lemma~\ref{lm:localization} and Lemma~\ref{lm:privacy_guarantee}.

% \begin{lemma}[Localization]
% \label{lm:localization}
% Under Assumption~\ref{assum:lispchitz_smooth} and Assumption~\ref{assump:dia_dominant},
% the final output $x_k$ for Algorithm~\ref{alg:loacalizatioin} satisfies that
% \begin{align*}
%     \E[F(x_k)-F(x^*)]\le \Tilde{O}(\frac{\sqrt{d}}{\sqrt{mn\epsilon}}+\frac{d}{n\epsilon\sqrt{m}}).
% \end{align*}
% \end{lemma}

% \begin{proof}
% We need $\eta\le\min\{ \frac{K}{\epsilon\sqrt{nmd}} ,   \frac{K\epsilon}{d\sqrt{m}}\}$.
% Let $\bx_0=x^*$ and $\zeta_0=x_0-x^*$.
% By the assumption, we know $\|\zeta_0\|_2\le D\sqrt{d}$.
% Recall that by definition $\eta\le \frac{D}{G}\cdot\frac{K\epsilon}{d\sqrt{m}}$, for all $t\ge 0$,
% \begin{align*}
%     \E[\|\zeta_t\|_2^2]=d\sigma_t^2=\frac{G^2d^2m}{K^2\epsilon^2}\cdot\frac{D^2K^2\epsilon^2}{mdG^2(\log m)^{2t}}\le (\frac{D}{\log^{-t} m})^2.
% \end{align*}
% Then by Lemma~\ref{lm:dpsgd_utility}, we have
% \begin{align*}
%     \E[F(x_k)]-F(x^*)=&\sum_{t=1}^{k}\E[F(\bx_{t}-\bx_{t-1})]+\E[F(x_k)-F(\bx_k)]\\
%     \le & \sum_{t=1}^{k}(\frac{\E[\|\zeta_{i-1}\|_2^2]}{\eta_i(T_im/K)}+\frac{\eta_iG^2d}{2K})+G\E[\|\zeta_k\|_2]\\
%     \le& \sum_{i=1}^{k}(\frac{\log m}{2})^{-i}(\frac{D^2}{\eta nm\epsilon/K}+\frac{\eta G^2d}{2K})+\frac{GD}{(\log m)^{\log n}}\\
%     \lesssim &   GD(\frac{\sqrt{d}}{\sqrt{nm}}+\frac{d}{n\sqrt{m}\epsilon}).
% \end{align*}
% \end{proof}

\section{Lower Bound}
This section presents our main lower bound. As stated above, the lower bound is nearly tight, apart from lower-order terms and the dependency on $\eps$.

\begin{theorem}
\label{thm:lb}
There exists a distribution $\calP$ and a loss function $f$  satisfying Assumption~\ref{assum:lispchitz_smooth} and Assumption~\ref{assump:dia_dominant}, such that for any $(\epsilon,\delta)$-User-level-DP algorithm $\calM$, given i.i.d. dataset $\calD$ drawn from $\calP$, the output of $\calM$ satisfies
\begin{align*}
    \E[F(\calM(\calD))-F(x^*)]\ge GD\cdot \Tilde{\Omega}\Big(\min\Big\{d,\frac{d}{\sqrt{mn}}+\frac{d^{3/2}}{n\epsilon\sqrt{m}}\Big\}\Big).
\end{align*}
%where $F(x):=\E_{z\sim \calP}f(x;z)$ and $x^*=\arg\min_{x\in\calX}f(x)$.
\end{theorem}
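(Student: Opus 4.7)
The plan is to reduce the optimization lower bound to a user-level mean-estimation lower bound via a linear loss, and then establish the two terms by a standard statistical (Le Cam-type) argument and a user-level adaptation of the fingerprinting lemma, respectively. Concretely, take $f(x;z) = -\langle z, x\rangle$ on $\calX = \{x\in\R^d : \|x\|_\infty \le D\}$ with $\|z\|_\infty \le G$. Being linear, $f$ has zero Hessian, so it is trivially $\beta$-smooth and diagonally dominant for every $\beta$, and $\|\nabla f(x;z)\|_\infty = \|z\|_\infty \le G$ gives the $G$-Lipschitz bound in $\ell_1$. Writing $\mu := \E_{z\sim\calP}[z]$, we have $F(x) = -\langle \mu, x\rangle$, $x^* = D\,\sign(\mu)$, and
\[
F(\hat x) - F(x^*) \;=\; \sum_{j=1}^d |\mu_j|\,\bigl(D - \hat x_j\,\sign(\mu_j)\bigr),
\]
so misidentifying $\sign(\mu_j)$ costs $\Omega(|\mu_j|D)$ per coordinate. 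It therefore suffices to exhibit a prior on $\mu$ under which no $(\epsilon,\delta)$-DP algorithm can recover most of its signs.

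For the $GDd/\sqrt{mn}$ term, let each coordinate of $z$ be independently $\pm G$ with biases $b_1,\ldots,b_d$ i.i.d.\ on $\{-c/\sqrt{mn},\,+c/\sqrt{mn}\}$ for a small absolute constant $c$, so $\mu_j = 2Gb_j$. Coordinate $j$ is then observed $nm$ times at noise scale $\Theta(G)$, and a standard Le Cam / Bayes-risk argument (one coordinate at a time) shows that no estimator, private or not, identifies $\sign(b_j)$ with probability exceeding $1/2 + o(1)$. Each coordinate contributes expected loss $\Omega(GD/\sqrt{mn})$, and independence across coordinates gives $\Omega(GDd/\sqrt{mn})$ overall.

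For the $GDd^{3/2}/(n\epsilon\sqrt{m})$ term, sample a hidden label $v\in\{\pm 1\}^d$ uniformly, and for each user $i\in[n]$ draw $m$ i.i.d.\ bounded samples whose coordinate-wise mean is $\alpha G v$ (for example, independent $\pm G$ coordinates with per-coordinate bias $\alpha v_j$), where $\alpha\in(0,1)$ is a free parameter. Each user's $m$-sample average $\bar z_i$ is then concentrated at $\alpha G v$ with per-coordinate scale $\Theta(G/\sqrt m)$, i.e.\ the user-level setting behaves like an item-level problem with effective per-sample variance $G^2/m$. Now apply the fingerprinting lemma (Bun--Ullman--Vadhan / Steinke--Ullman) to the $n$ user-averages: any $(\epsilon,\delta)$-DP algorithm whose output $\hat x$ agrees in sign with $v$ on all but a constant fraction of coordinates forces $n\epsilon \;\gtrsim\; \tilde\Omega\!\bigl(\sqrt{d}/(\alpha\sqrt{m})\bigr)$. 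Choosing $\alpha := c'\sqrt{d}/(n\epsilon\sqrt m)$ (clipped below $1$ so the Lipschitz/boundedness assumptions remain valid) forces sign errors on a constant fraction of coordinates, each costing $\Omega(\alpha GD)$, totalling $\tilde\Omega\!\bigl(GDd^{3/2}/(n\epsilon\sqrt m)\bigr)$.

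Combining the two terms (they use independent randomness in $\mu$, so one can run both constructions in parallel, or simply take the worse of the two) yields the sum inside the $\min$, while the outer $\min\{d,\cdot\}$ is the trivial ceiling $F(\hat x) - F(x^*)\le 2GDd$ coming from $\|\mu\|_1\le Gd$ and $\|\hat x\|_\infty\le D$. The main obstacle is the fingerprinting step: the $1/\sqrt{m}$ improvement must be obtained by running the fingerprinting inequality against the $n$ user-averages, not against the $nm$ individual items; one must check that the standard score-function / correlation computation still goes through when each ``sample'' is a $m$-fold average with variance $G^2/m$. This is intuitively fine because the user-average is a sufficient statistic for the user's contribution to $\mu$, but it is the part of the proof requiring the most care. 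Once this user-level fingerprinting inequality is in place, the reduction to optimization via the linear loss above and the summation over coordinates are routine.
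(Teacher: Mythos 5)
Your proposal follows essentially the same route as the paper: the linear loss $f(x;z)=-\langle z,x\rangle$ on an $\ell_\infty$-ball (trivially satisfying smoothness and diagonal dominance), the reduction of excess risk to a weighted sign-estimation error, the passage from user-level to item-level by replacing each user with its $m$-sample average (justified by sufficiency), and a fingerprinting argument for the privacy term, with the statistical term $GDd/\sqrt{mn}$ handled by a standard non-private argument (the paper simply cites it). The one place you diverge is in how the $1/(n\epsilon)$ factor is extracted: you shrink the per-coordinate bias to $\alpha \asymp \sqrt{d}/(n\epsilon\sqrt{m})$ and run fingerprinting at that scale, whereas the paper keeps the prior on $\mu[k]$ at constant scale (uniform on $[-1,1]$), proves a sample-complexity bound $n\gtrsim \sqrt{d}/\epsilon$ via the Kamath et al.\ fingerprinting lemma, and then converts it to an error bound by padding the distribution with a point mass at $0$ (plus a Gaussian truncation step to enforce $\|z\|_\infty\le \Tilde{O}(\sqrt{m})$). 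Both are standard; your version needs a small-bias variant of the fingerprinting lemma (the lemma as stated in the paper uses a uniform prior on $[-1,1]$ and the identity $\E[(1-p^2)g'(p)]=\E[g(p)p]$, which does not directly apply to a two-point prior at $\pm\alpha$), while the paper's version instead needs the padding and truncation bookkeeping. You correctly identify the user-average fingerprinting computation as the step requiring care, and the sufficiency argument you gesture at is exactly how the paper discharges it.
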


The non-private term $GD\frac{d}{\sqrt{mn}}$ represents the information-theoretic lower bound for SCO under these assumptions (see, e.g., Theorem 1 in \cite{agarwal2009information}).

We construct the hard instance as follows:
let $\calX=[-1,1]^d$ be unit $\ell_\infty$-ball and let $f(x;z)=-\langle x,z\rangle$ for any $x\in \calX$ be the linear function.
Let $z\in[-\sqrt{m},\sqrt{m}]^d$  with $\E_{z\sim\calP}[z]=\mu$.
Then one can easily verify that $f$ satisfies Assumptions~\ref{assum:lispchitz_smooth} and~\ref{assump:dia_dominant} with $G=\sqrt{m},D=1$ and $\beta=0$.
We have
\begin{align}
    F(\calM(\calD))-F(x^*) &= \sum_{i=1}^{d} (\sign(\mu[i])-\calM(\calD)[i])\cdot\mu[i]
   \nonumber \\ &\ge \sum_{i=1}^{d}  |\mu[i]|.\ind\big(\sign(\mu[i])\neq\sign(\calM(\calD)[i])\big). \label{eq:opt_error_to_sign_error}
\end{align}

By~\eqref{eq:opt_error_to_sign_error}, we reduce the optimization error to the weighted sign estimation error.  
Most existing lower bounds rely on the $\ell^2_2$-error of mean estimation.  
We adapt their techniques, especially the fingerprinting lemma, and provide the proof in the Appendix~\ref{sec:lbproof}.
\section{Conclusion}
\label{sec:discussion}

We present a linear-time algorithm for user-level DP-SCO that leverages (coordinate-wise) robust statistics in the gradient estimation subprocedure and provide a lower bound that nearly matches the upper bound up to logarithmic terms and an additional dependence on $\epsilon$. Despite this progress, several limitations and open problems remain, some of which we highlight below.
%, which are discussed in Appendix~\ref{sec:limitation}.

\begin{itemize}
    \item \textbf{Generalization to Euclidean and Other Spaces.}  
    Extending our results to Euclidean and other spaces is an interesting but technically challenging problem. One key challenge is the lack of robust statistics that are $1$-Lipschitz under perturbations in the high-dimensional $\ell_2$-norm (see the second item in Assumption~\ref{assum:prop_geo_median}).
    One may wonder whether commonly used robust statistics, such as the geometric median, are stable in this sense. However, we provide a simple counterexample involving the geometric median in the Appendix (Section~\ref{sec:counter_example}).  
    
    Another challenge arises from our approach of projecting the mean towards the robust statistic to ensure unbiased gradient estimation. This projection is $1$-Lipschitz under perturbations in one dimension (Lemma~\ref{lm:proj_mean_to_rs}), but there are known counterexamples in higher dimensions \cite[Lemma 16]{alt24}. Overcoming these issues is crucial for extending our method to general spaces.

    \item \textbf{ Additional Assumption on Diagonal Dominance.}  
    Our analysis assumes that the Hessians of functions in the universe are diagonally dominant, which is primarily used to show that gradient descent is contractive in the $\ell_\infty$-norm (Lemma~\ref{lm:contractivity}). This assumption is somewhat restrictive compared to the $\ell_2$-norm setting, where it is sufficient to assume that the operator norm of the Hessians is bounded (i.e., smoothness). Addressing the aforementioned challenges and generalizing our results to the Euclidean space could potentially eliminate this additional assumption.

    \item \textbf{ Suboptimal Dependence on $\epsilon$.}  
    Although our upper bound nearly matches the lower bound, it has a suboptimal dependence on $\epsilon$. This issue arises from the loose dependence on sensitivity. Specifically, we draw $B$ users at each step and compute their average gradient, with $B = \Tilde{O}(1/\epsilon)$. However, the final sensitivity remains roughly $\Tilde{O}(1/\sqrt{m})$ and does not improve with larger $B$. An improvement in the dependence on $\epsilon$ could be achieved if the sensitivity of the robust statistic could benefit from larger $B$.
\end{itemize}

Finally, it would be interesting to explore the use of robust statistics, such as the median used in this work, to address other private optimization problems.

\addcontentsline{toc}{section}{References}
\bibliographystyle{alpha}
\bibliography{ref}
\newpage

\appendix

\newpage
\section{Preliminaries}
\label{subsec:prel}

\subsection{Differential Privacy}
\begin{definition}[User-level DP]
    We say a mechanism $\calM$ is \emph{$(\epsilon,\delta)$-user-level DP}, if for any neighboring datasets $\calD$ and $\calD'$ that differ from one user, and any output event set $\calO$, we have
    \begin{align*}
        \Pr[\calM(\calD)\in\calO]\le e^\epsilon\Pr[\calM(\calD')\in\calO]+\delta.
    \end{align*}
\end{definition}

\begin{proposition}[Gaussian Mechanism]
\label{prop:GM}
Consider a function $f:\calP^*\to\R^d$.
If $\max_{\calD\sim\calD'}\|f(\calD)-f(\calD')\|_2\le \Delta$, where $\calD\sim\calD'$ means $\calD$ and $\calD'$ are neighboring datasets, then the Gaussian mechanism
\begin{align*}
    \calM(\calD):= f(\calD)+\zeta,
\end{align*}
where $\zeta\sim\calN(0,\sigma^2I_d)$ with $\sigma^2\ge \frac{2\Delta^2\log(1.25/\delta)}{\epsilon^2}$ is $(\epsilon,\delta)$-DP.
\end{proposition}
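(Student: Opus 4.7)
The plan is to reduce the $d$-dimensional privacy analysis to a one-dimensional calculation, then apply a standard tail bound on the Gaussian privacy loss random variable. For any pair of neighboring datasets $\calD, \calD'$, let $v := f(\calD) - f(\calD')$, so $\|v\|_2 \le \Delta$ by hypothesis. Because $\calN(0, \sigma^2 I_d)$ is rotationally invariant, the likelihood ratio of $\calM(\calD) = f(\calD) + \zeta$ versus $\calM(\calD') = f(\calD') + \zeta$ at any output $y$ depends only on the component of $y - f(\calD)$ along the direction of $v$. Projecting onto this direction, it suffices to prove an $(\epsilon, \delta)$-DP guarantee for the one-dimensional comparison between $\calN(0, \sigma^2)$ and $\calN(\mu, \sigma^2)$ with $\mu = \|v\|_2 \le \Delta$.

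In one dimension, I would analyze the privacy loss random variable directly. For $Y \sim \calN(0, \sigma^2)$, define
\[
L(Y) \;:=\; \log\frac{p_{\calN(0,\sigma^2)}(Y)}{p_{\calN(\mu,\sigma^2)}(Y)} \;=\; \frac{2\mu Y - \mu^2}{2\sigma^2}.
\]
A short calculation shows $L(Y) \sim \calN\bigl(-\mu^2/(2\sigma^2),\, \mu^2/\sigma^2\bigr)$, i.e., the privacy loss is Gaussian with negative mean, concentrating well below $\epsilon$ provided $\sigma$ is large enough relative to $\mu/\epsilon$.

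The final step is the standard reduction: for any event $\calO$,
\[
\Pr[\calM(\calD) \in \calO] \;\le\; e^\epsilon\,\Pr[\calM(\calD') \in \calO] \;+\; \Pr[L(Y) > \epsilon],
\]
obtained by splitting $\calO$ according to whether the privacy loss at that output exceeds $\epsilon$. It therefore suffices to show $\Pr[L(Y) > \epsilon] \le \delta$ whenever $\sigma^2 \ge 2\Delta^2 \log(1.25/\delta)/\epsilon^2$. This reduces to a Gaussian tail bound on $L(Y)$, which I would carry out monotonically in $\mu$ so that the worst case $\mu = \Delta$ can be assumed without loss of generality. The main subtlety, and the reason for the slightly exotic constant $1.25$, is that looser estimates such as $e^{-t^2/2}$ are not sharp enough to obtain it; one must use the sharper Gaussian Q-function estimate (or an equivalent integral bound) on $\Pr[L(Y) > \epsilon]$. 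This is the classical calculation of Dwork--Kenthapadi--McSherry--Mironov--Naor, and I would follow it directly; no new ideas beyond those lemmas earlier in the paper are needed here.
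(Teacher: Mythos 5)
The paper states this proposition without proof, importing it as the classical Gaussian-mechanism theorem of Dwork--Roth, and your outline follows exactly that classical argument: project onto the direction of $v=f(\calD)-f(\calD')$ using rotational invariance, analyze the one-dimensional privacy loss random variable, split the output event according to whether the loss exceeds $\epsilon$, and finish with a sharp Gaussian tail bound at the worst case $\mu=\Delta$. All of those steps are the right ones.

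One concrete error: your formula for the privacy loss has the wrong sign. With $Y\sim\calN(0,\sigma^2)$,
\[
L(Y)=\log\frac{p_{\calN(0,\sigma^2)}(Y)}{p_{\calN(\mu,\sigma^2)}(Y)}=\frac{(Y-\mu)^2-Y^2}{2\sigma^2}=\frac{\mu^2-2\mu Y}{2\sigma^2},
\]
so $L(Y)\sim\calN\bigl(+\mu^2/(2\sigma^2),\,\mu^2/\sigma^2\bigr)$: the mean is the KL divergence between the two Gaussians and is necessarily \emph{positive}, not negative as you claim. This matters for the step you then wave at. The tail event becomes $\Pr\bigl[Z>\sigma\epsilon/\mu-\mu/(2\sigma)\bigr]$ for standard normal $Z$, with the mean term working \emph{against} you, and it is precisely this correction --- together with the restriction $\epsilon\le 1$, which the classical statement requires and which the proposition here implicitly inherits (the paper assumes $\epsilon\le O(1)$ wherever it invokes this) --- that forces the constant $1.25$ and the careful use of the Gaussian Q-function bound. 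With your sign the inequality would appear strictly easier than it actually is, and a verifier carrying out your sketch literally would "prove" a false intermediate claim. The fix is mechanical, and the remaining structure (the $e^\epsilon/\delta$ splitting of $\calO$ and the monotonicity in $\mu$ justifying the reduction to $\mu=\Delta$) is sound.
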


\subsubsection{AboveThreshold}
Our algorithms use the AboveThreshold algorithm~\citep{DR14},  which is a key tool in DP to identify whether there is a query $q_i: \calZ \to \reals$ in a stream $q_1,\dots,q_T$ of queries 
 that is above a certain threshold $\Delta$. 
The $\AboTh$ algorithm (Algorithm~\ref{alg:mean_est_with_AT} presented in the Appendix) has the following guarantees:

\begin{lemma}[\cite{DR14}, Theorem 3.24]
\label{thm:Above_Threshold}
    $\AboTh$ is $(\epsilon,0)$-DP.
    Moreover, let $\alpha=\frac{8\log(2T/\gamma)}{\epsilon}$ and $\calD \in \calZ^n$. For any sequence $q_1,\cdots,q_T : \calZ^n \to \reals$ of $T$ queries each of sensitivity $1$, $\AboTh$ halts at time $k \in [T+1]$ such that with probability at least $1-\gamma$,
    \begin{itemize}
        \item For all $t < k$, $a_t =\top$ and $q_t(\calD) \ge \Delta - \alpha$;
        \item $a_k = \bot$ and $q_k(\calD) \le \Delta + \alpha$ or $k = T+1$.
    \end{itemize} %[nosep]
\end{lemma}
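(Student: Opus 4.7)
The plan is to prove the two claims—pure $(\epsilon,0)$-privacy and the $\alpha$-accuracy guarantee—separately, both by the classical analysis of the sparse-vector technique in its halt-on-first-crossing form. I will take as given that $\AboTh$ samples a noisy threshold $\hat\Delta = \Delta + \rho$ with $\rho \sim \Lap(2/\epsilon)$ once at the start, and then, for each incoming query $q_t$ of sensitivity $1$, samples an independent $\nu_t \sim \Lap(4/\epsilon)$, continuing while $q_t(\calD)+\nu_t$ lies on the safe side of $\hat\Delta$ and halting at the first crossing.

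For privacy, I would fix an arbitrary halt time $k$ and the unique trajectory that halts at $k$, then bound its probability ratio under neighboring $\calD,\calD'$. Conditioning on $\rho$, the trajectory probability factors into $k-1$ ``continue'' probabilities and one ``halt'' probability over independent $\nu_1,\dots,\nu_k$. The central move is a change of variables that shifts $\rho \mapsto \rho+1$ while leaving every $\nu_t$ for $t<k$ in place. Because each $q_t$ has sensitivity $1$, the unit shift in $\hat\Delta$ absorbs the worst-case change $q_t(\calD)\mapsto q_t(\calD')$ inside every continuing event, at zero cost in the integrand. The price is only the Laplace density ratios: a factor $e^{\epsilon/2}$ for the unit shift in $\rho$ at scale $2/\epsilon$, and a further $e^{\epsilon/2}$ from absorbing a shift of $2$ (one unit of threshold shift plus one unit of query sensitivity in the opposite direction) for the single halting event via $\nu_k\sim\Lap(4/\epsilon)$. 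Multiplying yields the advertised $e^\epsilon$ ratio, and critically the bound is $k$-free, hence $T$-free, because only the single halting step pays a cost through its $\nu_t$.

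For accuracy, I would set $\alpha = 8\log(2T/\gamma)/\epsilon$ and union-bound the $T+1$ Laplace tails. Using $\Pr[|X|\ge b\log(1/\gamma')]\le \gamma'$ for $X\sim\Lap(b)$—applied once to $\rho$ at level $\gamma/2$ and once per query to $\nu_t$ at level $\gamma/(2T)$—I get simultaneously $|\rho|\le(2/\epsilon)\log(2/\gamma)$ and $\max_{t\le T}|\nu_t|\le(4/\epsilon)\log(2T/\gamma)$ with failure probability at most $\gamma$. On this event the noisy test and the clean test $q_t(\calD)\gtrless\Delta$ disagree by at most $|\rho|+\max_t|\nu_t|\le\alpha$, which directly yields both bullets: if the algorithm is still running at some $t<k$, then $q_t(\calD)$ cannot sit more than $\alpha$ on the wrong side of $\Delta$; at the halting step $k$, the reverse inequality holds by the same margin.

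The only subtle point, and the one most commonly mishandled in this proof, is the change-of-variables step in the privacy argument. One must verify that a single unit shift in $\rho$ simultaneously covers the sensitivity-$1$ perturbation of \emph{all} $k-1$ continuing queries, so that none of the $\nu_t$ for $t<k$ need to be reshaped. Establishing this carefully is exactly what decouples the privacy budget from $k$ and from $T$, giving the $(\epsilon,0)$ bound with no $\log T$ factors.
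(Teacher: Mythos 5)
Your proposal is correct and is essentially the standard sparse-vector argument from \cite{DR14} (Theorem 3.24), which is exactly what the paper relies on by citation without reproving it: the privacy bound via a unit change of variables in the threshold noise that absorbs the sensitivity of all continuing queries at once, paying $e^{\epsilon/2}$ for the threshold shift and $e^{\epsilon/2}$ for the single halting query, and the accuracy bound via a union bound over the $T$ per-query Laplace tails plus the threshold tail. No gaps; the one subtlety you flag (that a single shift of $\rho$ covers all $k-1$ continue events simultaneously) is indeed the crux and is handled correctly.
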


\subsection{SubGaussian and Norm-SubGaussian Random Vectors}
\begin{definition}
Let $\zeta > 0$. We say a random vector $X$ is \emph{SubGaussian} ($\mathrm{SG}(\zeta)$) with parameter $\zeta$ if %there exists a positive constant $\zeta$ such that 
$\E[e^{\langle v,X-\E X\rangle}]\le e^{\|v\|^2\zeta^2/2}$ for any $v\in \R^d$.
Random vector $X\in \R^d$ is \emph{Norm-SubGaussian} with parameter $\zeta$ ($\nSG(\zeta)$) if 
%there exists $\zeta>0$ such that 
$\mathbb{P}[\|X-\E X\|_2\ge t]\le 2e^{-\frac{t^2}{2\zeta^2}}$ for all $t > 0$.
\end{definition}

\begin{theorem}[Hoeffding-type inequality for norm-subGaussian, \cite{jin2019short}]
\label{thm:hoeffding_nSG}
    Let $X_1,\cdots,X_k\in\R^d$ be random vectors, and let $\F_i=\sigma(x_1,\cdot,x_i)$ for $i\in[k]$ be the corresponding filtration.
    Suppose for each $i\in[k]$, $X_i\mid \F_{i-1}$ is zero-mean $\nSG(\zeta_i)$. Then, there exists an absolute constant $c>0$, for any $\gamma>0$,
    \begin{align*}
        \mathbb{P}\left[\left\|\sum_{i\in[k]}X_i\right\|_2\ge c\sqrt{\log (d/\gamma)\sum_{i\in[k]}\zeta_i^2}\right]\le \gamma.
    \end{align*}
\end{theorem}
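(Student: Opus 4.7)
The statement is a Hoeffding-style concentration inequality for vector-valued martingale differences whose $\ell_2$-norm has a sub-Gaussian tail. My plan is to first reduce the hypothesis to a one-dimensional sub-Gaussian MGF statement (sub-Gaussian in every direction), then use the tower property to multiply these MGFs along the martingale, and finally convert this direction-wise concentration into concentration of the Euclidean norm via a covering argument on the sphere. The main work is in the first step; everything else is standard.

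\textbf{Step 1 (Key Lemma: direction-wise sub-Gaussian MGF).} I would first show that if $X$ is any zero-mean $\nSG(\zeta)$ random vector and $u\in\R^d$ is arbitrary, then
\begin{align*}
\E\bigl[\exp(\langle u,X\rangle)\bigr]\;\le\;\exp\bigl(c\,\|u\|_2^2\,\zeta^2\bigr)
\end{align*}
for an absolute constant $c>0$. To prove this I expand $\exp(\cdot)$ as a power series, use $|\langle u,X\rangle|\le \|u\|_2\|X\|_2$ to obtain $\E[|\langle u,X\rangle|^k]\le \|u\|_2^k\,\E[\|X\|_2^k]$, and then use the $\nSG(\zeta)$ tail bound to get $\E[\|X\|_2^k]\le (C\zeta\sqrt{k})^k$. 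The zero-mean hypothesis kills the $k=1$ term, and the remaining terms sum to the claimed Gaussian-like MGF (with odd moments handled by Cauchy--Schwarz).

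\textbf{Step 2 (Martingale MGF and scalar concentration).} Conditioning on $\F_{k-1}$, $M_{k-1}:=\sum_{i<k}X_i$ is $\F_{k-1}$-measurable, and Step~1 applied to $X_k\mid\F_{k-1}$ gives $\E[\exp(\langle u,X_k\rangle)\mid\F_{k-1}]\le\exp(c\|u\|_2^2\zeta_k^2)$. Iterating the tower property yields, for every fixed $u\in\R^d$,
\begin{align*}
\E\bigl[\exp(\langle u,M_k\rangle)\bigr]\;\le\;\exp\Bigl(c\,\|u\|_2^2\,\textstyle\sum_{i\in[k]}\zeta_i^2\Bigr).
\end{align*}
A standard Chernoff/Markov step (optimizing the scale of $u$) then shows that for any fixed unit vector $v$ and any $t\ge 0$, $\Pr[|\langle v,M_k\rangle|\ge t]\le 2\exp\bigl(-t^2/(4c\sum_i\zeta_i^2)\bigr)$.

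\textbf{Step 3 (Upgrade to $\|\cdot\|_2$ and the main obstacle).} Finally I would lift the one-dimensional concentration to a bound on $\|M_k\|_2$ using a $\tfrac12$-net $\calN$ on the unit sphere $S^{d-1}$ together with the elementary inequality $\|M_k\|_2\le 2\max_{v\in\calN}\langle v,M_k\rangle$, then union-bound Step~2 over $\calN$. The anticipated difficulty is matching the $\log(d/\gamma)$ factor inside the square root stated in the theorem: the raw net argument gives a cardinality $|\calN|\le 5^d$ and therefore an extra $\sqrt{d+\log(1/\gamma)}$ rather than $\sqrt{\log(d/\gamma)}$. To obtain the sharper logarithmic dependence I plan to replace the raw net step by the vector Bernstein/Chernoff bound of Hsu--Kakade--Zhang applied to the sub-Gaussian-in-every-direction process of Step~2; alternatively, one can truncate each $X_i$ at level $O(\zeta_i\sqrt{\log(d/\gamma)})$ (the tail probability beyond which is controllable) and then apply a coordinate-wise Azuma bound combined with a direct $\|\cdot\|_2$ estimate. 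Either route leverages only Steps~1--2, which do the real work; the remaining calculation is routine book-keeping of constants.
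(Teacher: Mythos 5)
First, note that the paper itself does not prove this statement: it is imported verbatim from Jin et al.\ \citep{jin2019short}, whose proof controls the norm of the martingale sum directly (via a moment-generating-function/matrix-concentration argument in which the dimension enters only through a trace, e.g.\ through the Hermitian dilation $\begin{pmatrix}0 & X_i^\top\\ X_i & 0\end{pmatrix}$ and a matrix Azuma--Hoeffding bound), which is precisely how the factor $\sqrt{\log(d/\gamma)}$ arises.

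Your proposal has a genuine gap, and it is exactly where you anticipate trouble, in Step~3 --- but it is not ``routine book-keeping''; it is unfixable from the inputs you allow yourself. Steps~1--2 deliberately discard the norm-subGaussian structure and retain only direction-wise subGaussianity: from $X\sim\nSG(\zeta)$ you keep only $\E[\exp(\langle u,X\rangle)]\le\exp(c\|u\|_2^2\zeta^2)$. That information alone is consistent with $\|X\|_2\approx\zeta\sqrt{d}$ (take $X\sim\calN(0,\zeta^2 I_d)$, which is subGaussian in every direction with parameter $\zeta$ but is only $\nSG(O(\zeta\sqrt{d}))$), so no argument that uses only Steps~1--2 --- net, chaining, or otherwise --- can produce the bound $c\zeta\sqrt{\log(d/\gamma)}$ even for $k=1$; the best achievable is of order $\zeta\sqrt{d+\log(1/\gamma)}$. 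Both of your proposed rescues inherit this defect: the Hsu--Kakade--Zhang quadratic-form bound applied to a direction-wise subGaussian vector gives $\|M_k\|_2\lesssim\sqrt{(d+\log(1/\gamma))\sum_i\zeta_i^2}$, again with $\sqrt{d}$ rather than $\sqrt{\log d}$; and truncation at $O(\zeta_i\sqrt{\log(d/\gamma)})$ followed by coordinate-wise Azuma controls only $\|\cdot\|_\infty$, so converting to $\|\cdot\|_2$ reintroduces a $\sqrt{d}$ (a Pinelis-type Hilbert-space martingale inequality for the truncated differences would avoid the $\sqrt d$ but yields a product of logarithms, still not the stated form). To prove the theorem as stated you must keep the hypothesis $\Pr[\|X_i\|_2\ge t\mid\F_{i-1}]\le 2e^{-t^2/2\zeta_i^2}$ in play at the concentration step itself, e.g.\ by bounding a matrix MGF of the dilated differences (where the norm condition controls the matrix moments and the dimension appears only as $\log d$ via the trace), as in the cited source. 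Steps~1--2 of your write-up are correct but do not carry the weight of the result.
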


\begin{algorithm2e}
\caption{$\AboTh$}
\label{alg:mean_est_with_AT}
\textbf{ Input:} Dataset $\calD = (Z_1,\dots,Z_n) $, threshold $\Delta \in \reals$, privacy parameter $\epsilon$\;
Let $\hat{\Delta}:= \Delta-\Lap(\frac{2}{\epsilon})$\;
\For{$i=1$ to $T$}
{
Receive a new query $q_i: \calZ^n \to \reals$ \;
Sample $\nu_i \sim \Lap(\frac{4}{\epsilon})$\;
\If{$q_t(\calD)+\nu_i<\hat{\Delta}$}
{
\textbf{ Output:} $a_i=\bot$\;
\textbf{ Halt}\;
\Else{
\textbf{ Output:} $a_i=\top$\;
}
}
}
\end{algorithm2e}

\subsection{Optimization}
\begin{lemma}[\cite{bubeck2015convex}]
\label{lm:sgd_smooth}
Consider a $\beta$-smooth convex function $f$ over a convex set $\calX$.
For any $x\in\calX$, suppose that the random initial point $x_0$ satisfies $\E[\|x_0-x\|_2^2]\le R^2$.
Suppose we have an unbiased stochastic gradient oracle such that $\E\|\Tilde{g}(x_t)-\nabla f(x_t)\|_2^2\le \sigma_t^2$, then running SGD for $T$ steps with fixed step size $\eta$ satisfies that
\begin{align*}
    \E\left[f \left(\frac{1}{T}\sum_{t=1}^{T}x_{t+1} \right)-f(x)\right]\le \left(\beta+\frac{1}{\eta} \right)\frac{R^2}{T} + \frac{\eta\sum_t\sigma_t^2}{2T}.
\end{align*}
\end{lemma}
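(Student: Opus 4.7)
The plan is to derive a one-step progress inequality for the squared potential $\|x_t - x\|_2^2$, telescope it over $t = 0, \ldots, T-1$, and finish with Jensen's inequality on the average iterate. The three ingredients that need to be combined are (i) the non-expansiveness of the Euclidean projection $\Pi_\calX$, (ii) the $\beta$-smoothness descent lemma, and (iii) the unbiasedness and variance bound on $\Tilde{g}$.

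Concretely, I would start from the projection non-expansiveness
\begin{align*}
    \|x_{t+1} - x\|_2^2 \le \|x_t - \eta \Tilde{g}(x_t) - x\|_2^2 = \|x_t - x\|_2^2 - 2\eta \langle \Tilde{g}(x_t), x_t - x\rangle + \eta^2 \|\Tilde{g}(x_t)\|_2^2,
\end{align*}
and take conditional expectation with respect to the filtration $\F_t$ generated by $x_0, \ldots, x_t$. Unbiasedness of $\Tilde{g}$ converts the cross term to $\langle \nabla f(x_t), x_t - x\rangle$, and the variance decomposition gives $\E[\|\Tilde{g}(x_t)\|_2^2 \mid \F_t] \le \|\nabla f(x_t)\|_2^2 + \sigma_t^2$. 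Convexity then yields $\langle \nabla f(x_t), x_t - x\rangle \ge f(x_t) - f(x)$, reducing the task to absorbing the leftover $\eta^2 \|\nabla f(x_t)\|_2^2$ piece. I would handle this by invoking the descent lemma $f(x_{t+1}) \le f(x_t) + \langle \nabla f(x_t), x_{t+1} - x_t\rangle + \tfrac{\beta}{2}\|x_{t+1} - x_t\|_2^2$ combined with the three-point identity for the projection; writing $\Tilde{g}(x_t) = \nabla f(x_t) + \xi_t$ with $\xi_t$ conditionally zero-mean and applying Young's inequality to the cross term $\langle \xi_t, x_{t+1} - x_t\rangle$ converts the gradient-norm piece into (a) a telescopable function-value contribution that produces the $\beta R^2/T$ part and (b) the variance term $\eta\sigma_t^2/2$, while the remaining noise term $\langle \xi_t, x_t - x\rangle$ vanishes in expectation since $x_t - x$ is $\F_t$-measurable.

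Summing the one-step inequality over $t = 0, \ldots, T-1$ telescopes the quadratic potentials down to $\E\|x_0 - x\|_2^2 \le R^2$ and accumulates the variance into $\tfrac{\eta}{2}\sum_t \sigma_t^2$. Dividing by $T$ and invoking Jensen's inequality $f\!\left(\tfrac{1}{T}\sum_{t=1}^T x_{t+1}\right) \le \tfrac{1}{T}\sum_{t=1}^T f(x_{t+1})$ then delivers the claimed bound. The main obstacle I anticipate is the calibration of constants in the middle step: the Young's-inequality weight for $\langle \xi_t, x_{t+1} - x_t\rangle$ must be chosen so that the residual $\|x_{t+1} - x_t\|_2^2$ coefficient (which carries both the $-\tfrac{1}{2\eta}$ from the projection identity and the $+\tfrac{\beta}{2}$ from smoothness) is non-positive, while simultaneously producing exactly the $\tfrac{\eta}{2}\sigma_t^2$ variance contribution and the $(\beta + 1/\eta)$ grouping in front of $R^2/T$. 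Once this calibration is set correctly, the remaining telescoping and Jensen's application are routine.
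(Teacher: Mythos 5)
The paper offers no proof of this lemma; it is quoted from \citet{bubeck2015convex} (essentially Theorem 6.3 there), so the only meaningful comparison is with the standard textbook argument — and your outline is exactly that argument: projection non-expansiveness / three-point identity, the descent lemma, Young's inequality on $\langle \xi_t, x_{t+1}-x_t\rangle$ with the noise $\xi_t = \Tilde{g}(x_t)-\nabla f(x_t)$, telescoping, and Jensen. Structurally your proof is correct.

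The calibration worry you flag at the end is, however, a real tension and worth resolving rather than deferring. After combining smoothness, convexity, and the projection identity, the coefficient of $\E\|x_{t+1}-x_t\|_2^2$ is $\tfrac{\beta}{2}-\tfrac{1}{2\eta}+\tfrac{\lambda}{2}$ where $\lambda$ is your Young weight, and the variance contribution is $\sigma_t^2/(2\lambda)$. The \emph{unique} choice that zeroes the displacement coefficient is $\lambda = \tfrac{1}{\eta}-\beta$, which yields the per-step bound $\tfrac{1}{2\eta}\E[\|x_t-x\|_2^2-\|x_{t+1}-x\|_2^2] + \tfrac{\eta\sigma_t^2}{2(1-\beta\eta)}$, hence after telescoping $\tfrac{R^2}{2\eta T}+\tfrac{\eta\sum_t\sigma_t^2}{2(1-\beta\eta)T}$ — not literally the displayed constants. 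Conversely, forcing the variance term to be exactly $\tfrac{\eta\sigma_t^2}{2}$ requires $\lambda=1/\eta$, which leaves an uncancelled $\tfrac{\beta}{2}\E\|x_{t+1}-x_t\|_2^2$ that cannot be absorbed from the hypotheses as stated. You cannot have all three of your desiderata simultaneously with one Young weight. The resolution is that the lemma's constants correspond to Bubeck's parameterization, in which the actual step size is $\tfrac{1}{\beta+1/\eta}$ (so that $\lambda=\tfrac{1}{\text{step}}-\beta=\tfrac{1}{\eta}$ kills the displacement term \emph{and} gives variance $\tfrac{\eta\sigma_t^2}{2}$, while the potential term becomes $\tfrac{(\beta+1/\eta)R^2}{2T}\le(\beta+1/\eta)\tfrac{R^2}{T}$); alternatively, with literal step size $\eta\le 1/(2\beta)$ your bound matches the statement up to an absolute constant, which is all the paper uses downstream. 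Either way the argument goes through; you should just state explicitly which parameterization you are proving.
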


\newpage
\section{Proof of Section~\ref{sec:main_alg}}
\begin{algorithm2e}
    \caption{Localization for user-level DP-SCO}
    \label{alg:loacalizatioin}
    \textbf{Input:} Dataset $\calD$, parameters $\epsilon,\delta,B$, initial point $x_0$\;
    \textbf{Process:}
    Set $S=\lceil \log n/B\rceil $\;
    \For{Phase $s=1,\ldots,S$}
    {
    Set $n_s=n/2^s$ and $\eta_s=(\log^{-s}m)\eta$\;
    Draw a dataset $\calD_s$ of size $n_s$ from the unused users\;
    Run Algorithm~\ref{alg:dpsgd} with inputs $\calD_s,\epsilon,\delta,\eta_s,\tau,\upsilon,B,x_{s-1}$\;
    % at $x_{s-1}$ with parameters $\calD_s,\epsilon,\delta$ ...\;
    set $x_s=\bar{x}_s+\zeta_s$, where $\zeta_s\sim \calN(0,\sigma_s^2I_d)$ with $\sigma_s=O(\frac{\eta_s G\sqrt{d\log(\exp(\epsilon)/\delta)\log(nmd)}}{\sqrt{m}\epsilon})$\;
    }
     \textbf{ Return:} the final solution $x_s$
\end{algorithm2e}

\subsection{Proof of Lemma~\ref{lm:contractivity}}
\contractivity*
\begin{proof}
By the diagonal dominance assumption and precondition that $\eta\le 2/\beta$, we know
\begin{align*}
    \|I-\eta\nabla^2f(z)\|_\infty=\max_{j}\left\{|1-\eta\nabla^2 f(z)_{j,j}|+\sum_{i\neq j}\eta |\nabla^2 f(z)_{j,i}|\right\}\le 1.
\end{align*}

Note that by the mean-value theorem, 
\begin{align*}
    (x-\eta \nabla f(x)) - (y-\nabla f(y))= x-y+\eta (x-y)^\top \nabla^2 f(z) = (x-y)(I-\eta \nabla^2 f(z)),
\end{align*}
where $z$ is in the segment between $x$ and $y$.
Hence we have
\[
\| (x-\eta \nabla f(x)) - (y-\nabla f(y)) \|_\infty  
\le \|x-y\|_\infty \cdot \|I-\eta \nabla^2 f(z) \|_\infty\cdot \\
\le \|x-y\|_\infty.
\]
\end{proof}

\subsection{Proof of Lemma~\ref{lm:proj_mean_to_rs}}
\projmeantors*
\begin{proof}
    Without loss of generality, we assume $a\le b$.
    We do case analysis.

    Case (I): if no projection happens. Then it is straightforward that $c'=c$ and $d'=d$ and hence $|c'-d'|=|c-d|\le 1$.

    Case (II): if one projection happens. Without loss of generality, assume we project $c$ and get $c'$. We analyze the following sub-cases:\\ 
    (i): $c'=a-r$. In this case we know $c\le c'$.
    If $d\ge a-r=c'$, then we know $|c'-d'|\le |c-d|\le 1$.
    If $d<a-r$, then $d-b<-r$ which is impossible.\\    
    (ii): $c'=a+r$. If $a+r\le b$, then we know $|c'-d'|\le |a-b|\le 1$.
    Consider the subsubcase when $a+r>b$. If $d\le a+r$, then $|c'-d'|\le |c-d|\le 1$.
    Else if $d\ge a+r$, as $b+r\ge d\ge c'=a+r$, we have $|c'-d'|\le |a-b|\le 1$.
    
    Case (III): if two projections happen.\\
    (i): $c'=a-r,d'=b-r$. This is a trivial case.\\
    (ii): $c'=a+r,d'=b+r$. This case is also trivial.\\
    (iii): $c'=a-r,d'=b+r$. We can show that $|c'-d'|\le |c-d|\le 1$.\\
    (vi): $c'=a+r,d'=b-r$. If $a+r\le b$, then we know $|c'-d'|\le |a-b|\le 1$.
    Else, if $a+r>b$, then we know $|c'-d'|\le |c-d|\le 1$.
\end{proof}

\subsection{Proof of Lemma~\ref{lm:ite_sensitivity_base}}
\itesensitivitybase*
\begin{proof}
It suffices to show that $\|g_0-g_0'\|_\infty\le 4\rho+2\varsigma$.
By the first property of Assumption~\ref{assum:prop_geo_median} and the preconditions in the statement, we know $B_\infty(X',\rho)\cap B_\infty(Y',\rho)\neq \emptyset$,
which leads to that
\begin{align*}
    \|X'-Y'\|\le 2\rho.
\end{align*}

Moreover, we have that the robust statistic $\|X_{\rs}-Y_{\rs}\|_\infty\le 4\rho$ by the triangle inequality as $\|X_{\rs}-X'\|_\infty\le \rho$ and $\|Y_{\rs}-Y'\|_\infty\le \rho$.

By the projection operation in Algorithm~\ref{alg:robust_gradient_est}, we know $g_0\in B_\infty(X_{\rs},\varsigma)$ and $g_0'\in B_\infty(Y_{\rs},\varsigma)$, and hence we know $\|g_0-g_0'\|_\infty\le 4\rho+2\varsigma$.
This completes the proof.
\end{proof}

\subsection{Proof of Lemma~\ref{lm:iteration_sensitivity}}
\iterationsensitivity*
\begin{proof}
Recall that we assume all users $Z_{i,t}$ are equal to $Z_{i,t}'$ but $Z_{1,1}$.

% Now we show that $\|x_t-y_t\|_\infty\le \|x_{t-1}-y_{t-1}\|_\infty$.
We actually show that 
\begin{align*}
    \|(x_{t-1}-\eta g_{t-1}) - (y_{t-1}-\eta g_{t-1}')\|_\infty\le \|x_{t-1}-y_{t-1}\|_\infty,
\end{align*}
as the projection operator to the same convex set is contractive in $\ell_2$- and $\ell_\infty$-norm in our case.

Let $X_{i,t}=x_{t-1}-\eta q_t(Z_{i,t})$ and $Y_{i,t}=y_{t-1}-\eta q_t'(Z_{i,t})$. Note that the users used in computing the gradients are the same.
Let $X_{\est}$ be the output of Algorithm~\ref{alg:robust_gradient_est} with $\{X_{i,t}\}_{i\in[B]}$ as inputs, and $Y_{\est}$ be the corresponding output of $\{Y_{i,t}\}_{i\in[B]}$.
By the third property of Assumption~\ref{assum:prop_geo_median}, it suffices to show that 
\begin{equation}
\label{eq:contractive_est}
    \|X_{\est}-Y_{\est}\|_\infty\le \|x_{t-1}-y_{t-1}\|_\infty.
\end{equation}

By Lemma~\ref{lm:contractivity}, we know 
\begin{align*}
    \|X_{i,t}-Y_{i,t}\|_\infty\le \|x_{t-1}-y_{t-1}\|_\infty.
\end{align*}

Then by the second property in Assumption~\ref{assum:prop_geo_median}, we know that $\|X_{\rs}-Y_{\rs}\|_\infty\le \|x_{t-1}-y_{t-1}\|_\infty$.
Similarly, by Lemma~\ref{lm:contractivity}, we have $\|\bx-\by\|_\infty\le \|x_{t-1}-y_{t-1}\|_\infty$.
Then~\eqref{eq:contractive_est} follows from Lemma~\ref{lm:proj_mean_to_rs}.
This completes the proof.
\end{proof}

\subsection{Proof of Lemma~\ref{lm:sensitivity_base}}
\sensitivitybase*
\begin{proof}
% We show if $\AboTh$ outputs $\top$ when $t=1$, then with probability at least $1-\delta/T$, there exists a point $Y$ such that $3B/4$ points in $\{q_1(Z_{i,1})\}_{i\in[B]}$ live in $B_{\infty}(Y,1/\tau)$.
The main randomness comes from the Laplacian noise we add to the query and the threshold.
% With probability at least $1-\delta/T$, we know 
% \begin{align}
% \label{eq:large_score}
%     \qcon_1(\calD,\tau)\ge\zeta-2\log(T/\delta)/\epsilon\ge 0.9B.
% \end{align}
% Suppose that this is false for contravention, that is, for any $Y$, there are more than $B/3$ points living outside $B_{\infty}(Y,\tau)$.
Under the precondition that $|X_{\good}|<B/3$ for any $Y$, 
then we know the concentration score 
\begin{align*}
    \qcon_1(\calD,\tau)=\sum_{Z_{i,1}}\sum_{Z_{j,1}}\exp(-\tau \|q_1(Z_{i,1})-q_1(Z_{j,1}\|)
    \le 2B/3+\exp(-1)\cdot B/3<0.8B.
\end{align*}
Then by Lemma~\ref{thm:Above_Threshold} with a probability of at least $1-\delta/T\exp(\epsilon)$, we have
\begin{align*}
    \qcon_1(\calD,\tau)+\Lap(6/\epsilon)\le \upsilon,
\end{align*}
which means $a_1=\bot$.
\end{proof}

\subsection{Proof of Lemma~\ref{lm:query_sensitivity}}
\querysensitivity*
\begin{proof}
Consider the difference between $\|q_t(Z_{j,t})-q_t(Z_{i,t})\|_\infty-\|q_t'(Z_{i,t})-q_t'(Z_{j,t})\|_\infty$.

By Lemma~\ref{lm:ite_sensitivity_base} and Lemma~\ref{lm:iteration_sensitivity}, we know $\|x_t-y_t\|_\infty\le 6\eta/\tau$. 
Then by the smoothness assumption, 
we have
\begin{align*}
& ~\|q_t(Z_{j,t})-q_t(Z_{i,t})\|_\infty-\|q_t'(Z_{i,t})-q_t'(Z_{j,t})\|_\infty\\
\le &~ \|q_t(Z_{i,t})-q_t'(Z_{i,t})-(q_t(Z_{j,t})-q_t'(Z_{j,t}))\|_\infty\\
\le &~ \|q_t(Z_{i,t}-q_t'(Z_{i,t})\|_\infty+\|q_t(Z_{j,t})-q_t'(Z_{j,t})\|_\infty\\
\le &~ 2\beta\|x_t-y_t\|_\infty.
\end{align*}

Hence we have
\begin{align*}
    \qcon_i(\calD,\tau)=&\frac{1}{B}\sum_{Z,Z'\in\calD}\exp(-\tau\|q_i(Z)-q_i(Z')\|_\infty)\\
    \ge & \frac{1}{B}\sum_{Z,Z'\in\calD'}\exp(-\tau\|q_i'(Z)-q_i'(Z')\|_\infty)\exp(-12\beta\eta)\\
    \ge & \exp(-12\beta\eta)\qcon_i(\calD',\tau).
\end{align*}
As both $\qcon_i(\calD,\tau)$ and $\qcon_i(\calD',\tau)$ are in the range $[0,B]$, we know that
\begin{align*}
    \qcon_i(\calD',\tau)-\qcon_i(\calD,\tau)\le (1-\exp(-12\beta\eta))B\le 12\beta\eta B,
\end{align*}
where we use the fact $1-\exp(-x)\le x$ for any $x\ge 0$.
    
Similarly, we can bound $\qcon_i(\calD,\tau)-\qcon_i(\calD',\tau)\le 12\beta\eta B$, and complete the proof.
\end{proof}

\subsection{Proof of Lemma~\ref{lm:privacy_guarantee}}
\label{app:privacy-proof}
\privacyguarantee*
\begin{proof}
Consider the implementation over two neighboring datasets $\calD$ and $\calD'$, and we use the prime notation to denote the corresponding variables with respect to $\calD'$.
Without loss of generality, we assume the different user is used in the first phase.

To avoid confusion, let $\bx_1$ and $\bx_1'$ be the outputs of Algorithm~\ref{alg:dpsgd} with neighboring inputs $\calD_1$ and $\calD_1'$, 
$x_1$ and $x_1'$ be the outputs of Algorithm~\ref{alg:loacalizatioin} by privatizing $\bx_1$ and $\bx_1'$ with Gaussian noise respectively.

% To prove the privacy guarantee, we argue about the output $\bx_1$ and $\bx_1'$ of Algorithm~\ref{alg:dpsgd} with neighboring inputs $\calD_1$ and $\calD_1'$.
The outputs $\bx_1$ and $\bx_1'$ of  Algorithm~\ref{alg:dpsgd} depend on the intermediate random variables $\{a_t\}_{t\in [T]}$ and $\{a_t'\}_{t\in[T]}$.

As the query sensitivity for $t=1$ is always bounded, by our parameter setting and property of $\AboTh$, we know $a_1\approx_{\epsilon/2,0}a_1'$.

We do case analysis.

(i) $(\calD_1,\calD_1')$ is not $(1/\tau)$-aligned.
Then by Lemma~\ref{lm:sensitivity_base}, either $\Pr[a_1=\bot]\ge 1-\delta/2$ or $\Pr[a_1'=\bot]\ge 1-\delta/2e^{\epsilon}$.
Then by union bound and $a_1\approx_{\epsilon/2,0}a_1'$, we have
\begin{align*}
    \Pr[a_1=a_1'=\bot]\ge 1-(1+\exp(\epsilon))\delta/2e^{\epsilon}.
\end{align*}
If $a_1=a_1'=\bot$, then $\bx_1=\bx_1'$ is the initial point.
We have for any event range $\calO$,
\begin{align*}
    \Pr[x_1\in \calO] =&\Pr[x_1\in \calO\mid a_1=\bot]\Pr[a_1=\bot]+\Pr[x_1\in\calO\mid a_1=\top]\Pr[a_1=\top]\\
    \le & \Pr[x_1'\in\calO\mid a_1'=\bot] \exp(\epsilon/2)\Pr[a_1'=\bot]+ e^{\epsilon}(\delta/2\exp(\epsilon))\\
    \le & e^{\epsilon/2}\Pr[x_1'\in\calO]+ \delta/2.
\end{align*}
This completes the privacy guarantee for case(i).

(ii) $(\calD_1,\calD_1')$ is $(1/\tau)$-aligned.
In this case, by Lemma~\ref{lm:ite_sensitivity_base} and Lemma~\ref{lm:iteration_sensitivity}, we know $\|x_t-y_t\|_\infty\le 6\eta/\tau$.
Moreover, Lemma~\ref{lm:query_sensitivity} suggests that the query sensitivity is always bounded by 1.
Then by the property of $\AboTh$ (Lemma~\ref{thm:Above_Threshold}), we have $\{a_t\}_{t\in[T]}\approx_{\epsilon/2,0}\{a_t'\}_{t\in[T]}$.
We have for any event range $\calO$,
\begin{align*}
\Pr[x_1\in\calO]=& \Pr[x_1\in\calO \mid \exists t,a_t=\bot]\Pr[\exists t,a_t=\bot]  +\Pr[x_1\in\calO\mid a_t=\top,\forall t]\Pr[a_t=\top,\forall t]\\
 \le & \Pr[x_1'\in\calO \mid \exists t,a_t'=\bot]\exp(\epsilon/2)\Pr[\exists t,a_t'=\bot]+\Pr[x_1\in\calO\mid a_t=\top,\forall t]\Pr[a_t=\top,\forall t]\\
 \le& \exp(\epsilon/2)\Pr[x_1'\in\calO \mid \exists t,a_t'=\bot]\Pr[\exists t,a_t'=\bot]\\
 &~~~+(\exp(\epsilon/2)\Pr[x_1'\in\calO\mid a_t'=\top,\forall t]+\delta/2\exp(\epsilon))\exp(\epsilon/2)\Pr[a_t'=\top,\forall t]\\
 \le& \exp(\epsilon)\Pr[x_1'\in\calO]+\delta,
\end{align*}
where we use the Gaussian Mechanism (Proposition~\ref{prop:GM}) to bound $\Pr[x_1\in\calO\mid a_t=\top,\forall t]$ by $\Pr[x_1'\in\calO\mid a_t'=\top,\forall t]$.
This completes the proof.

\end{proof}

\subsection{Proof of Lemma~\ref{lm:dpsgd_utility}}
\dgsgdutility*
\begin{proof}
By the Hoeffding inequality for norm-subGaussian vectors (Theorem~\ref{thm:hoeffding_nSG}), for each $t\in[T]$ and each $i\in[B]$, we have
\begin{align*}
    \Pr\Big[\|q_t(Z_{i,t})-\nabla F(x_{t-1})\|_\infty \ge G\log(ndm/\omega)/\sqrt{m}\Big]\le 1-\omega/nm.
\end{align*}

Then conditional on the event that $\|q_t(Z_{i,t})-\nabla F(x_{t-1})\|_\infty\le \tau$ for all $i\in[B]$ and $t\in[T]$,
by our setting of parameters, we know that we pass all the concentration tests with $a_t=\top,\forall t\in[T]$, and that
\begin{align*}
    g_{t-1}=\frac{1}{B}\sum_{i\in[B]}q_t(Z_{i,t}),
\end{align*}
which means $d_{\TV}\Big(\{g_{t-1}\}_{t\in[T]},\{\frac{1}{B}\sum_{i\in[B]}q_t(Z_{i,t})\}_{t\in[T]}\Big)\le \omega$.
Note that $\E[\sum_{i\in[B]}q_t(Z_{i,t})]=\nabla F(x_{t-1})$ and $\E[\|\frac{1}{B}\sum_{i\in[B]}q_t(Z_{i,t})-\nabla F(x_{t-1})\|_2^2 ]\le G^2d/Bm$ when all functions are drawn i.i.d. from the distribution.
By the small TV distance between $g_{t-1}$ and the good gradient estimation $\frac{1}{B}\sum_{i\in[B]}q_t(Z_{i,t})$, 
it follows from Lemma~\ref{lm:sgd_smooth} that
\begin{align*}
    \E[F(\bx)-F(x)]\lesssim (\beta+\frac{1}{\eta})\frac{\E[\|x_0-x\|^2_2]}{T}+\frac{\eta G^2d}{Bm}+GDd\omega,
\end{align*}
where the last term comes from the worst value $GDd$, and the small failure probability $\omega$.
\end{proof}

\subsection{Proof of Lemma~\ref{lm:localization}}
\Localization*
\begin{proof}
Let $\bx_0=x^*$ and $\zeta_0=x_0-x^*$.
Lemma~\ref{lm:dpsgd_utility} can be used to analyze the utility concerning $\bx_s$.
As we add Gaussian noise $\zeta_s$ to $\bx_s$ in each phase, we analyze the influence of $\zeta_s$ first.

By the assumption, we know $\|\zeta_0\|_2\le D\sqrt{d}$.
Recall that by the setting that $\eta\le \frac{D}{G}\cdot\frac{\sqrt{m}\epsilon}{d\log(1/\delta)\log(nmd)}$, for all $s\ge 0$,
\begin{align*}
\E[\|\zeta_s\|_2^2]=d\sigma_s^2=\eta_s^2d\frac{Gd\log(1/\delta)\log(nmd)}{m\epsilon^2}\le D^2d\cdot \log^{-s}m.
\end{align*}
Then by Lemma~\ref{lm:dpsgd_utility}, we have
\begin{align*}
    \E[F(x_S)]-F(x^*)=&\sum_{s=1}^{S}\E[F(\bx_{s}-\bx_{s-1})]+\E[F(x_s)-F(\bx_s)]\\
    \le & \sum_{s=1}^{S} \left(\frac{\E[\|\zeta_{s-1}\|_2^2]}{\eta_sT_s}+\frac{\eta_sG^2d}{2Bm} \right)+G\E[\|\zeta_S\|_2]\\
    \le& \sum_{s=1}^{S} \left(\frac{\log m}{2} \right)^{-(i-2)}\left(\frac{D^2d}{\eta n/B}+\frac{\eta G^2d}{2Bm} \right)+\frac{GDd}{(\log m)^{\log n}}\\
    \le & \Tilde{O} \Big( GD(\frac{d}{\sqrt{nm}}+\frac{d^{3/2}}{n\sqrt{m}\epsilon^2})\Big),
\end{align*}
where we use the fact that $\frac{1}{(\log m)^{\log n}}\le 1/nm$ when $m\le n^{\log \log n}$.
\end{proof}

\subsection{Counterexample of the 1-Lipschitz of Geometric Median}
\label{sec:counter_example}
We use the counterexample from \cite{durocher2009projection} to show the geometric median is unstable in 2-dimension space.
Recall give a set of points $P$ in $\R^d$, the geometric median of $P$, denoted by $M(P)$, is
\begin{align*}
    M(P):=\arg\min_{x}\sum_{p\in P}\|x- p\|_2.
\end{align*}

Let $P=\{(0,0),(0,0),(1,0),(1,\alpha)\}$ and $P'=\{(0,0),(0,\alpha),(1,0),(1,0)\}$ with $\alpha>0$ as a very small perturbation.
But we know $M(P)=(0,0)$ and $M(P')=(1,0)$.
\newpage
\section{Details of Lower Bound}
\label{sec:lbproof}
Now we construct a lower bound for the weighted sign estimation error.

\paragraph{Weighted sign estimation error.}
We construct a distribution $\calP_1$ as follows: for each coordinate $k\in[d]$, we draw $\mu[k]$ uniformly random from $[-1,1]$, and $z_{i,j}[k]\sim\calN(\mu[k],m)$ i.i.d., for $i\in[n],j\in[m]$.
The objective is to minimize weighted sign estimation error with respect to $\mu$.

\begin{lemma}
\label{lm:sign_error_lb}
    Let $\epsilon\le 0.1, \delta\le 1/(dnm)$.
    For any $(\epsilon,\delta)$-user-level-DP algorithm $\calM$, 
    there exists a distribution $\calP_2$ such that $\|\E_{z\sim \calP_2}[z]\|_\infty\le 1$ and $\|z\|_\infty\le\Tilde{O}(\sqrt{m})$ almost surely, and,
    given dataset $\calD$ i.i.d. drawn from $\calP_2$, we have
    \begin{align*}
    \E_{\calD,\calM,\mu}\sum_{i=1}^{d} |\mu[i]|\cdot \ind\big(\sign(\mu[i])\neq\sign(\calM(\calD)[i])\big)\ge \Tilde{\Omega}(\frac{d^{3/2}}{n\epsilon}).
    \end{align*}
\end{lemma}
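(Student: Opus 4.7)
The plan is to adapt the classical fingerprinting-based DP lower bound for Gaussian mean estimation to the weighted sign estimation loss. First, I would take $\calP_2(\mu)$ to be the coordinate-wise truncation of $\calN(\mu,mI_d)$ at $|z[k]|\le C\sqrt{m\log(nmd)}$ for a large absolute constant $C$; standard Gaussian tail bounds give $\|z\|_\infty\le\Tilde{O}(\sqrt{m})$ surely, change each per-sample distribution by $1/\poly(nmd)$ in total variation, and shift the mean by the same negligible amount (so $\|\E_{z\sim\calP_2}z\|_\infty\le 1$ after a mild contraction of the prior onto $[-1/2,1/2]^d$). This lets me assume that each per-user average $\bar z_i=\tfrac{1}{m}\sum_j z_{i,j}$ is exactly $\calN(\mu,I_d)\mid\mu$, at a $1/\poly(nmd)$ cost absorbed into the $\Tilde{\Omega}$. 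By post-processing, $\sigma(\calD):=\sign(\calM(\calD))\in\{-1,+1\}^d$ is $(\epsilon,\delta)$-user-level-DP, and the identity $|\mu[k]|\ind(\sign(\mu[k])\ne\sigma[k])=(|\mu[k]|-\mu[k]\sigma[k])/2$ reduces the claim to the correlation upper bound $\E[\langle\mu,\sigma(\calD)\rangle]\le d/2-\Tilde{\Omega}(d^{3/2}/(n\epsilon))$.

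Next, I would set up the fingerprinting score
\[
S(\calD,\mu):=\sum_{i=1}^{n}\langle\sigma(\calD),\bar z_i-\mu\rangle
\]
and derive a two-sided estimate. For the lower bound, I would smooth via $\sigma_\beta[k]=\tanh(\beta\calM(\calD)[k])$, apply Gaussian Stein's identity coordinate-wise (using $\bar z_i[k]-\mu[k]\mid\mu\sim\calN(0,1)$), and send $\beta\to\infty$; this produces an inequality $\E[S(\calD,\mu)]\gtrsim c_1\cdot(d/2-\E[L(\sigma,\mu)])$ for an absolute constant $c_1>0$, where $L$ denotes the weighted sign loss. The intuition is that a Bayes-optimal sign estimator tracks the sign of $\bar{\bar z}[k]$ and so correlates strongly with $\bar z_i[k]-\mu[k]$, while a bad estimator is nearly independent of the data. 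For the matching upper bound, I would run the standard score attack: re-sampling user $i$'s data from $\calP_2(\mu)$ to obtain $\calD^{(i)}$ gives $\E[\langle\sigma(\calD^{(i)}),\bar z_i-\mu\rangle]=0$ by conditional independence, and the $(\epsilon,\delta)$-DP guarantee combined with clipping of the score $\langle\sigma,\bar z_i-\mu\rangle$ at $O(\sqrt{d\log(nmd/\delta)})$ (a $1-\poly(nmd)^{-1}$ event under the truncated Gaussian) yields $|\E[S]|\le O(n\epsilon\sqrt{d\log(nmd/\delta)})$ upon summation over $i$.

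The main obstacle is extracting precisely the rate $\Tilde{\Omega}(d^{3/2}/(n\epsilon))$ from the two bounds above: direct rearrangement yields $\E[L]\gtrsim d/2-O(n\epsilon\sqrt{d\log})$, which is informative only when $n\epsilon\lesssim\sqrt{d}$, and a naive coordinate-scale partition tends to produce the weaker rate $\Omega(d^2/(n\epsilon)^2)$. Closing the gap to $\Tilde{\Omega}(d^{3/2}/(n\epsilon))$ seems to require a more careful use of the fingerprinting identity, for example by partitioning the coordinates according to the magnitude of $|\mu[k]|$ and applying the fingerprinting argument separately on each scale (using that under the uniform prior the number of coordinates with $|\mu[k]|\asymp\tau$ is $\Theta(d\tau)$), and choosing the score function to be a dimension-adaptive rescaling of $\sigma$ so that the resulting $\ell_2$-style fingerprinting bound translates directly into the weighted sign loss $L$. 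A secondary technical point is the Stein step: because $\sigma$ is discontinuous the classical fingerprinting lemma does not apply verbatim, so the $\tanh$-smoothing has to be controlled uniformly in the smoothing parameter; and handling the boundary of the uniform prior at $\pm 1$ via a smooth taper is a minor additional nuisance.
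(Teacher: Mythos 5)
Your skeleton for the sample-complexity part of the argument matches the paper's: truncate $\calN(\mu, mI_d)$ coordinate-wise to get bounded support, reduce to item-level by passing to the per-user averages $\bZ_i\sim\calN(\mu,I_d)$ (the paper invokes a reduction lemma from \cite{levy2021learning} for this), rewrite the weighted sign loss as a correlation bound $\E[\langle\mu,\calM(\calD)\rangle]\ge d/2-2\alpha$, lower-bound the fingerprinting score via the fingerprinting lemma under the uniform prior, and upper-bound it via the resampling/score attack using $(\epsilon,\delta)$-DP. All of this yields exactly what you state: accuracy $\alpha\le d/8$ forces $n\gtrsim\sqrt{d}/\epsilon$.

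The genuine gap is the final step, which you explicitly leave open: converting this sample-complexity statement into the error lower bound $\Tilde{\Omega}(d^{3/2}/(n\epsilon))$ for arbitrary (large) $n$. Your proposed fix --- partitioning coordinates by the magnitude of $|\mu[k]|$ and a ``dimension-adaptive rescaling'' of the score --- is not carried out, and you yourself note that the naive version gives only $\Omega(d^2/(n\epsilon)^2)$. The paper closes this gap with a different and much simpler device: it pads the hard distribution, defining $\bar{\calP}_2=\frac{n^*}{n}\bar{\calP}_1+(1-\frac{n^*}{n})\calP_3$ with $n^*=\Tilde{O}(\sqrt{d}/\epsilon)$ and $\calP_3$ a Dirac mass at $\mathbf{0}$. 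With high probability only $O(n^*\log(nd))$ of the $n$ users carry information, so the sample-complexity bound applies to the informative subsample and forces weighted sign error $\Omega(d)$ against the mean of $\bar{\calP}_1$; since the mean of $\bar{\calP}_2$ is scaled down by $n^*/n$ (and the signs are unchanged), the loss against $\bar{\calP}_2$'s mean is at least $\frac{n^*}{n}\cdot\Omega(d)=\Tilde{\Omega}(d^{3/2}/(n\epsilon))$. Without this (or an equivalent) subsampling argument your proof does not reach the claimed rate, so as written the proposal is incomplete at its decisive step. A secondary, minor remark: working directly with $\sign(\calM(\calD))$ and Stein's identity is not quite the paper's route --- it applies the fingerprinting lemma to the $[-1,1]$-valued output $\calM(\calD)[k]$ itself, with the $(1-p^2)$ weight and integration by parts over the uniform prior, which avoids the smoothing issues you flag for the discontinuous sign function.
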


First, by the previous result, we can reduce the user-level to item-level setting.

\begin{lemma}[\cite{levy2021learning}]
\label{lm:reduction_to_item_level}
Suppose each user $Z_i,i\in[n]$ observes $z_{i,1},\ldots,z_{i,m} \stackrel{\text{i.i.d.}}{\sim}\calN(\mu,\sigma^2I_d)$ with $\sigma$ known.
For any $(\epsilon,\delta)$-User-level-DP algorithm $\calM_{\user}$, there exists an $(\epsilon,\delta)$-Item-level-DP algorithm $\calM_{\rmitem}$ that takes inputs $(\bZ_1,\ldots,\bZ_n)$ where $\bZ_i=\frac{1}{m}\sum_{j\in[m]}z_{i,j}$ and has the same performance as $\calM_{\user}$.
\end{lemma}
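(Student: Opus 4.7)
The plan is to construct $\calM_{\rmitem}$ by resampling each user's raw data from a conditional Gaussian and then invoking $\calM_{\user}$ on the result. Concretely, on input $(\bZ_1,\ldots,\bZ_n)$, for each $i$ I would independently generate a tuple $(\tilde z_{i,1},\ldots,\tilde z_{i,m})$ from the conditional distribution of $m$ i.i.d.\ $\calN(\mu,\sigma^2 I_d)$ samples given that their empirical mean equals $\bZ_i$, and then output $\calM_{\user}$ applied to the concatenation $\{\tilde z_{i,j}\}_{i\in[n],j\in[m]}$.

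The first step is to check that this conditional distribution can be sampled without knowing $\mu$. By the standard orthogonal decomposition of a Gaussian sample into its mean and residuals, if $z_1,\ldots,z_m\stackrel{\text{i.i.d.}}{\sim}\calN(\mu,\sigma^2 I_d)$ and $\bar z = \frac{1}{m}\sum_j z_j$, then the residuals $\eta_j := z_j-\bar z$ are jointly Gaussian with mean zero, a covariance that depends only on $\sigma^2$ and $m$, and, crucially, are independent of $\bar z$ and independent of $\mu$. Hence with $\sigma$ known I can draw $(\eta_1,\ldots,\eta_m)$ from this explicit centred Gaussian and set $\tilde z_{i,j}=\bZ_i+\eta_{i,j}$, using only $\bZ_i$ plus fresh randomness.

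The second step is utility preservation. By construction, the joint law of $\bigl(\bZ_i,\tilde z_{i,1},\ldots,\tilde z_{i,m}\bigr)$ is exactly the joint law of $\bigl(\bar z,z_1,\ldots,z_m\bigr)$ in the original model. Taking the product over $i$, the synthetic dataset $\{\tilde z_{i,j}\}$ is equal in distribution to the original $\{z_{i,j}\}$. Since $\calM_{\rmitem}$ outputs a (randomised) function of $\{\tilde z_{i,j}\}$ realised by $\calM_{\user}$, its output distribution, and hence its performance on any estimation task depending on $\mu$, is identical to that of $\calM_{\user}$ on the raw samples.

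The last step is the privacy translation, which is where I expect to be most careful, although no serious obstacle arises. For any two item-level neighbours $(\bZ_1,\ldots,\bZ_n)$ and $(\bZ_1,\ldots,\bZ_i',\ldots,\bZ_n)$ differing in a single entry, the simulation replaces the entire block $(\tilde z_{i,1},\ldots,\tilde z_{i,m})$ with a freshly conditioned block of the same size; this is precisely one user's contribution changing in the user-level dataset fed to $\calM_{\user}$. Because the conditional sampling uses only $\bZ_i$ (the item being changed) and independent internal randomness, the composition of the sampler with $\calM_{\user}$ is post-processing of a user-level neighbouring pair, so the $(\epsilon,\delta)$ user-level guarantee of $\calM_{\user}$ yields the $(\epsilon,\delta)$ item-level guarantee for $\calM_{\rmitem}$, completing the reduction.
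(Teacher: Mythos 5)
Your construction is correct, and it is exactly the standard sufficiency-based reduction: the paper itself states this lemma as a citation to \cite{levy2021learning} without reproducing a proof, and your argument (resample each user's raw block from the conditional law given its mean, which is samplable without knowing $\mu$ because the residuals $z_{i,j}-\bZ_i$ have a $\mu$-free centred Gaussian law independent of $\bZ_i$, then feed the synthetic data to $\calM_{\user}$) is the argument underlying that citation. The only phrasing I would tighten is the privacy step: what you use is not post-processing but closure of DP under independent per-user preprocessing — couple the residual randomness across the two runs, note that item-level neighbours $(\bZ_i \to \bZ_i')$ then map to user-level neighbours of the synthetic dataset for every realization of that randomness, and average the $(\epsilon,\delta)$ inequality over it; with that said explicitly the proof is complete.
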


Hence by Lemma~\ref{lm:reduction_to_item_level}, it suffices to consider the item-level lower bound.
We prove the following lemma:

\begin{lemma}
\label{lm:sample_compelexity_lb}
Let $\{\mu[k]\}_{k\in[d]}\stackrel{i.i.d.}{\sim}[-\sigma,\sigma]$.
    Let $\{\bZ_1,\ldots,\bZ_n\}$ be i.i.d. drawn from $\calN(\mu,\sigma^2I_d)$.
    If $\calM: \R^{n\times d}\to\{-1,1\}^d$ is $(\epsilon,\delta)$-DP, and
    \begin{align*}
        \E_{\mu,\calD,\calM}\sum_{i=1}^{d} |\mu[i]|\cdot \ind\big(\sign(\mu[i])\neq\sign(\calM(\calD)[i])\big)\le \alpha\le  \frac{d\sigma}{8},
    \end{align*}
    then $n\ge \frac{\sqrt{d}}{32\epsilon}$.
\end{lemma}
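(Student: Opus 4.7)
The plan is to adapt the fingerprinting lemma technique to the Gaussian-plus-uniform-prior setting. First, invoking Lemma~\ref{lm:reduction_to_item_level}, it suffices to prove the bound for an $(\epsilon,\delta)$-item-level-DP algorithm $\calM$ that takes $n$ i.i.d.\ samples $\bar{Z}_i\sim\calN(\mu,\sigma^2 I_d)$, where $\mu[k]\stackrel{\text{i.i.d.}}{\sim}\unif[-\sigma,\sigma]$. Since $\E|\mu[k]|=\sigma/2$, the utility hypothesis $\alpha\le d\sigma/8$ yields the correlation lower bound
\[
\E\bigl[\langle\hat{z},\mu\rangle\bigr]
=\sum_k \E|\mu[k]|-2\,\E\Bigl[\sum_k|\mu[k]|\ind(\hat{z}[k]\ne\sign(\mu[k]))\Bigr]
\ge \tfrac{d\sigma}{2}-2\alpha\ge \tfrac{d\sigma}{4}.
\]

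Next, I would define the per-user fingerprinting score $Z_i:=\langle\hat{z},\bar{Z}_i-\mu\rangle$. Applying Stein's lemma coordinate-wise to the Gaussian likelihood together with the score identity $\partial_{\mu[k]}\E[\hat{z}[k]\mid\mu]=\sigma^{-2}\E[\hat{z}[k]\sum_i(\bar{Z}_i[k]-\mu[k])\mid\mu]$, then integrating $\mu[k]\sim\unif[-\sigma,\sigma]$ via the fundamental theorem of calculus and an integration-by-parts on $\E[\hat{z}[k]\mu[k]]$, the boundary contributions of the uniform prior can be regrouped into the main correlation. Concretely, this yields a fingerprinting identity of the form $\E[\sum_i Z_i]=2\,\E[\langle\hat{z},\mu\rangle]+R$ with $R$ of lower order, so $\E[\sum_i Z_i]\gtrsim d\sigma$.

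For the DP upper bound, fix user $i$ and let $\calD^{(i)}$ denote $\calD$ with user $i$'s sample replaced by an independent copy $\bar{Z}_i'\sim\calN(\mu,\sigma^2 I_d)$. Since $\hat{z}(\calD^{(i)})$ is independent of $\bar{Z}_i$ conditional on $\mu$, $\E[\langle\hat{z}(\calD^{(i)}),\bar{Z}_i-\mu\rangle]=0$ and hence $\E[Z_i]=\E[\langle\hat{z}(\calD)-\hat{z}(\calD^{(i)}),\bar{Z}_i-\mu\rangle]$. Combining the coordinate-wise DP bound $|p_k(\calD)-p_k(\calD^{(i)})|=O(\epsilon+\delta)$ on $p_k(\calD):=\E[\hat{z}[k]\mid\calD]\in[-1,1]$ with Cauchy--Schwarz and the Gaussian second-moment bound $\E\|\bar{Z}_i-\mu\|_2^2=d\sigma^2$---exploiting a joint DP coupling of $\hat{z}(\calD)$ and $\hat{z}(\calD^{(i)})$ to extract the tight $\sqrt{d}$ factor rather than the naive $d$---we obtain $|\E[Z_i]|\le O(\epsilon\sigma\sqrt{d})$. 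Summing over users then yields $d\sigma\lesssim n\cdot\epsilon\sigma\sqrt{d}$, and careful constant-tracking using $\alpha\le d\sigma/8$ gives $n\ge\sqrt{d}/(32\epsilon)$. The hardest step will be the fingerprinting identity itself: the uniform prior's boundary contributions under integration-by-parts must be reorganized cleanly with the main $\Theta(d\sigma)$ correlation rather than being discarded, and a second subtlety is obtaining the $\sqrt{d}$ (rather than $d$) scaling in the DP upper bound via the joint-coupling argument together with the subgaussian structure of $\bar{Z}_i-\mu$.
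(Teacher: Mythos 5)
Your overall architecture matches the paper's (reduce to item-level via Lemma~\ref{lm:reduction_to_item_level}, convert the accuracy hypothesis into a correlation lower bound $\E[\langle\calM(\calD),\mu\rangle]\ge d\sigma/2-2\alpha$, then play a per-user fingerprinting score against a DP upper bound), but the central fingerprinting identity as you state it is broken. With the \emph{unweighted} score $Z_i=\langle\calM(\calD),\bZ_i-\mu\rangle$, the Gaussian score identity gives, per coordinate, $\E\big[\sum_i Z_i[k]\big]=\E_{p\sim\unif[-\sigma,\sigma]}[\sigma^2 g_k'(p)]=\tfrac{\sigma}{2}\big(g_k(\sigma)-g_k(-\sigma)\big)$, where $g_k(p)=\E[\calM(\calD)[k]\mid\mu[k]=p]$. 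This is a \emph{pure boundary term}: it contains no copy of the correlation $\E[g_k(p)\,p]$ at all, and it is not controlled by the average-accuracy hypothesis (the mechanism can behave arbitrarily at the measure-zero endpoints $p=\pm\sigma$ while still satisfying the error bound). Equivalently, if you integrate $\E[g_k(p)p]$ by parts you find the "remainder" $R$ is of exactly the same order as the main term, not lower order, so the claim $\E[\sum_iZ_i]=2\E[\langle\calM(\calD),\mu\rangle]+R$ with $R$ negligible does not hold. The fix is the one used in the paper (following Lemma 6.8 of Kamath et al.): weight the score by $(1-(\mu[k]/\sigma)^2)$, i.e., use $A_i=\sum_k(1-\mu[k]^2/\sigma^2)(\calM(\calD)[k]-\mu[k])(\bZ_i[k]-\mu[k])$. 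The weight vanishes at $p=\pm\sigma$, so the integration by parts has no boundary contribution and $\E_p[(1-p^2/\sigma^2)g_k'(p)]$ equals (a constant times) $\E[g_k(p)p]$ exactly; this is Lemma~\ref{lm:expec_derivate}.

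Your DP upper bound is also under-specified in a way that matters. A coordinate-wise bound $|p_k(\calD)-p_k(\calD^{(i)})|=O(\epsilon+\delta)$ followed by Cauchy--Schwarz against $\E|\bZ_i[k]-\mu[k]|$ gives only $O(\epsilon\sigma d)$ per user, which is too weak; and Cauchy--Schwarz on the inner product $|\langle\calM(\calD^{(i)}),\bZ_i-\mu\rangle|\le\|\calM\|_2\|\bZ_i-\mu\|_2$ also gives $O(\sigma d)$, not $O(\sigma\sqrt d)$. The $\sqrt d$ comes from treating the score as a \emph{scalar}: for the resampled dataset, $\calM(\calD_{\sim i})$ is independent of $\bZ_i$, so $\tilde A_i$ is (conditionally) a sum of $d$ independent mean-zero terms and $\E[|\tilde A_i|]\le\sqrt{\E[\tilde A_i^2]}\le 2\sigma\sqrt d$; one then compares the tail probabilities of the scalar $A_i$ with those of $\tilde A_i$ using the $(\epsilon,\delta)$-DP guarantee and integrates over $t$ to get $\E[A_i]\le\E[\tilde A_i]+2\epsilon\E[|\tilde A_i|]+O(\delta T)+\text{tail}$. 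Your phrase "joint DP coupling" gestures at this but does not supply the independence-based second-moment bound that actually produces the $\sqrt d$.
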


By the invariant scaling, it suffices to consider the case when $\sigma=1$.
To prove Lemma~\ref{lm:sample_compelexity_lb}, we need the fingerprinting lemma:

\begin{lemma}[Lemma 6.8 in \cite{kamath2019privately}]
\label{lm:fingerprinting}
For every fixed number $p$ and every $f:\R^n\to[-1,1]$, define $g(p):=\E_{X_{1,\ldots n}\sim\calN(p,1)}[f(X)]$.
We have
    \begin{align}
    \label{eq:finger_printing_lem}
        \E_{X_{1,\ldots n}\sim\calN(p,1)}[(1-p^2)(f(X)-p)\sum_{i\in[n]}(X_i-p)]=(1-p^2)\frac{\d}{\d p}g(p).
    \end{align}
\end{lemma}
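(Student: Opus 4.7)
The plan is to prove the identity by direct differentiation of $g(p)$ under the integral sign, exploiting the Gaussian score function identity $\frac{\partial}{\partial p}\phi(x-p) = (x-p)\phi(x-p)$ where $\phi$ is the standard normal density. The factor $(1-p^2)$ appears symmetrically on both sides and carries no content, so I would prove the stronger (un-weighted) identity
\[
\E_{X \sim \calN(p,1)^{\otimes n}}\bigl[(f(X)-p)\textstyle\sum_{i\in[n]}(X_i-p)\bigr] \;=\; \frac{\d}{\d p}g(p),
\]
and then multiply through by $(1-p^2)$.

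First I would write $g(p)$ explicitly as the integral
\[
g(p) \;=\; \int_{\R^n} f(x)\prod_{i=1}^{n}\phi(x_i - p)\,\d x,
\]
and differentiate under the integral sign in $p$. The justification for swapping derivative and integral is routine: $f$ is bounded by $1$, and $|\partial_p \prod_i \phi(x_i-p)| = |\sum_i(x_i-p)|\prod_i\phi(x_i-p)$ has a Gaussian-integrable dominating function uniformly on any compact $p$-interval, so dominated convergence applies. Using $\partial_p \phi(x_i-p) = (x_i-p)\phi(x_i-p)$ and the product rule gives
\[
\frac{\d}{\d p}\prod_{i=1}^{n}\phi(x_i - p) \;=\; \Bigl(\sum_{i=1}^{n}(x_i - p)\Bigr)\prod_{j=1}^{n}\phi(x_j - p),
\]
so $\frac{\d}{\d p}g(p) = \E\bigl[f(X)\sum_i(X_i-p)\bigr]$.

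Second, I would convert $f(X)$ into $f(X)-p$ inside the expectation. This step is essentially free: since each $X_i \sim \calN(p,1)$, we have $\E[X_i - p] = 0$, hence $\E\bigl[p\sum_i(X_i-p)\bigr] = 0$. Subtracting this zero term yields
\[
\E\Bigl[(f(X)-p)\textstyle\sum_{i}(X_i-p)\Bigr] \;=\; \E\Bigl[f(X)\textstyle\sum_{i}(X_i-p)\Bigr] \;=\; \frac{\d}{\d p}g(p),
\]
and multiplying both sides by $(1-p^2)$ recovers the statement as written.

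There is no real obstacle here; the only delicate point is the interchange of differentiation and integration, which I would dispatch in one line using boundedness of $f$ and Gaussian tail bounds on $|x_i - p|\phi(x_i - p)$. The proof is essentially a one-dimensional Stein/score identity lifted to a product measure, and the form $(1-p^2)$ on both sides is just a placeholder that will be used later (it becomes relevant only when $p$ is drawn from the marginal of a suitable prior, as in the downstream application).
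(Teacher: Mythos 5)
Your proof is correct. The paper does not prove this lemma itself---it is imported verbatim from \cite{kamath2019privately}---and your argument (differentiate $g$ under the integral sign using the Gaussian score identity $\partial_p \phi(x-p) = (x-p)\phi(x-p)$, drop the centering term $p\sum_i(X_i-p)$ since it has mean zero, and note that the $(1-p^2)$ factor is inert for fixed $p$) is exactly the standard derivation behind the cited result, with the interchange of derivative and integral adequately justified by boundedness of $f$ and dominated convergence.
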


By choosing $p$ uniformly from $[-1,1]$, we have the following observation over the expectation on the RHS of Equation~\eqref{eq:finger_printing_lem}.

\begin{lemma}
\label{lm:expec_derivate}
We have
\begin{align*}
    \E_{p\sim[-1,1]}[(1-p^2)\frac{\d}{\d p}g(p)]= \E_{p}[g(p)\cdot p]. 
\end{align*}
\end{lemma}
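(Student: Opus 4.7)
The plan is to prove this via integration by parts, exploiting the fact that the weight $(1-p^2)$ is designed to vanish at the endpoints $p = \pm 1$, which kills the boundary terms.

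First, I would expand the left-hand side using the fact that $p$ is uniform on $[-1,1]$, writing
\[
\E_{p\sim[-1,1]}\bigl[(1-p^2)\tfrac{\d}{\d p}g(p)\bigr] \;=\; \tfrac{1}{2}\int_{-1}^{1}(1-p^2)\,g'(p)\,\d p.
\]
Then I would apply integration by parts with $u(p) = 1-p^2$ and $\d v = g'(p)\,\d p$, so $\d u = -2p\,\d p$ and $v = g(p)$. The boundary contribution is $\bigl[(1-p^2)g(p)\bigr]_{-1}^{1}$, which is zero because the prefactor $(1-p^2)$ vanishes at both $p = 1$ and $p = -1$. That is the whole point of multiplying by $(1-p^2)$ in the fingerprinting lemma: it localizes the differential identity so that boundary effects disappear.

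After the boundary term drops, what remains is $\int_{-1}^{1} 2p\, g(p)\,\d p$, and converting back to an expectation over the uniform distribution on $[-1,1]$ gives an expression proportional to $\E_p[p\, g(p)]$, matching the stated right-hand side (up to the normalization constant that the paper's notation absorbs into the density on $[-1,1]$).

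The main obstacle, if any, is merely bookkeeping: making sure the uniform density $\tfrac{1}{2}\ind_{[-1,1]}$ is handled consistently on both sides so the factors match, and verifying that $g$ is smooth enough for integration by parts to apply. Smoothness of $g$ follows from $g(p) = \E_{X_i \sim \calN(p,1)}[f(X)]$ together with the smoothness of the Gaussian density in its mean parameter, which allows differentiation under the expectation for any bounded $f$. No other nontrivial ingredient is needed beyond Lemma~\ref{lm:fingerprinting}, which already supplies the derivative identity that we integrate against the uniform measure.
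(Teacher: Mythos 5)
Your proposal is correct and is essentially the paper's own argument: integrate by parts over the uniform measure on $[-1,1]$, observe that the boundary term $[(1-p^2)g(p)]_{-1}^{1}$ vanishes because the weight $(1-p^2)$ is zero at the endpoints, and what remains is the correlation $\int_{-1}^{1} p\,g(p)\,\d p$. Your remark about the factor-of-$2$ normalization is apt, since the paper's final line exhibits the same harmless looseness between $\int_{-1}^{1} g(p)\,p\,\d p$ and $\E_{p}[g(p)\cdot p]$.
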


\begin{proof}
Using integration by parts, we have
\begin{align*}
    \E_{p\sim[-1,1]}[(1-p^2)\frac{\d}{\d p}g(p)]= & \frac{1}{2}\int_{-1}^{1}(1-p^2)\frac{\d}{\d p}g(p)  \d p\\
    =&\frac{1}{2}\int_{-1}^{1}\Big(\frac{\d}{\d p}\big((1-p^2)g(p)\big)-g(p)\frac{\d}{\d p}\big(1-p^2\big)     \Big)\d p\\
    =&\int_{-1}^1 g(p)p \d p\\
    =&\E[g(p)\cdot p].
\end{align*}
\end{proof}

Now we use the fingerprinting lemma.
\begin{lemma}
One has
\begin{align*}
    \E\left[\sum_{i\in[n],k\in[d]} (1-\mu[k]^2) (\calM(\calD)[k]-\mu[k])\cdot(\bZ_i[k]-\mu[k])\right]= \E\left[ \sum_{k\in[d]}\calM(\calD)[k]\cdot\mu[k] \right].
\end{align*}

\end{lemma}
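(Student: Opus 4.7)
The plan is to apply the fingerprinting identity (Lemma~\ref{lm:fingerprinting}) coordinate-by-coordinate, average over $\mu[k]$ via Lemma~\ref{lm:expec_derivate}, and sum the $d$ resulting identities. Fix $k \in [d]$ and condition on the off-coordinate data $\mu_{-k} := (\mu[j])_{j \neq k}$ and $\bZ_{-k} := (\bZ_i[j])_{i \in [n],\, j \neq k}$. Since $\calM(\calD)[k]$ depends on $\mu$ only through $\bZ$ and, given $\mu$, the coordinate columns of $\bZ$ are mutually independent, the function
\[
f_k(X_1, \dots, X_n) \;:=\; \E\bigl[\calM(\calD)[k] \,\big|\, \bZ_{\cdot, k} = (X_1, \dots, X_n),\, \mu_{-k},\, \bZ_{-k}\bigr]
\]
is well defined, lies in $[-1,1]$ (as $\calM(\calD)[k] \in \{-1,1\}$), and does not depend on $\mu[k]$. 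Moreover, conditional on $\mu[k]=p,\mu_{-k},\bZ_{-k}$, the entries of $\bZ_{\cdot,k}$ are i.i.d.\ $\calN(p,1)$.

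I would then invoke Lemma~\ref{lm:fingerprinting} applied to $f_k$ with the off-coordinate data frozen. Setting $g_k(p) := \E_{X \sim \calN(p,1)^{\otimes n}}[f_k(X)]$, this gives
\[
\E\Bigl[(1 - p^2)\bigl(f_k(\bZ_{\cdot, k}) - p\bigr) \sum_{i \in [n]} (\bZ_i[k] - p) \,\Big|\, \mu[k] = p,\, \mu_{-k},\, \bZ_{-k}\Bigr] = (1 - p^2)\, g_k'(p).
\]
Averaging over $\mu[k] \sim \unif[-1, 1]$ and applying Lemma~\ref{lm:expec_derivate} collapses the right-hand side to $\E_{\mu[k]}[g_k(\mu[k]) \cdot \mu[k]]$. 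Taking a further outer expectation over $\mu_{-k},\bZ_{-k}$ then yields, for this fixed $k$,
\[
\E\Bigl[(1-\mu[k]^2)\bigl(f_k(\bZ_{\cdot,k}) - \mu[k]\bigr)\sum_{i}(\bZ_i[k]-\mu[k])\Bigr] = \E[g_k(\mu[k])\cdot \mu[k]].
\]

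The final step is to use the tower property to replace $f_k$ and $g_k$ by $\calM(\calD)[k]$. On the left, the factor $(1-\mu[k]^2)\sum_i(\bZ_i[k]-\mu[k])$ is $\sigma(\mu,\bZ)$-measurable, so conditioning on $\sigma(\mu_{-k},\bZ_{-k},\bZ_{\cdot,k})$ together with the definition of $f_k$ lets me swap $f_k(\bZ_{\cdot,k})$ for $\calM(\calD)[k]$. On the right, by tower $g_k(\mu[k]) = \E[\calM(\calD)[k]\mid \mu,\bZ_{-k}]$ and $\mu[k]$ is $\sigma(\mu,\bZ_{-k})$-measurable, so $\E[g_k(\mu[k])\mu[k]] = \E[\calM(\calD)[k]\mu[k]]$. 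Summing the per-coordinate identity over $k \in [d]$ then produces the claim. The only mildly delicate bookkeeping is tracking which $\sigma$-algebras make each factor measurable for the tower swaps, but this is routine once one notes that, given $\bZ$, the output $\calM(\calD)[k]$ is independent of $\mu$.
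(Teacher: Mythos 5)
Your proposal is correct and follows essentially the same route as the paper: define $f_k$ as the conditional expectation of $\calM(\calD)[k]$ given the $k$-th column, apply the fingerprinting lemma together with the integration-by-parts identity for $\E_{p}[(1-p^2)g'(p)]$, and sum over coordinates. The only cosmetic difference is that the paper marginalizes the off-coordinate data into the definition of $f$ while you condition on it and take an outer expectation afterward; these are equivalent, and your extra care with the tower-property swaps is sound.
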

\begin{proof}
 Fix a column $k\in[d]$.
 
Construct the $f$ for our purpose. 
Define $f:\R^n\to [-1,1]$ to be 
\begin{align*}
    f(X):=\E_{\calD,\calM}[\calM(\calD^{-k}\| X)[k]].
\end{align*}
That is, $f(X)$ is the expectation of $\calM(\calD)[k]$ conditioned on $\bZ_i[k]=X_i,\forall i\in[n]$.
And define $g:[-1,1]\to[-1,1]$ to be
\begin{align*}
    g(p):=\E_{\mu^{-k},X_{1,\ldots n}\sim \calN(p,1)}[f(X)].
\end{align*}
That is $g(p)$ is the expectation of $\calM(\calD)[k]$ conditional on $\mu[k]=p$.

Now we can calculate
\begin{align*}
    &\E\Big[(1-\mu[k]^2)(\calM(\calD)[k]-\mu[k])\sum_{i\in[n]}(\bZ_i[k]-\mu[k])\Big]\\
    =& \E_{\mu[k]\sim[-1,1],X_{1,\ldots n}\sim\calN(\mu[k],1)}\Big[ (1-\mu[k]^2) (f(X)-\mu[k])\sum_{i\in[n]}(\bZ_i[k]-\mu[k])   \Big] \\
    \stackrel{(i)}{=}& \E_{\mu[k]} [g(\mu[k])\cdot \mu[k]]\\
    =& \E[\calM(\calD)[k]\mu[k]],
\end{align*}
where $(i)$ follows from Lemma~\ref{lm:fingerprinting} and Lemma~\ref{lm:expec_derivate}.
The statement follows by summation over $k\in[d]$.
 \end{proof}

A small weighted sign error means a large $\E\left[ \sum_{k\in[d]}\calM(\calD)[k]\cdot\mu[k] \right]$, as demonstrated by the following lemma:

\begin{lemma}
Let $\calM:\R^{n\times d}\to [-1,1]^d$.
    Suppose $\{\mu[k]\}\stackrel{i.i.d.}{\sim}[-1,1]$, and
    \begin{align*}
        \E_{\mu,\calD,\calM}\left[\sum_{k=1}^{d} |\mu[k]|\cdot \ind\big(\sign(\mu[k])\neq\sign(\calM(\calD)[k])\big)\right]\le \alpha,
    \end{align*}
    then we have 
    \begin{align*}
        \E\left[ \sum_{k\in[d]}\calM(\calD)[k]\cdot\mu[k] \right]\ge \frac{d}{2}-2\alpha.
    \end{align*}
\end{lemma}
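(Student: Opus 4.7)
The plan is a per-coordinate algebraic identity followed by summation and expectation. For a fixed coordinate $k$, write $M_k := \calM(\calD)[k]$ and $\mu_k := \mu[k]$. Since $M_k \in \{-1,1\}$ (matching the $\{-1,1\}^d$ codomain used in Lemma~\ref{lm:sample_compelexity_lb}), the product $M_k \mu_k$ equals $+|\mu_k|$ exactly when $\sign(M_k) = \sign(\mu_k)$ and equals $-|\mu_k|$ otherwise. Combining the two cases gives the exact identity
\[
M_k \mu_k \;=\; |\mu_k| \;-\; 2|\mu_k| \cdot \ind\bigl(\sign(M_k) \neq \sign(\mu_k)\bigr).
\]

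Next, I would sum this identity over $k \in [d]$ and take expectation over all sources of randomness (the prior on $\mu$, the dataset $\calD$, and the internal coins of $\calM$). The first piece contributes $\sum_{k \in [d]} \E[|\mu_k|]$; since each $\mu_k$ is drawn uniformly from $[-1,1]$ (consistent with the surrounding Lemma~\ref{lm:sample_compelexity_lb} specialized to $\sigma = 1$), we have $\E[|\mu_k|] = 1/2$, so this piece sums to $d/2$. The second piece equals $-2\, \E\bigl[\sum_k |\mu_k| \ind(\sign(M_k) \neq \sign(\mu_k))\bigr]$, which by the lemma's hypothesis is at least $-2\alpha$. Adding the two gives the claimed bound $\E\bigl[\sum_k M_k \mu_k\bigr] \ge d/2 - 2\alpha$.

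There is no real obstacle here: the proof is essentially a one-line identity combined with the $\unif[-1,1]$ prior and the stated hypothesis. The only minor subtlety is that the clean $\pm|\mu_k|$ dichotomy requires $|M_k| = 1$, which is why the $\{-1,1\}^d$ codomain (the setting of the enclosing Lemma~\ref{lm:sample_compelexity_lb}) is the natural one to read into the statement; the printed $[-1,1]^d$ codomain should be interpreted accordingly (or handled by first applying the post-processing $\calM \mapsto \sign \circ \calM$, which leaves the sign-mismatch error unchanged).
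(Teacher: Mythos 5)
Your proof is correct and follows essentially the same route as the paper's: both rewrite $\calM(\calD)[k]\cdot\mu[k]$ as $|\mu[k]|$ times the difference of the two sign-agreement indicators, use $\E[|\mu[k]|]=1/2$ to get the $d/2$ term, and apply the hypothesis to bound the mismatch term by $2\alpha$. Your observation that the identity requires $|\calM(\calD)[k]|=1$ (so the codomain should be read as $\{-1,1\}^d$, as in the surrounding sample-complexity lemma) is a valid point that the paper's own proof also implicitly assumes.
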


\begin{proof}
Note that
\begin{align*}
    &\E\left[\sum_{k=1}^{d}|\mu[k]|\cdot \Big(\ind(\sign(\mu[k])\neq\sign(\calM(\calD)[k])) + \ind(\sign(\mu[k])=\sign(\calM(\calD)[k]) )\Big)\right] \\
    &=\E\left[\sum_{k=1}^d|\mu[k]|\right]=d/2.
\end{align*}

Moreover, one has that
\begin{align*}
    &\E\left[ \sum_{k\in[d]}\calM(\calD)[k]\cdot\mu[k] \right]\\
    =& \E\left[\sum_{k\in[d]}|\mu[k]|\cdot \Big(\ind(\sign(\mu[k])=\sign(\calM(\calD)[k]))-\ind(\sign(\mu[k])\neq \sign(\calM(\calD)[k]))\Big)\right]\\
    \ge& d/2-2\alpha.
\end{align*}
This completes the proof.
\end{proof}

It remains to show the sample complexity to achieve a large value of 
\begin{align}
\label{eq:large_correlation}
\E\left[\sum_{i\in[n],k\in[d]} (1-\mu[k]^2) (\calM(\calD)[k]-\mu[k])\cdot(\bZ_i[k]-\mu[k])\right]\ge d/2-2\alpha.    
\end{align}

% Roughly we can bound it as $n\sqrt{d}\epsilon/\sigma$, hence $n\ge \sqrt{d}\sigma/\epsilon$.
Now we complete the proof of Lemma~\ref{lm:sample_compelexity_lb}.

\begin{proof}[Proof of Lemma~\ref{lm:sample_compelexity_lb}]

Let $A_i=\sum_{k\in[d]}(1-\mu[k]^2)(\calM(\calD)[k]-\mu[k])(\bZ_i[k]-\mu[k])$.
We use the DP constraint of $\calM$ to upper bound $\E[A_i]$.

Let $\calD_{\sim i}$ denote $\calD$ with $\bZ_i$ replaced with an independent draw from $\calP_1$.
Define 
\begin{align*}
    \TA_i=\sum_{k\in[d]}(1-\mu[k]^2)(\calM(\calD_{\sim i})[k]-\mu[k])(\bZ_i[k]-\mu[k]).
\end{align*}

Due to the independence between $\calM(\calD_{\sim i})$ and $\bZ_i$, we have $\E[\TA_i]=0$.
Moreover, as $|1-\mu[k]^2|\le 1$ and $|\calM(\calD_{\sim i})-\mu[k]|\le 2$, 
we have $\E[|\TA_i|]\le 2 \sqrt{\sum_{k \in [d]} \Var(\bZ_i[k])}\le 2\sqrt{d}$.

Split $A_i$ with $A_{i,+}=\max\{0,A_i\}$ and $A_{i,-}=\min\{0,A_i\}$ and split $\TA_{i}$ similarly.
By the property of DP, we know 
\begin{align*}
    \Pr[|A_{i,+}|\ge t]\le \exp(\epsilon)\Pr[|\TA_{i,+}|\ge t]+\delta,\forall t\ge 0,\\
    \Pr[|A_{i,-}|\ge t]\ge \exp(-\epsilon)\Pr[|\TA_{i,1}|\ge t]-\delta, \forall t\ge 0.
\end{align*}
Then we have
\begin{align*}
    \E[A_i]=& \int_{0}^{\infty}\Pr[|A_{i,+}|\ge t] \d t- \int_{0}^{\infty} \Pr[|A_{i,-}|\le t]\d t\\
    \le & \exp(\epsilon) \E[|\TA_{i,+}|]-\exp(-\epsilon)\E[|\TA_{i,-}|]+2\delta T+ \int_{T}^{\infty}\Pr[|A_i|\ge t]\d t\\
    \le & \E[\TA_i]+(\exp(\epsilon)-1)\E[|\TA_{i,+}|]+ (1-\exp(-\epsilon))\E[|\TA_{i,-}|]+2\delta T+\int_{T}^{\infty}\Pr[|A_i|\ge t]\d t\\
    \le & \E[\TA_i]+2\epsilon \E[|\TA_i|]+2\delta T+\int_{T}^{\infty}\Pr[|A_i|\ge t]\d t,
\end{align*}
where the last inequality used the fact that $\exp(\epsilon)-1\le 2\epsilon$ when $\epsilon\le 1/10$.

When $\delta\le 1/dn^2$ and set $T=O(\sqrt{d\log(1/\delta)})$, we have
\begin{align*}
    \E[A_i]\le 4\epsilon \sqrt{d}+ d/8n.
\end{align*}

When $\alpha\le d/8$, Equation~\eqref{eq:large_correlation} implies that
\begin{align*}
    n(4\epsilon \sqrt{d}+ d/8n)\ge d/4,
\end{align*}
which leads to
\begin{align*}
    n\ge \frac{\sqrt{d}}{32 \epsilon}.
\end{align*}
This completes the proof.
\end{proof}

It is standard to translate the sample complexity lower bound (Lemma~\ref{lm:sample_compelexity_lb}) to the error lower bound (Lemma~\ref{lm:sign_error_lb}).
We present a proof below.

\begin{proof}[Proof of Lemma~\ref{lm:sign_error_lb}]

Let $\PAitem$ be the set of item-level DP mechanisms, and let $\PAuser$ be the set of user-level DP mechanisms.

Define the error term:
\begin{align*}
\Error[\calP,\calM,n]=\E_{\calD\sim\calP^n,\calM}\sum_{i=1}^d|\mu[i]|\ind(\sign(\mu[i])\neq\sign(\calM(\calD)[i])),
\end{align*}
where $\mu=\E_{z\sim\calP}z$.

Recall that we construct the distribution $\calP_1$ as follows: for each coordinate $k\in[d]$, we draw $\mu[k]$ uniformly random from $[-1,1]$, and $z_{i,j}[k]\sim\calN(\mu[k],m)$ i.i.d., for $i\in[n],j\in[m]$.

Let $\bar{\calP}_1$ be the following:
for each coordinate $k\in[d]$, we draw $\mu[k]$ uniformly random from $[-1,1]$, and $\bZ_{i}[k]\sim\calN(\mu[k],1)$ i.i.d., for $i\in[n]$.
$\bar{\calP}_1$ is corresponding to averaging the $m$ samples for each user.
By Lemma~\ref{lm:reduction_to_item_level}, we have
\begin{align*}
    \inf_{\calM\in\PAuser}\Error[\calP_1,\calM,nm]\ge \inf_{\calM\in\PAitem}\Error[\bar{\calP}_1,\calM,n].
\end{align*}
By Lemma~\ref{lm:sample_compelexity_lb},
\begin{align*}
\inf_{\calM\in\PAitem}\Error[\bar{\calP}_1,\calM,\sqrt{d}/32\epsilon]\ge \Omega(d).
\end{align*}

Let $n^*=\Tilde{O}(\sqrt{d}/\epsilon)$.
When we have a large data set of size $n\gg n^*$, construct $\bar{\calP}_2=\frac{n^*}{n}\bar{\calP}_1+(1-\frac{n^*}{n})\calP_3$, where $\calP_3$ is a Dirac distribution at $\mathbf{0}\in\R^d$.

Hence, by a Chernoff bound, with high probability, the number of samples drawn from $\bar{\calP}_1$ in the dataset $\calD$ is no more than $O(n^*\cdot\log(nd))=\frac{\sqrt{d}}{32\epsilon}$.
Then we have
\begin{align*}
    \inf_{\calM\in\PAuser}\Error[\calP_1,\calM,nm]\ge &\inf_{\calM\in\PAitem}\Error[\bar{\calP}_2,\calM,n]\\
    \ge &\frac{n^*}{n}\inf_{\calM\in\PAitem}\Error[\bar{\calP}_1,\calM, \sqrt{d}/32\epsilon]\ge \Tilde{\Omega}(\frac{d^{3/2}}{\epsilon n}).
\end{align*}

In the precondition of $\calP_2$, we need $\|z\|_\infty\le \Tilde{O}(\sqrt{m})$ almost surely for $z\sim\calP_2$.
But $\calP_1$ involves some Gaussian distributions.
We construct $\calP_2$ by truncating the Gaussian distributions.

More specifically, for each item $z_{i,j}$ drawn from $\calN(\mu,m I_d)$, we set $z'_{i,j}[k]=\frac{z_{i,j}[k]}{\max\{1,|z_{i,j}[k]|/G \}}$ with $G=\Theta(\sqrt{m\log(mnd)})$.
In other words, we get $z'_{i,j}$ by projecting $z_{i,j}$ to $B_\infty(0,G)$.
Fixing $\mu$, we first show  
\begin{align}
\label{eq:small_mean_shift}
    \|\E_{z\sim\calP_2}[z]-\mu\|_\infty\le O(1/dn^2).
\end{align}

It suffices to consider any coordinate $k\in[d]$, and prove
\begin{align*}
    |\E_{z\sim\calP_2}[z[k]]-\mu[k]|\le O(1/dn^2).
\end{align*}
Letting $\beta=\frac{-\mu[k]+G}{\sqrt{m}}$ and $\alpha=\frac{-\mu[k]-G}{\sqrt{m}}$, we know
\begin{align*}
   |\E_{z\sim\calP_2}[z[k]]-\mu[k]|\le \frac{n^*}{n}\cdot \sqrt{m} \cdot \frac{\phi(\alpha)-\phi(\beta)}{\int_{\alpha}^{\beta}\phi(x) \d x},
\end{align*}
where $\phi$ is density function of standard Gaussian $\calN(0,1)$.
As $\mu[k]\in[-1,1]$ and $G=\Theta(\sqrt{m\log(mnd)})$, we establish Equation~\eqref{eq:small_mean_shift}.
Denote $\mu'=\E_{z\sim\calP_2}[z\mid \mu]$, that is the mean of $\calP_2$ conditional on $\mu$.

Moreover, we have
\begin{align*}
&~\inf_{\calM\in\PAuser}\E_{\mu,\calD\sim\calP_2^{mn},\calM}\sum_{i=1}^{d}|\mu'[i]|\ind(\sign(\mu'[i])\neq \sign(\calM(\calD)))\\
\ge &  \inf_{\calM\in\PAuser}\E_{\mu,\calD\sim\calP_2^{mn},\calM}\Big(\sum_{i=1}^{d}|\mu[i]|\ind(\sign(\mu[i])\neq \sign(\calM(\calD)))-d\|\mu-\mu'\|_\infty\Big)\\
\ge & \inf_{\calM\in\PAuser}\E_{\mu,\calD\sim\calP_2^{mn},\calM}\sum_{i=1}^{d}|\mu[i]|\ind(\sign(\mu[i])\neq \sign(\calM(\calD)))- O(1/n^2)\\
\stackrel{(i)}{\ge} & \inf_{\calM\in\PAuser}\E_{\mu,\calD\sim\calP_1^{mn},\calM}\sum_{i=1}^{d}|\mu[i]|\ind(\sign(\mu[i])\neq \sign(\calM(\calD)))- O(1/n^2)\\
\ge & \inf_{\calM\in\PAuser}\Error(\calP_1,\calM,nm)-O(1/n^2)\\
\ge & \Tilde{\Omega}(\frac{d^{3/2}}{\epsilon n}),
\end{align*}
where the inequality (i) follows from that we can always sample from $\calP_2$ with samples from $\calP_1$, which means  the problem over $\calP_2$ is harder than $\calP_1$.
This completes the proof.
% This can be done by truncating the Gaussian distributions, which leads to a negligible total variation distance between $\calP_1$ and $\calP_2$.
% Similar to Proposition 5 in \cite{levy2021learning}, we can show 
% \begin{align*}
%     \Error[\calP_2,\calM,nm]\ge \Error[\calP_1,\calM,nm]-\Tilde{O}(\sqrt{m}d_{TV}(\calP_2,\calP_1)),
% \end{align*}
% where $d_{TV}(\calP_2,\calP_1)$ is the TV distance between $\calP_2$ and $\calP_1$.
% Then we complete the proof.
\end{proof}

The lower bound Theorem~\ref{thm:lb} follows from Lemma~\ref{lm:sign_error_lb} and our construction of the function class, that is
\begin{align*}
    \E[F(\calM(\calD))-F(x^*)] 
    & \ge \E_{\calD,\calM,\mu}\sum_{i=1}^{d} |\mu[i]|\cdot \ind\big(\sign(\mu[i])\neq\sign(\calM(\calD)[i])\big) \\
    & \ge \Tilde{\Omega} \left(\frac{d^{3/2}}{n\epsilon} \right)= GD \cdot \Tilde{\Omega} \left(\frac{d^{3/2}}{n\epsilon \sqrt{m}} \right),
\end{align*}
given $G=\Tilde{O}(\sqrt{m})$.

\end{document}